\def\eqref#1{equation~\ref{#1}}
\def\1{\bm{1}}
\def\ra{{\textnormal{a}}}
\def\rvs{{\mathbf{s}}}
\def\rvx{{\mathbf{x}}}
\def\rvz{{\mathbf{z}}}
\DeclareMathAlphabet{\mathsfit}{\encodingdefault}{\sfdefault}{m}{sl}
\SetMathAlphabet{\mathsfit}{bold}{\encodingdefault}{\sfdefault}{bx}{n}
\newcommand{\cmark}{\textcolor{teal}{\ding{51}}}%
\newcommand{\xmark}{\textcolor{red}{\ding{55}}}%
\theoremstyle{plain}
\newtheorem{theorem}{Theorem}[section]
\newtheorem{lemma}[theorem]{Lemma}
\newtheorem{corollary}[theorem]{Corollary}
\theoremstyle{definition}
\newtheorem{definition}[theorem]{Definition}
\newtheorem{assumption}[theorem]{Assumption}
\theoremstyle{remark}
\newtheorem{remark}[theorem]{Remark}
\newcommand{\norm}[1]{\left\lVert#1\right\rVert}
\newcommand*{\Scale}[2][4]{\scalebox{#1}{$#2$}}%
\newcommand{\sg}{\texttt{StopGrad}}
\newcommand{\bX}{\bar X}
\newcommand{\by}{{\bar y}}
\newcommand{\bz}{{\bar \rvz}}
\newcommand{\bzeta}{{\bar \zeta}}
\newcommand{\bS}{{\bar S}}
\newcommand{\bell}{{\bar \ell}}
\newcommand{\hf}{{\hat f}}
\newcommand{\bbf}{{\bar f}}
\definecolor{Red}{rgb}{0.768, 0.054, 0.054}
\definecolor{Blue}{rgb}{0.152, 0.294, 0.925}
\definecolor{Green}{rgb}{0,0.4,0.7}
\definecolor{hotpink}{rgb}{1.0, 0.41, 0.71}
\definecolor{brown}{rgb}{0.59, 0.29, 0.0}
\definecolor{purple}{rgb}{0.59, 0.44, 0.84}
\definecolor{darkpastelgreen}{rgb}{0.01, 0.75, 0.24}
\definecolor{celestialblue}{rgb}{0.29, 0.59, 0.82}
\definecolor{ceruleanblue}{rgb}{0.16, 0.32, 0.75}
\definecolor{goldenrod}{rgb}{0.85, 0.65, 0.13}
\definecolor{navyblue}{rgb}{0.0, 0.0, 0.5}
\definecolor{coolgrey}{rgb}{0.55, 0.57, 0.67}
\definecolor{darkseagreen}{rgb}{0.56, 0.74, 0.56}
\definecolor{darkturquoise}{rgb}{0.0, 0.81, 0.82}
\DeclareRobustCommand\onedot{\futurelet\@let@token\@onedot}
\def\@onedot{\ifx\@let@token.\else.\null\fi\xspace}
\def\ie{\emph{i.e}\onedot}
\def\wrt{w.r.t\onedot} 
\icmltitlerunning{Scalable Set Encoding with Universal Mini-Batch Consistency and Unbiased Full Set Gradient Approximation}
\begin{document}
\let\ra\rightarrow
\let\mb\mathbf
\let\mbb\mathbb
\let\mc\mathcal
\let\mf\mathfrak
\let\tld\tilde
\let\bs\boldsymbol

\twocolumn[
\icmltitle{\texorpdfstring{Scalable Set Encoding with Universal Mini-Batch Consistency and\\ Unbiased Full Set Gradient Approximation}{Scalable Set Encoding with Universal Mini-Batch Consistency and Unbiased Full Set Gradient Approximation}}



\icmlsetsymbol{equal}{*}

\begin{icmlauthorlist}
\icmlauthor{Jeffrey Willette}{equal,kaist}
\icmlauthor{Seanie Lee}{equal,kaist}
\icmlauthor{Bruno Andreis}{kaist}
\icmlauthor{Kenji Kawaguchi}{sing}
\icmlauthor{Juho Lee}{kaist}
\icmlauthor{Sung Ju Hwang}{kaist}
\end{icmlauthorlist}

\icmlaffiliation{kaist}{KAIST}
\icmlaffiliation{sing}{National University of Singapore}

\icmlcorrespondingauthor{Jeffrey Willete}{jwillette@kaist.ac.kr}
\icmlcorrespondingauthor{Seanie Lee}{lsnfamily02@kaist.ac.kr}
\icmlcorrespondingauthor{Sung Ju Hwang}{sjhwang82@kaist.ac.kr}

\icmlkeywords{Machine Learning, ICML}

\vskip 0.3in
]



\printAffiliationsAndNotice{\icmlEqualContribution} 

\begin{abstract}
Recent work on mini-batch consistency (MBC) for set functions has brought attention to the need for sequentially processing and aggregating chunks of a partitioned set while guaranteeing the same output for all partitions. However, existing constraints on MBC architectures lead to models with limited expressive power. 
Additionally, prior work has not addressed how to deal with large sets during training when the full set gradient is required. 
To address these issues, we propose a Universally MBC (UMBC) class of set functions which can be used in conjunction with arbitrary non-MBC components while still satisfying MBC, enabling a wider range of function classes to be used in MBC settings. 
Furthermore, we propose an efficient MBC training algorithm which gives an unbiased approximation of the full set gradient and has a \emph{constant memory overhead for any set size for both train- and test-time}.
We conduct extensive experiments including image completion, text classification, unsupervised clustering, and cancer detection on high-resolution images to verify the efficiency and efficacy of our scalable set encoding framework. Our code is available at \href{https://github.com/jeffwillette/umbc}{\color{teal}{\texttt{github.com/jeffwillette/umbc}}} 
\end{abstract}
\vspace{-0.1in}
\section{Introduction}
\vspace{-0.05in}
For a variety of problems for which deep models can be applied, unordered sets naturally arise as an input. For example, a set of words in a document~\citep{jurafsky2008speech} and sets of patches within an image for multiple instance learning~\citep{mil-medical-img}. Functions which encode sets are commonly known as set encoders, and most previously proposed set encoding functions~\cite{deepsets,set-trans} have implicitly assumed that the whole set can fit into memory and be accessed in a single chunk. However, this is not a realistic assumption if it is necessary to process large sets or streaming data. As shown in~\cref{st-chunk-stream},  Set Transformer~\citep{set-trans} cannot properly handle streaming data and suffers performance degradation. Please see~\cref{motivation-2} for more qualitative examples. \citet{mbc} identified this problem, and introduced the mini-batch consistency (MBC) property which dictates that an MBC set encoding model must be able to sequentially process subsets from a partition of a set while guaranteeing the same output over any partitioning scheme, as illustrated in~\cref{fig:concept}. In order to satisfy the MBC property, they devised an attention-based MBC model, the Slot Set Encoder (SSE). 

\begin{figure*}
    \centering
    \includegraphics[width=0.9\textwidth]{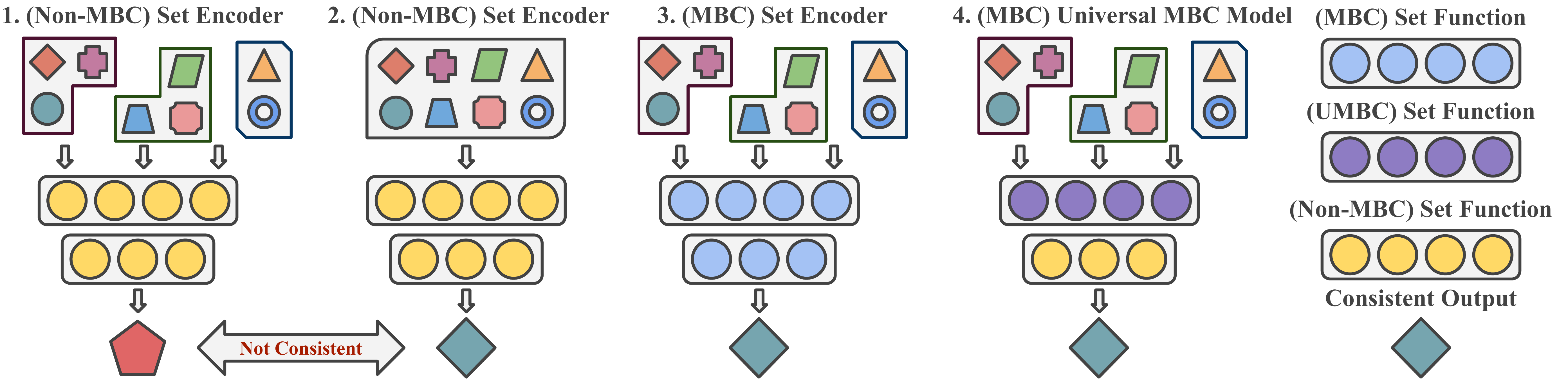}
    \vspace{-0.15in}
    \caption{\small Non-MBC models (1, 2) produce inconsistent outputs when given different set partitions. MBC models~(3) produce consistent outputs for any random partition with a specific architecture. UMBC composes both MBC/non-MBC components, expanding the possible set of MBC functions, and allowing for more expressive models.}
    \label{fig:concept}
    \vspace{-0.16in}
\end{figure*}
\begin{figure*}
\centering
    \begin{subfigure}{0.24\textwidth}
        \centering
        \includegraphics[width=\linewidth]{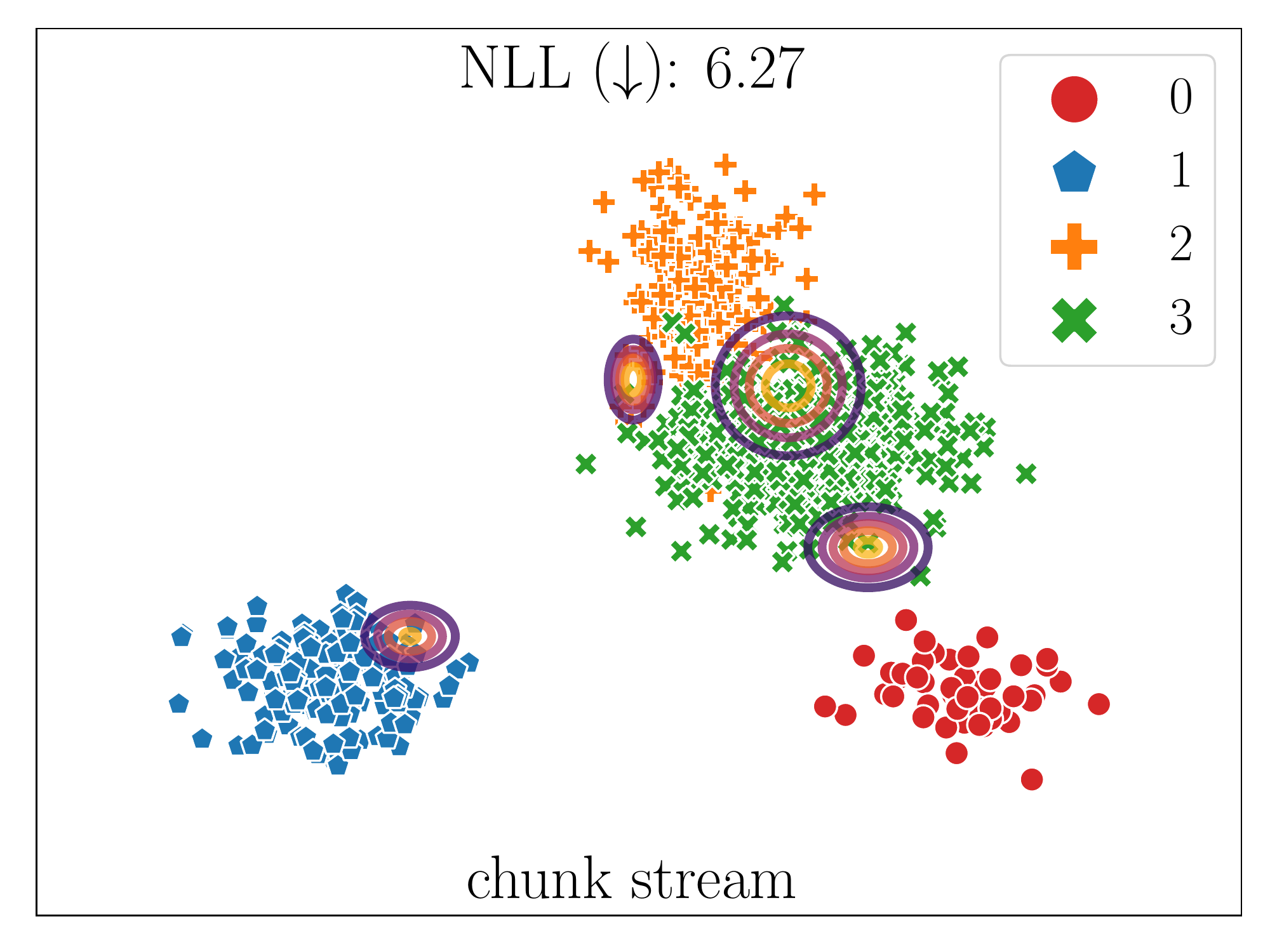}
        \captionsetup{justification=centering,margin=0.5cm}		
        \vspace{-0.25in}
        \caption{\small Set Transformer \xmark}
        \label{st-chunk-stream}	
    \end{subfigure}%
    \begin{subfigure}{0.24\textwidth}
	\centering
	\includegraphics[width=\linewidth]{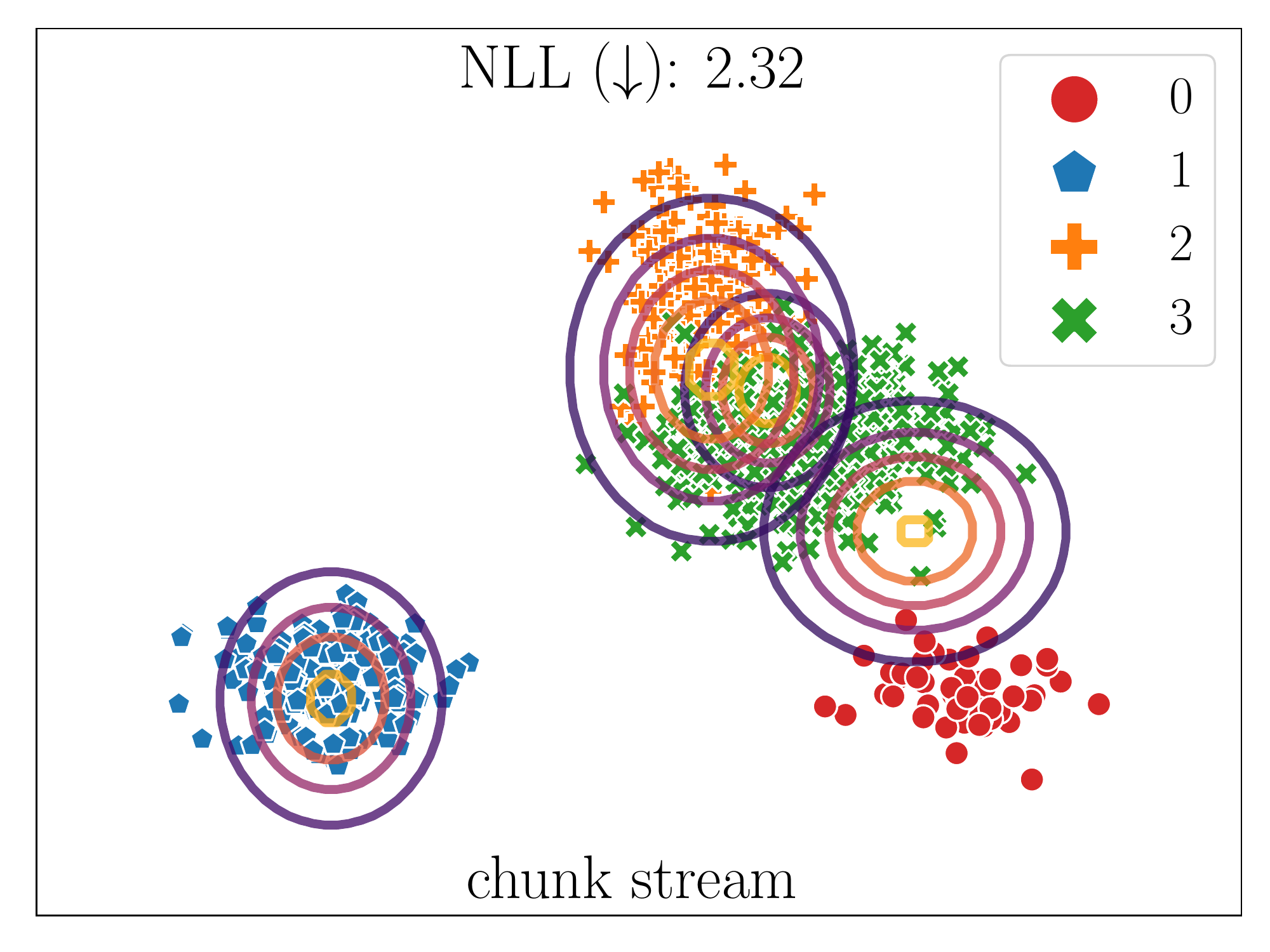}
	\captionsetup{justification=centering,margin=0.5cm}
	\vspace{-0.25in}
	\caption{\small Deepsets \cmark} 
	\label{ds-chunk-stream}
    \end{subfigure}%
    \begin{subfigure}{0.24\textwidth}
	\centering
	\includegraphics[width=\linewidth]{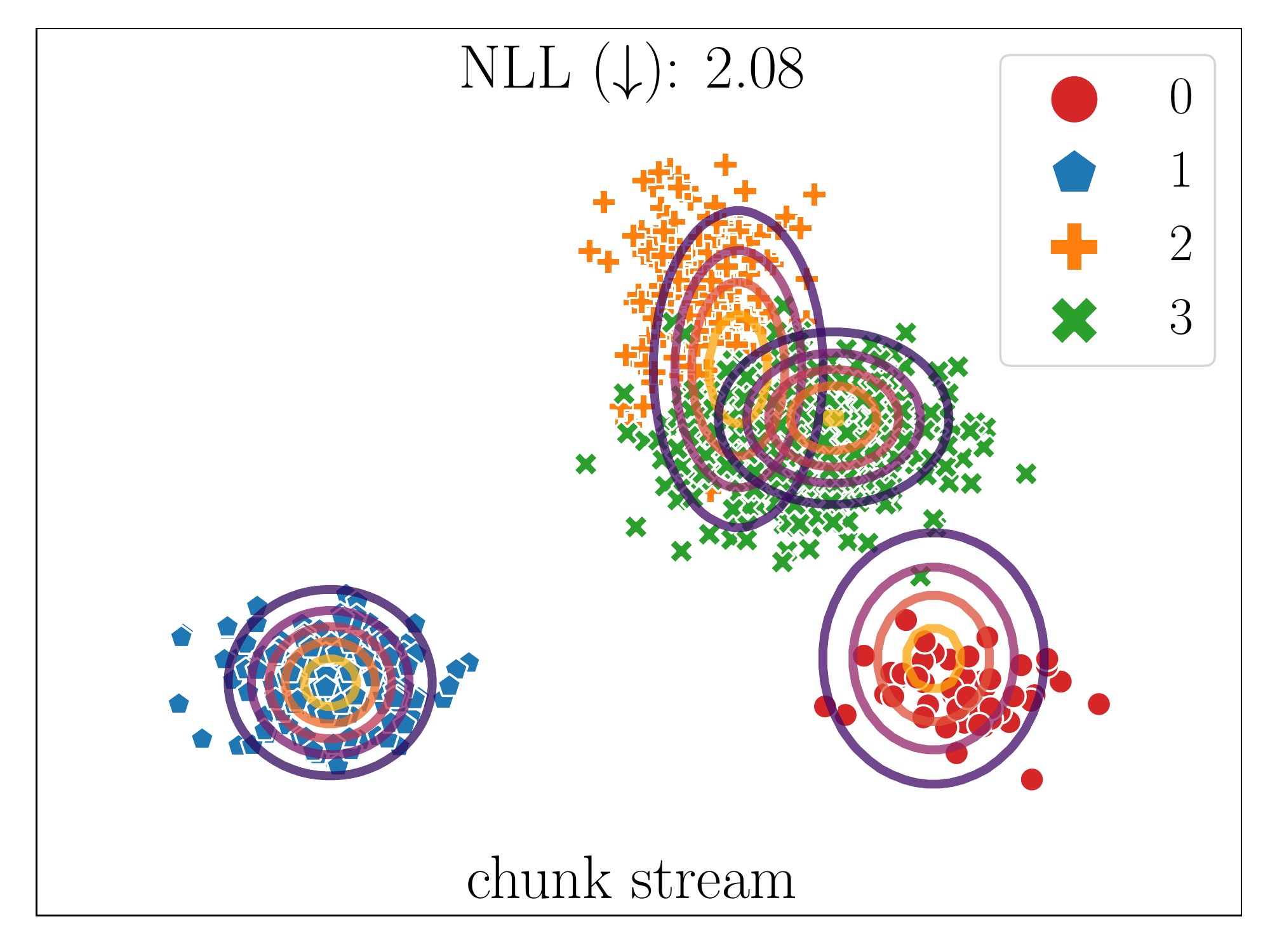}
	\captionsetup{justification=centering,margin=0.5cm}
	\vspace{-0.25in}
	\caption{\small Slot Set Encoder \cmark}
	\label{sse-chunk-stream}
    \end{subfigure}%
    \begin{subfigure}{0.24\textwidth}
	\centering
	\includegraphics[width=\linewidth]{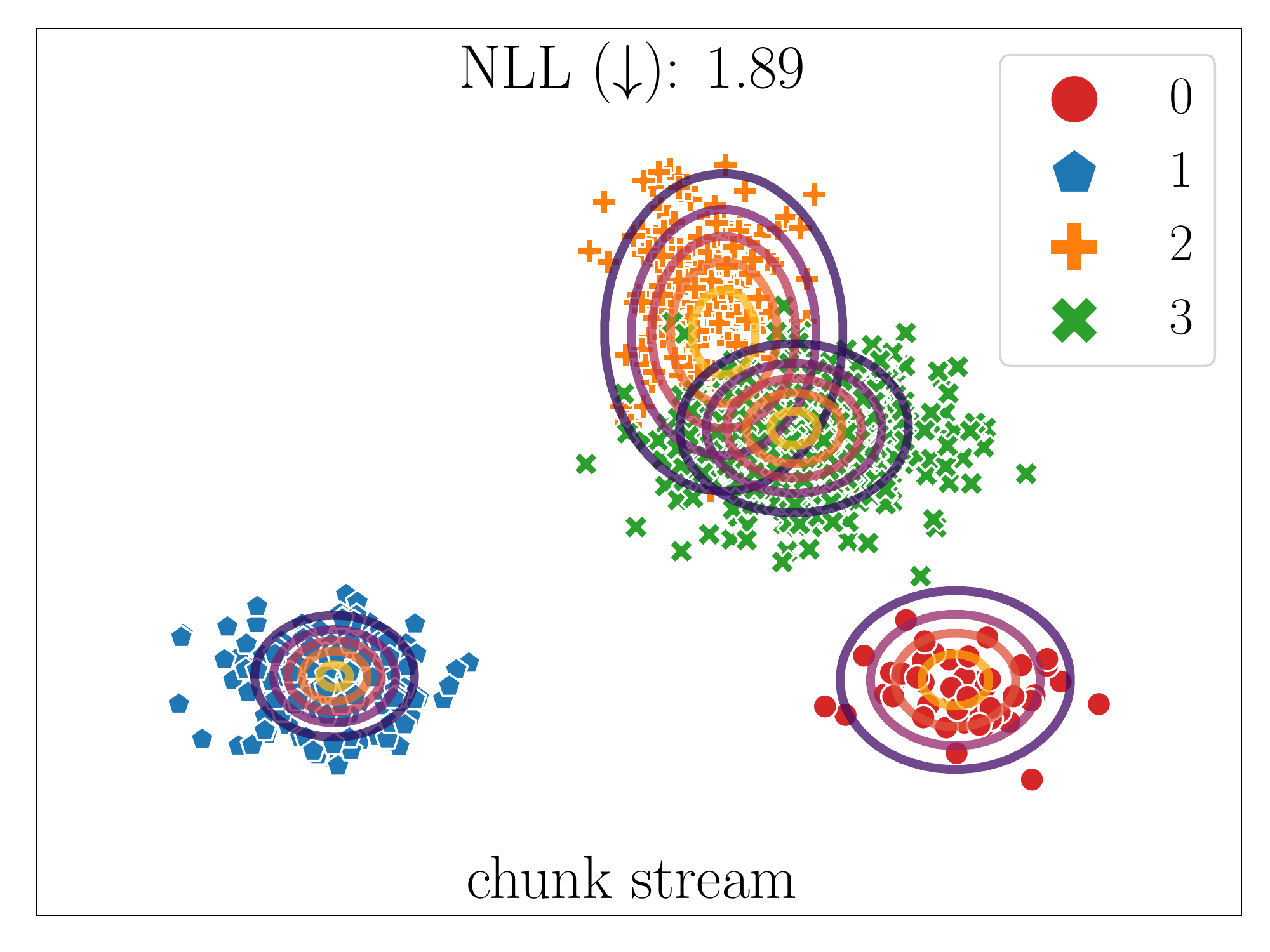}
	\captionsetup{justification=centering,margin=0.5cm}
	\vspace{-0.25in}
	\caption{\small UMBC+ST (Ours) \cmark}
	\label{umbc-chunk-stream}
    \end{subfigure}%
    \vspace{-0.15in}
    \caption{\small \textbf{Streaming inputs}: A non-MBC model~(\subref{st-chunk-stream}) suffers performance degradation in streaming settings. MBC models~(\subref{ds-chunk-stream}, \subref{sse-chunk-stream}, \subref{umbc-chunk-stream}) can handle streaming inputs consistently. Creating an MBC composition of both MBC/non-MBC components~(\subref{umbc-chunk-stream}) creates the strongest MBC model. A (\cmark) indicates MBC models while (\xmark) indicates non-MBC models. }
    \label{motivation}
    \vspace{-0.2in}
\end{figure*}
Although SSE satisfies MBC, there are several limitations. First, it has limited expressive power due to the constraints imposed on its architecture. Instead of the conventional softmax attention~\citep{attention}, the attention of SSE is restricted to using a sigmoid for attention without normalization over the rows of the attention matrix, which may be undesirable for applications requiring convex combinations of inputs. Moreover, the Hierarchical SSE is a composition of pure MBC functions and thus cannot utilize more expressive non-MBC models, such as those utilizing self-attention. Another crucial limitation of the SSE is its limited scalability during training. Training models with large sets requires computing gradients over the full set which can be computationally prohibitive. SSE proposes to randomly sample a small subset for gradient computation, which is a \emph{biased estimator} of the full set gradient as we show in~\cref{app:bias}.

To tackle these limitations of SSE, we propose \emph{Universal MBC (UMBC) set functions} which enable utilizing a broader range of functions while still satisfying the MBC property. Firstly, we relax the restriction to the sigmoid on the activation functions for attention and show that cross-attention with a wider class of activation functions, including the softmax, is MBC. Moreover, we re-interpret UMBC's output as a set, which as we show in~\cref{fig:concept,sec:method}, \emph{universally} allows for the application of non-MBC set encoders when processing UMBC's output sets, resulting in more expressive functions while maintaining the MBC property. For a concrete example, UMBC used in conjunction with the (non-MBC) Set Transformer (ST) produces consistent output for any partition of a set as shown in~\cref{embedding-var}, and outperforms all other MBC models for clustering streaming data as illustrated in~\cref{motivation}. 

Lastly, for training MBC models, we propose a novel and scalable algorithm to approximate full set gradient. Specifically, we obtain the full set representation by partitioning the set into subsets and aggregating the subset representations while only considering a portion of the subsets for gradient computation. We find this leads to a constant memory overhead for computing the gradient with a fixed size subset, and is an \emph{unbiased estimator} of the full set gradient.

To verify the efficacy and efficiency of our proposed UMBC framework and full set gradient approximation algorithm, we perform extensive experiments on a variety of tasks including image completion, text classification, unsupervised clustering, and cancer detection on high-resolution images. Furthermore, we theoretically show that UMBC is a universal approximator of continuous permutation invariant functions under some mild assumptions and the proposed training algorithm minimizes the total loss of the full set version by making progress toward its stationary points. We summarize our contributions as follows:
\begin{itemize}
[itemsep=1.0mm, parsep=0pt, leftmargin=*]
    \item We propose a UMBC framework which allows for a broad class of activation functions, including softmax, for attention and also enables utilizing non-MBC functions in conjuction with UMBC while satisfying MBC, resulting in more expressive and less restrictive architectures.
    
    \item We propose an efficient training algorithm with a constant memory overhead for any set size by deriving an unbiased estimator of the full set gradient which empirically performs comparably to using the full set gradient.

    \item We theoretically show that UMBC is a universal approximator to continuous permutation invariant functions under mild assumptions and our algorithm minimizes the full set total loss by making progress toward its stationary points.
\end{itemize}

\begin{figure}
\vspace{-0.23in}
    \begin{center}
    \includegraphics[width=0.65\linewidth]{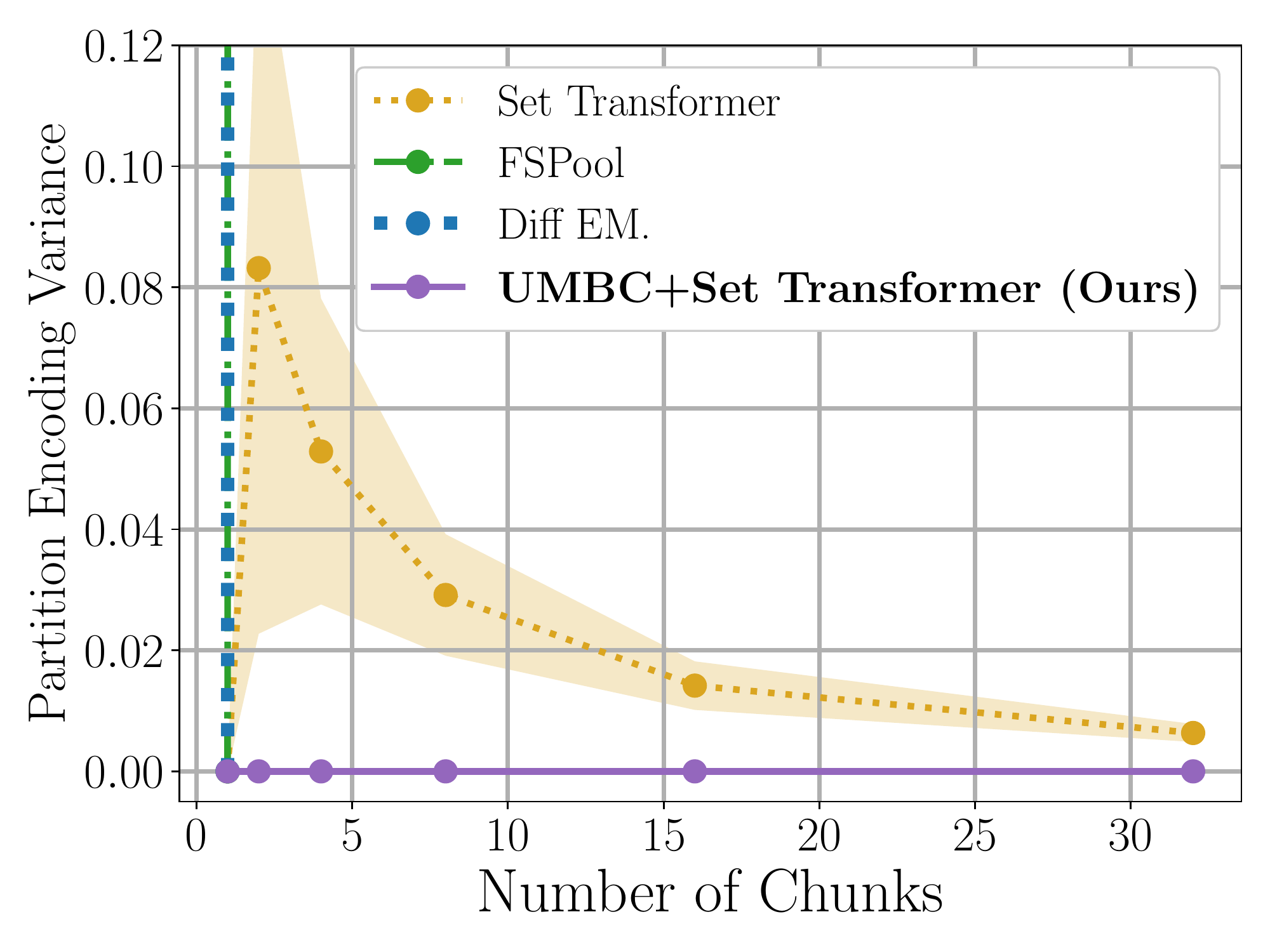}
    \end{center}
    \vspace{-0.25in}
    \caption{\small Variance between encodings of 100 different partitions of the same set. 
    UMBC+ST satisfies MBC and thus has no variance.}
    \label{embedding-var}
\vspace{-0.23in}
\end{figure}
\section{Related Work}
\textbf{Set Encoding.}
Deep learning for set structured data has been an active research topic since the introduction of DeepSets \citep{deepsets}, which solidified the requirements of deep set functions, namely permutation equivariant feature extraction and permutation invariant set pooling. \citet{deepsets} have shown that under certain conditions, functions which satisfy the aforementioned requirements act as universal approximators for functions of sets. Subsequently, the Set Transformer~\citep{set-trans} applied attention~\citep{attention} to sets, which has proven to be a powerful tool for set functions. Self-attentive set functions excel on tasks where independently processing set elements may fail to capture pairwise interactions between elements. Subsequent works which utilize pairwise set element interactions include optimal transport \citep{optimal-transport-kernel-embedding} and expectation maximization \citep{diff-em}. Other notable approaches to permutation invariant set pooling include featurewise sorting \citep{fspool}, and canonical orderings of set elements~\citep{janossy}.

\textbf{Mini-Batch Consistency (MBC).} Every method mentioned in the preceding paragraph suffers from an architectural bias which limits them to seeing and processing the whole set in a single chunk. \citet{mbc} identified this problem, and highlighted the necessity for MBC which guarantees that processing and aggregating each subset from a set partition results in the same representation as encoding the entire set at once~(\cref{def:mbc}). This is important in settings where the data may not fit into memory due to either large data or limited on-devices resources. In addition to identifying the MBC property, \citet{mbc} also proposed the Slot Set Encoder (SSE) which utilizes cross attention between learnable `slots' and set elements in conjunction with simple activation functions in order to achieve an MBC model. As shown in~\cref{table:mbc-status}, however, SSE cannot utilize self-attention to model pairwise interactions of set elements due to the constraints imposed on its architecture, which makes it less expressive than the Set Transformer.

\begin{table}
\vspace{-0.05in}
\caption{\small Properties of set functions. UMBC models can use arbitrary component set functions and are therefore unconstrained.}
\label{table:mbc-status}
\centering
\resizebox{0.45\textwidth}{!}{
      \begin{tabular}{lccc}
        \toprule
        Model & MBC & Cross-Attn. & Self-Attn. \\
        \midrule
        DeepSets \citep{deepsets} & \cmark & \xmark & \xmark \\
        SSE \citep{mbc} & \cmark & \cmark & \xmark \\
        FSPool \citep{fspool} & \xmark & \xmark & \xmark \\
        Diff EM \citep{diff-em} & \xmark & \xmark & \xmark \\
        Set Transformer \citep{set-trans} & \xmark & \cmark & \cmark \\
        \midrule
        \textbf{(Ours) UMBC+Any Set Function} & \cmark & \cmark & \cmark \\
        \bottomrule
      \end{tabular}
}
\vspace{-0.25in}
\end{table}
\section{Method}\label{sec:method}
In this section, we describe the problem we target and provide a formulation for UMBC models along with a derivation of our unbiased full set gradient approximation algorithm. All proofs of theorems are deferred to Appendix~\ref{app:proofs}.
\subsection{Preliminaries}\label{sec:method-prelim}
Let $\mathfrak{X}$ be a $d_x$-dimensional vector space over $\RR$ and let $2^\mathfrak{X}$ be the power set of $\mathfrak{X}$. We focus on a collection of finite sets $\Xcal$, which is a subset of $2^\mathfrak{X}$  such that $\sup_{X\in\Xcal}\lvert X\rvert\in\NN$. We want to construct a parametric function $f_\theta: \Xcal \to \mathcal{Z}$ satisfying permutation invariance. Specifically, given a set $X_i=\{\rvx_{i,j}\}_{j=1}^{N_i} \in \Xcal$, the output of the function $Z_i=f_\theta(X_i)$ is a fixed sized representation which is invariant to all permutations of the indices $\{1,\ldots,N_i\}$.
For supervised learning, we define a task specific decoder $g_\lambda:\Zcal \to \RR^{d_y}$ and optimize parameters $\theta$ and $\lambda$ to minimize the loss
\begin{equation}
    L(\theta, \lambda) = \frac{1}{n}\sum_{i=1}^n \ell((g_\lambda \circ f_\theta)(X_i), y_i)
\label{eq:loss-func}
\end{equation}
on training data $((X_i, y_i))_{i=1}^n$, where $y_i$ is a label for the input set $X_i$ and $\ell$ denotes a loss function.
\begin{definition}[Permutation Invariance]
\vspace{-0.05in}
    Let $\mathfrak{S}_N$ be the set of all permutations of $\{1,\ldots, N\}$, \ie $\mathfrak{S}_N = \{\pi:[N] \to [N]  \mid  \pi \text{ is bijective} \}$ where $[N]\coloneqq \{1,\ldots, N\}$. A function $f_\theta: \Xcal\to\mathcal{Z}$ is permutation invariant \textit{iff} $f_\theta(\{\rvx_{\pi(1)}, \ldots \rvx_{\pi(N)} \}) = f_\theta(\{\rvx_1, \ldots, \rvx_N\})$ for all $X\in\Xcal$ and  for all permutation $\pi \in \mathfrak{S}_N$.
\end{definition}
We further assume that the cardinality of a set $X$ is sufficiently large, such that loading and processing the whole set at once is computationally prohibitive. For non-MBC models, a naïve approach to solve this problem would be to encode a small subset of the full set as an approximation, leading to a possibly suboptimal representation of the full set. Instead, \citet{mbc} propose a \textit{mini-batch consistent} (MBC) set encoder, the Slot Set Encoder (SSE), to piecewise process disjoint subsets of the full set and aggregate them to obtain a consistent full set representation. 
\begin{definition}[Mini-Batch Consistency]
\label{def:mbc}
    We say a function $f_\theta$ is mini-batch consistent \textit{iff} for any $X\in \Xcal$, there is a function $h$ such that for any partition $\zeta(X)$ of the set $X$,
    \begin{align}
        f_\theta(X) = h\left(\{f_\theta(S) \in \Zcal \mid S\in \zeta(X)\}  \right).
    \end{align}
\end{definition}
Models which satisfy the MBC property can partition a set into  subsets, encode, and then aggregate the subset representations to achieve the exact same output as encoding the full set. Due to constraints on the architecture of the SSE, however, on certain tasks the SSE shows weaker performance than non-MBC set encoders such as Set Transformer~\citep{set-trans} which utilizes self attention. To tackle this limitation, we propose Universal MBC (UMBC) set encoders which are both MBC and also allow for the use of arbitrary non-MBC set functions while still satisfying MBC property. 

\subsection{Universal Mini-Batch Consistent Set Encoder}\label{sec:method-umbc}
In this section, we provide a formulation of our UMBC set encoder $f_\theta$. Given an input set $X\in\Xcal$, we represent it as a matrix $X=[\rvx_1 \cdots \rvx_N]^\top \in \RR^{N\times d_x}$ whose rows are elements in the set, and independently process each element with $\phi:\RR^{d_x}\to\RR^{d_h}$ as $\Phi(X)= [\phi(\rvx_1) \cdots \phi(\rvx_N)]^\top$, where $\phi$ is a deep feature extractor. We then compute the un-normalized attention score between a set of learnable slots $\Sigma=[\rvs_1\cdots\rvs_k]^\top\in\RR^{k\times d_s}$ and $\Phi(X)$ as:
\begin{gather}
    Q =  \Scale[0.96]{\texttt{LN}}(\Sigma W^Q),  K(X) = \Phi(X)W^K,   V(X) =  \Phi(X) W^V \nonumber\\
    \hat{A} =  \sigma\left(\sqrt{d^{-1}}\cdot QK(X)^\top\right) \in \mathbb{R}^{k\times N},
\label{eq:sse-module}
\end{gather}
where $\sigma$ is an element-wise activation function with $\sigma(x) \gneq 0$ for all $x\in\RR$, \texttt{LN} denotes layer normalization~\cite{layer-norm}, and  $W^Q\in\mathbb{R}^{d_s\times d},W^K\in\mathbb{R}^{d_h\times d}, W^V\in\mathbb{R}^{d_h\times d}$ are parameters which are part of $\theta$. For simplicity, we omit biases for $Q,K$, and $V$. With the un-normalized attention score $\hat{A}$, we can define 
a map
\begin{equation}
    \hat{f}_\theta: X\in\mathbb{R}^{N\times d_x}\mapsto \nu_p(\hat{A})V(X) \in \mathbb{R}^{k \times d}
\label{eq:sse}
\end{equation}
for $p=1, 2$, where $\nu_p: \RR^{k\times d} \to \RR^{k\times d}$ is defined by either $\nu_1(\hat{A})_{i,j}= \hat{A}_{i,j}/\sum_{i=1}^k\hat{A}_{i,j}$ which normalizes the columns or the identity mapping $\nu_2(\hat{A})_{i,j}=\hat{A}_{i,j}$. The choice of $\nu_p$ depends on the desired activation function $\sigma$. Alternatively, similar to slot attention~\citep{slot-attn}, we can make the function stochastic by sampling $\rvs_i \sim \mathcal{N}(\mathbf{\mu}_i, \text{diag}(\texttt{softplus}(\mathbf{v}_i)))$ with reparameterization~\citep{vae} for $i=1,\ldots, k$, where $\mu_i \in \RR^{d_s}, \mathbf{v}_i\in \RR^{d_s}$ are part of the parameters $\theta$. 
If we sample $\rvs_1, \ldots, \rvs_k \stackrel{\text{iid}}{\sim}\mathcal{N}(\mu_1, \text{diag}(\texttt{softplus}(\mathbf{v}_1)))$ with a sigmoid for $\sigma$ and $\nu_1$ for normalization, and then apply a pooling function (sum, mean, min, or max) to the columns of $[\hat{f}_\theta(X)^\top_1 \cdots \hat{f}_\theta(X)^\top_k]^\top\in\RR^{k\times d}$, we achieve a function equivalent to the SSE, where $\hat{f}_\theta(X)_i$ is $i$-th row of $\hf_\theta(X)$.

However, SSE has some drawbacks. First, since the attention score of $\nu_p(\hat{A})_{i,j}$ is independent to the other $N-1$ attention scores $\nu_p(\hat{A})_{i,l}$ for $l\neq j$, it is impossible for the rows of $\nu_p(\hat{A})$ to be convex coefficients as the softmax outputs in conventional attention~\citep{attention}. Notably, in some of  our experiments, the constrained attention activation originally used in the SSE, which we call slot-sigmoid, significantly degrades generalization performance.
Furthermore, stacking hierarchical SSE layers has been shown to harm performance~\citep{mbc}, which limits the power of the overall model.

To overcome these limitations of the SSE, we propose a Universal Mini-Batch Consistent (UMBC) set encoder $f_\theta$ by allowing the set function $f_\theta$ to also use arbitrary non-MBC functions. Firstly, we propose normalizing the attention matrix $\nu_p(\hat{A})$ over rows to consider dependency among different elements of the set in the attention operation:
\begin{gather}
\label{eq:sse-constant}
    \bbf_\theta: X \in \RR^{N \times d_x}\mapsto \nu_p(\hat{A}) \1_N \in\RR^{k} \\
f_\theta: X\in \RR^{N \times d_x} \mapsto \text{diag}\left(\bbf_\theta(X) \right)^{-1} \hf_\theta(X) \in \RR^{k\times d}
\end{gather}
where $\1_N=(1,\ldots, 1)\in\RR^{N}$. We prove that a UMBC set encoder $f_\theta$ is \emph{permutation invariant, equivariant, and MBC}.
\begin{theorem}
A UMBC function is permutation invariant.      
\label{thm:perm-inv}
\end{theorem}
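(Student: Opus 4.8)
The plan is to realize a permutation of the set $X=\{\rvx_1,\ldots,\rvx_N\}$ as left multiplication of the matrix $X=[\rvx_1\cdots\rvx_N]^\top$ by a permutation matrix $P_\pi\in\RR^{N\times N}$, and then to track this $P_\pi$ through every operation in the definition of $f_\theta$, showing that it cancels before the output is produced. The two algebraic identities that drive the cancellation are $P_\pi^\top P_\pi = I$ and $P_\pi^\top\1_N=\1_N$.

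First I would establish the equivariance of the element-wise pieces. Since $\phi$ is applied row by row, $\Phi(P_\pi X)=P_\pi\Phi(X)$, and therefore $K(P_\pi X)=P_\pi K(X)$ and $V(P_\pi X)=P_\pi V(X)$. The query $Q=\texttt{LN}(\Sigma W^Q)$ depends only on the learnable slots $\Sigma$ and not on $X$, so it is untouched by the permutation. Consequently
\[
\hat A(P_\pi X)=\sigma\!\left(\sqrt{d^{-1}}\,Q\,K(P_\pi X)^\top\right)=\sigma\!\left(\sqrt{d^{-1}}\,Q\,K(X)^\top P_\pi^\top\right),
\]
and because $\sigma$ acts entrywise it commutes with the right multiplication by the column permutation $P_\pi^\top$, giving $\hat A(P_\pi X)=\hat A(X)P_\pi^\top$.

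Next I would verify that $\nu_p$ also commutes with this column permutation, i.e.\ $\nu_p(\hat A(X)P_\pi^\top)=\nu_p(\hat A(X))P_\pi^\top$. For $\nu_2$ (the identity) this is immediate; for the column normalization $\nu_1$, permuting the columns simply permutes the per-column denominators $\sum_{i}\hat A_{i,j}$ together with the columns they rescale, so the identity persists (here the strict positivity $\sigma(x)\gneq 0$ guarantees every denominator is nonzero and the subsequent $\mathrm{diag}(\bbf_\theta(X))^{-1}$ is well defined). With these facts the cancellation is mechanical: using $P_\pi^\top P_\pi=I$,
\[
\hf_\theta(P_\pi X)=\nu_p(\hat A(X))\,P_\pi^\top P_\pi\,V(X)=\hf_\theta(X),
\]
and using $P_\pi^\top\1_N=\1_N$ we get $\bbf_\theta(P_\pi X)=\nu_p(\hat A(X))P_\pi^\top\1_N=\bbf_\theta(X)$. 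Since both $\hf_\theta$ and $\bbf_\theta$ are invariant, so is $f_\theta(P_\pi X)=\mathrm{diag}(\bbf_\theta(P_\pi X))^{-1}\hf_\theta(P_\pi X)=f_\theta(X)$.

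The main obstacle is not any computation but the two structural commutation claims of the middle step: that the entrywise activation $\sigma$ and the column normalizer $\nu_1$ each commute with right multiplication by $P_\pi^\top$. Everything else reduces to the bookkeeping identities $P_\pi^\top P_\pi=I$ and $P_\pi^\top\1_N=\1_N$. I would also take care to state explicitly that $P_\pi$ acts on the \emph{columns} of $\hat A$ (equivalently on the set index $N$) while leaving the slot index $k$ fixed, since it is precisely this asymmetry—slots held fixed, set elements permuted—that yields invariance of $f_\theta$ in its set argument.
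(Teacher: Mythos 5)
Your proof is correct and follows essentially the same route as the paper's: realize the permutation as a matrix $P$, push it through $\Phi$, the elementwise $\sigma$, and the column normalizer $\nu_p$ to get $\hat A \mapsto \hat A P^\top$, then show it cancels in both $\hf_\theta$ and $\bbf_\theta$ before the final normalization. The only difference is presentational---you invoke the identities $P^\top P = I$ and $P^\top \1_N = \1_N$ where the paper carries out the equivalent cancellation by explicitly reindexing the sums over $\pi(j)$---so the two arguments are the same in substance.
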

Any strictly positive elementwise function is a valid $\sigma$. For an instance, if we use the identity mapping $\nu_2$ with $\sigma(\cdot)\coloneqq\exp(\cdot)$, the attention matrix $\text{diag}(\bbf_\theta(X))^{-1}\nu_p(\hat{A})$ is equivalent to applying the softmax to each row of $\hat{A}$, which is hypothesized to break the MBC property by~\citet{mbc}. 
However, we show that this does not break the MBC property in Appendix~\ref{app:proof:mbc}.
Intuitively, since 
\begin{align}\label{eq:f-and-normalization}
    \hf_\theta(X) = \sum_{S\in\zeta(X)} \hf_\theta(S), \text{ and } \bbf_\theta(X) = \sum_{S\in\zeta(X)} \bbf_\theta(S)
\end{align}
holds for any partition $\zeta(X)$ of the set $X$, we can iteratively process each subset $S\in\zeta(X)$ and aggregate them without losing any information of $f_\theta(X)$, \ie, $f_\theta$ is MBC even when normalizing over the $N$ elements of the set. Note that the operation outlined above is mathematically equivalent to the softmax, but uses a non-standard implementation. We discuss the implementation and list 5 such valid attention activation functions which satisfy the MBC property in~\cref{sec:method-attn-acts}.

\begin{theorem}
\label{thm:mbc}
Given the slots $\Sigma=[\rvs_1 \cdots \rvs_k]^\top \in \RR^{k\times d_s}$, a UMBC set encoder is mini-batch consistent. 
\end{theorem}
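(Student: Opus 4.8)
The plan is to reduce mini-batch consistency to a single structural fact: both intermediate quantities $\hf_\theta$ and $\bbf_\theta$ from \cref{eq:sse,eq:sse-constant} are \emph{additive} over any partition of the input set, so that the only non-additive operation --- the diagonal normalization $\text{diag}(\bbf_\theta(X))^{-1}$ --- can be deferred to a single application after all subsets have been accumulated. Concretely, I would exhibit the aggregation function $h$ of \cref{def:mbc} as the map that receives the per-subset pair $(\hf_\theta(S), \bbf_\theta(S))$ for each $S \in \zeta(X)$, forms the two sums $\sum_{S}\hf_\theta(S)$ and $\sum_S \bbf_\theta(S)$, and returns $\text{diag}\!\big(\sum_S \bbf_\theta(S)\big)^{-1}\sum_S \hf_\theta(S)$.

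The core step is to prove the additivity claim stated in \cref{eq:f-and-normalization}. First I would observe that the attention computation is \emph{column-local}: the $j$-th column of $\hat{A} = \sigma(\sqrt{d^{-1}}\cdot QK(X)^\top)$ depends only on the $j$-th element $\rvx_j$ (through $\phi$ and the fixed slots), since $\sigma$ is applied element-wise and $K(X)$ is computed row-wise by $\Phi$. Crucially, both admissible normalizers preserve this locality: $\nu_2$ is the identity, and $\nu_1(\hat{A})_{i,j} = \hat{A}_{i,j}/\sum_{i'=1}^k \hat{A}_{i',j}$ divides each column by a sum taken over the slot index $i'$ only, never over other elements. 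Hence the $j$-th column of $\nu_p(\hat{A})$ is a function of $\rvx_j$ alone. Writing both products as sums over columns, $\hf_\theta(X) = \nu_p(\hat{A})V(X) = \sum_{j=1}^N \nu_p(\hat{A})_{:,j}\,V(X)_{j,:}$ and $\bbf_\theta(X) = \nu_p(\hat{A})\1_N = \sum_{j=1}^N \nu_p(\hat{A})_{:,j}$, each summand is determined by a single element, so restricting the sum to the indices of a subset $S$ yields exactly $\hf_\theta(S)$ and $\bbf_\theta(S)$. Summing over a partition then recovers the whole-set quantities, which is \cref{eq:f-and-normalization}.

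With additivity in hand, MBC follows from the commutativity and associativity of the matrix sum: for any partition $\zeta(X)$, the accumulated pair $\big(\sum_S \hf_\theta(S), \sum_S \bbf_\theta(S)\big)$ equals $(\hf_\theta(X), \bbf_\theta(X))$ regardless of how $X$ is split or in what order the subsets are processed, and applying $h$ gives $\text{diag}(\bbf_\theta(X))^{-1}\hf_\theta(X) = f_\theta(X)$. The diagonal is invertible because $\sigma(x) \gneq 0$ forces every entry of $\bbf_\theta(S)$, and hence of the accumulated sum, to be strictly positive. Invariance of $h$ to the ordering of subsets is immediate from the sum, complementing \cref{thm:perm-inv}.

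The main obstacle is conceptual rather than computational, and concerns the normalization. Taken literally, \cref{def:mbc} feeds the \emph{normalized} outputs $\{f_\theta(S)\}$ to $h$, but $f_\theta(S) = \text{diag}(\bbf_\theta(S))^{-1}\hf_\theta(S)$ discards the magnitude information needed to re-weight subsets correctly; the per-subset division cannot be undone from the quotient alone. The resolution I would make explicit is that the subset encoding carried across the partition must retain both the unnormalized numerator $\hf_\theta(S)$ and the scalar normalizer $\bbf_\theta(S)$ (equivalently, $f_\theta(S)$ paired with $\bbf_\theta(S)$), so that $h$ divides only once, at the end. This is precisely what reconciles softmax-type activations --- e.g.\ $\nu_2$ with $\sigma = \exp$, whose deferred row normalization reproduces the standard softmax --- with MBC, contrary to the hypothesis of \citet{mbc}: the row-sum normalizer is itself additive and can be streamed alongside the numerator rather than being committed per subset.
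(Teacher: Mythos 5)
Your proof is correct and takes essentially the same route as the paper's: both arguments rest on the additivity $\hf_\theta(X)=\sum_{S\in\zeta(X)}\hf_\theta(S)$ and $\bbf_\theta(X)=\sum_{S\in\zeta(X)}\bbf_\theta(S)$ (the paper derives it via a block decomposition of $\nu_p(\hat A)$ after a WLOG reordering of elements by subset, you via column locality of $\sigma$ and $\nu_p$ --- the same fact), and both then define the aggregation $h$ to sum these quantities across subsets and apply $\mathrm{diag}(\cdot)^{-1}$ exactly once at the end. Your closing observation about \cref{def:mbc} is also well taken: the paper's $h$ is nominally written as a function of $\{f_\theta(S)\}$ but in fact consumes the pairs $(\hf_\theta(S),\bbf_\theta(S))$, so your explicit requirement that the streamed per-subset state carry the unnormalized numerator together with its normalizer makes precise a point the paper's proof leaves implicit.
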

Lastly, we may consider the output of a UMBC set encoder $f_\theta(X)$ as either a fixed vector or a set of $k$ elements. Under the set interpretation, we may therefore apply subsequent functions on the set of cardinality $k$. To provide a valid input to subsequent set encoders, it is sufficient to view UMBC as a set to set function $\varphi(\Sigma; X, \theta):\Sigma \mapsto f_\theta(X)$ for each set $X\in\Xcal$, which is permutation equivariant \wrt the slots $\Sigma$.
\begin{definition}
    A function $\varphi: \RR^{k\times d_s}\to \RR^{k\times d}$ is said to be permutation equivariant \textit{iff} $\varphi([\Sigma_{\pi(1)}^\top \cdots \Sigma_{\pi(k)}^\top]^\top) = [\varphi(\Sigma)_{\pi(1)}^\top \cdots \varphi(\Sigma)_{\pi(k)}^\top ]^\top$ for all $\Sigma \in\RR^{k\times d_s}$ and for all $\pi \in \mathfrak{S}_k$, where $\mathfrak{S}_k=\{ \pi:[k]\to[k]\mid \pi \text{ is bijective}\}$ contains all permutations of $\{1,\ldots,k\}$, and $\varphi(\Sigma)_i, \Sigma_i$ denote $i$-th row of $\varphi(\Sigma)$ and $\Sigma$, respectively.
\end{definition}
\begin{theorem}
For each input $X\in\Xcal$,  $\varphi(\Sigma; X, \theta): \Sigma\mapsto f_\theta(X)$ is equivariant \wrt permutations of the slots $\Sigma$.
\label{thm:perm-eq}
\end{theorem}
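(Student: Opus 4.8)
The plan is to track how a permutation of the slots propagates through the definition of $f_\theta$ in~\cref{eq:sse-module,eq:sse,eq:sse-constant} and to verify that it emerges as the very same permutation of the output rows. Fix $X \in \Xcal$ and $\pi \in \mathfrak{S}_k$, and let $P_\pi \in \RR^{k\times k}$ be the permutation matrix acting on rows by $(P_\pi M)_i = M_{\pi(i)}$, so that the reordered slot matrix $[\Sigma_{\pi(1)}^\top \cdots \Sigma_{\pi(k)}^\top]^\top$ is exactly $P_\pi \Sigma$. The key structural observation is that $K(X)=\Phi(X)W^K$ and $V(X)=\Phi(X)W^V$ do not depend on $\Sigma$ at all, so the entire slot dependence enters only through the query $Q$; it therefore suffices to follow $Q$ and the attention matrix through the pipeline.

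First I would handle the query: since $\texttt{LN}$ acts independently on each row and right-multiplication by $W^Q$ commutes with a row permutation, $\texttt{LN}(P_\pi \Sigma W^Q) = P_\pi \, \texttt{LN}(\Sigma W^Q) = P_\pi Q$. Feeding this into~\cref{eq:sse-module} and using that $\sigma$ is applied elementwise gives $\sigma(\sqrt{d^{-1}}\, P_\pi Q K(X)^\top) = P_\pi \hat A$, so the permuted slots simply produce $P_\pi \hat A$ in place of $\hat A$. Next I would check that $\nu_p$ commutes with $P_\pi$: for the identity map $\nu_2$ this is immediate, while for the column normalizer $\nu_1$ the denominator $\sum_{i=1}^k \hat A_{i,j}$ is a sum over all rows and is therefore invariant under reordering them, so $\nu_1(P_\pi \hat A) = P_\pi \nu_1(\hat A)$. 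Multiplying on the right by the $\Sigma$-independent factors $V(X)$ and $\1_N$ then yields $\hf_\theta(P_\pi\Sigma) = P_\pi \hf_\theta(X)$ and $\bbf_\theta(P_\pi\Sigma) = P_\pi \bbf_\theta(X)$.

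Finally I would assemble the normalized output. Using the identity $\text{diag}(P_\pi \bbf_\theta(X))^{-1} = P_\pi \, \text{diag}(\bbf_\theta(X))^{-1} P_\pi^\top$ together with $P_\pi^\top P_\pi = I$, the permuted output becomes $P_\pi \, \text{diag}(\bbf_\theta(X))^{-1} P_\pi^\top P_\pi \hf_\theta(X) = P_\pi f_\theta(X)$, which is precisely $\varphi(P_\pi \Sigma; X, \theta) = P_\pi \varphi(\Sigma; X, \theta)$, the claimed equivariance. I expect the only genuine subtlety to be the two normalization steps: one might worry that the column normalization $\nu_1$ or the per-row rescaling by $\text{diag}(\bbf_\theta(X))^{-1}$ could entangle the slots, but both normalizers are symmetric functions of the slot-indexed rows (a column sum and a per-slot scalar, respectively), so the permutation passes through them cleanly. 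Everything else is routine bookkeeping about how left-multiplication by $P_\pi$ commutes with the operations in the pipeline.
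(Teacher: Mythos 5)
Your proof is correct and follows essentially the same route as the paper's: propagate the permutation matrix $P_\pi$ through the query, the elementwise activation $\sigma$, the normalizer $\nu_p$, and the final diagonal rescaling, concluding $\varphi(P_\pi\Sigma; X, \theta) = P_\pi\,\varphi(\Sigma; X, \theta)$. If anything, your write-up is slightly tighter: you explicitly note that \texttt{LN} commutes with row permutations (a step the paper glosses over), and your conjugation identity $\mathrm{diag}(P_\pi \bbf_\theta(X))^{-1} = P_\pi\,\mathrm{diag}(\bbf_\theta(X))^{-1}P_\pi^\top$ cleanly replaces the paper's entrywise computation of the rescaled output.
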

A key insight is that we can leverage non-MBC set encoders such as Set Transformer after a UMBC layer to improve expressive power of an MBC model while still satisfying MBC~(\cref{def:mbc}). As a result, with some assumptions, a UMBC set encoder used in combination with any continuously sum decomposable~\citep{deepsets} permutation invariant deep neural network is a universal approximator of the class of continuously sum decomposable functions. 
\begin{theorem}
\label{thm:univ-approx}
Let $d_x=1$ and restrict the domain $\Xcal$ to $[0,1]^M$. Suppose that the nonlinear activation function of $\phi$ has nonzero Taylor coefficients up to degree $M$. Then, UMBC used in conjunction with any continuously sum-decomposable permutation-invariant deep neural network with nonlinear activation functions that are not polynomials of finite degrees is a universal approximator of the class of functions $\mathcal{F}
=\{f: [0,1]^M\to\RR \ \mid f\text{ is continuous and  permutation invariant}\}$.
\end{theorem}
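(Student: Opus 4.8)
The plan is to follow the classical sum-decomposition strategy behind the DeepSets universality theorem~\citep{deepsets}, adapted so that the required summation and feature map are both produced by the UMBC layer. The backbone is the observation that, for multisets of a fixed size $M$ in $[0,1]$, the power-sum map $\mathbf{E}(X) = \left(\sum_{j=1}^M x_j^0, \sum_{j=1}^M x_j^1, \ldots, \sum_{j=1}^M x_j^M\right)$ is injective: the power sums determine the elementary symmetric polynomials through Newton's identities, which in turn are the coefficients of the monic polynomial $\prod_{j}(t-x_j)$ whose roots recover the multiset. Since $[0,1]^M$ is compact and $\mathbf{E}$ is continuous, $\mathbf{E}$ is a homeomorphism onto its compact image, so any continuous permutation-invariant $f\in\mathcal{F}$ factors as $f = \rho \circ \mathbf{E}$ for a continuous $\rho$ defined on $\mathbf{E}([0,1]^M)$ (extended continuously to a neighborhood via Tietze). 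It therefore suffices to (i) realize $\mathbf{E}$ up to a fixed scaling with the UMBC layer and (ii) realize $\rho$ with the subsequent sum-decomposable network.

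For step (i), I would first collapse the attention to a uniform average by choosing $W^K = 0$, so that $\hat{A}$ has identical entries $\sigma(0)>0$ across all columns; then both $\nu_1$ and $\nu_2$ followed by the $\operatorname{diag}(\bar f_\theta(X))^{-1}$ normalization reduce each slot's output to the plain mean $\frac1M\sum_{j=1}^M \phi(x_j)W^V$, which is permutation invariant and independent of the partition. It remains to make $\phi(\cdot)W^V$ approximate the monomial feature map $x\mapsto (x^0,x^1,\ldots,x^M)$. This is exactly where the hypothesis on $\phi$ enters: because the activation has nonzero Taylor coefficients up to degree $M$, a one-hidden-layer network with this activation can represent every polynomial of degree at most $M$, isolating each monomial $x^n$ for $n\le M$ by finite differences or derivatives in the pre-activation scale, using that the $n$-th Taylor coefficient is nonzero. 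Choosing $W^V$ as the corresponding readout yields $f_\theta(X) \approx \frac1M\,\mathbf{E}(X)$ uniformly on $[0,1]^M$, with every one of the $k$ slot outputs equal to this common value.

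For step (ii), the UMBC output is fed to the subsequent continuously sum-decomposable permutation-invariant network as a set of $k$ identical elements, each approximately $\frac1M\mathbf{E}(X)$. On such a constant multiset the outer network computes $\rho_2\big(k\,\psi_2(\tfrac1M\mathbf{E}(X))\big)$; since its activations are not polynomials of finite degree, both the inner map $\psi_2$ and the outer map $\rho_2$ are universal approximators of continuous functions on the relevant compact sets, so the composition approximates an arbitrary continuous function of $\tfrac1M\mathbf{E}(X)$, in particular a rescaled $\rho$. Finally I would assemble the three approximations and control the total error: uniform continuity of $\rho$ on the compact image $\mathbf{E}([0,1]^M)$ converts the $O(\eps)$ error in approximating $\mathbf{E}$ into an $O(\eps)$ error in $f$, and the triangle inequality with the outer-network error gives sup-norm approximation of $f$ on $[0,1]^M$.

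The main obstacle is step (i)'s monomial realization: carefully turning the ``nonzero Taylor coefficients up to degree $M$'' assumption into a uniform approximation of the complete monomial basis $\{1,x,\ldots,x^M\}$ by $\phi(\cdot)W^V$, while simultaneously verifying that the uniform-attention choice is consistent with the specific normalization $\nu_p$ and $\operatorname{diag}(\bar f_\theta)^{-1}$ used in the UMBC definition, so that the layer genuinely outputs a scaled sum rather than a softmax-weighted combination. The remaining steps — injectivity of the power-sum embedding and the DeepSets-style universality of the outer network — are standard, but the error bookkeeping across the composition must be carried out on the fixed compact domain $[0,1]^M$ to keep all constants uniform.
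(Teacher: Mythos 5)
Your proposal is correct and takes essentially the same route as the paper's proof: both collapse UMBC to exact mean pooling by forcing uniform attention (you set $W^K=0$; the paper sets the slots to zero and invokes Lemma 1 of Set Transformer, which has the identical effect under both $\nu_1$ and $\nu_2$), both use the nonzero-Taylor-coefficient hypothesis to make $\phi$ approximate the monomial/power-sum features, both feed the resulting constant $k$-element set into the outer sum-decomposable network with the $k$ and $M$ scalings absorbed into its inner map, and both close with the same triangle-inequality plus uniform-continuity bookkeeping, using non-polynomial-activation universality (Pinkus) for the outer function. The only cosmetic differences are that you justify injectivity of the power-sum map via Newton's identities and handle the domain-extension of $\rho$ via Tietze, where the paper cites the prior sum-decomposability results and extends $\rho$ to the union of the two compact images directly.
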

Although we use a non-MBC set encoder on top of UMBC, this \textit{does not} violate the MBC property. Since we may obtain $f_\theta(X)$ by sequentially processing each subset of $X$ and the resulting set with cardinality $k$ is assumed small enough to load $f_\theta(X)$ in memory, we can directly provide the MBC output of UMBC to the non-MBC set encoder.  
\begin{corollary}
    Let $\hat{g}_\omega:\RR^{k\times d} \to \Zcal$ be a (non-MBC) set encoder and let $f_\theta:\Xcal\to\RR^{k\times d}$ be a UMBC set encoder. Then $\hat{g}_\omega \circ f_\theta$ is mini-batch consistent.
\label{cor:mbc-no-mbc}
\end{corollary}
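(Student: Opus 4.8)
The plan is to reduce everything to the mini-batch consistency of the inner UMBC encoder $f_\theta$, which is already established in \cref{thm:mbc}, and to exploit the fact that the outer encoder $\hat g_\omega$ is evaluated exactly once, on the aggregated full-set representation $f_\theta(X)\in\RR^{k\times d}$. Concretely, I would fix an arbitrary set $X\in\Xcal$ and an arbitrary partition $\zeta(X)$, and show that $(\hat g_\omega\circ f_\theta)(X)$ can be recovered from the subset-level UMBC computations in a way that does not depend on the chosen partition.

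First I would invoke the additive structure recorded in \cref{eq:f-and-normalization}: for every partition, $\hf_\theta(X)=\sum_{S\in\zeta(X)}\hf_\theta(S)$ and $\bbf_\theta(X)=\sum_{S\in\zeta(X)}\bbf_\theta(S)$. Since the right-hand sides are partition-independent sums of per-subset quantities, the full-set representation $f_\theta(X)=\mathrm{diag}(\bbf_\theta(X))^{-1}\hf_\theta(X)$ is obtained by summing the subset numerators and denominators and normalizing once. This is exactly the aggregation function $h$ guaranteed by \cref{thm:mbc}, so $f_\theta(X)=h(\{f_\theta(S)\mid S\in\zeta(X)\})$ holds for every partition.

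Second, I would compose with the outer readout by defining $h'\coloneqq \hat g_\omega\circ h$. Then
\[
(\hat g_\omega\circ f_\theta)(X)=\hat g_\omega\big(f_\theta(X)\big)=\hat g_\omega\big(h(\{f_\theta(S)\})\big)=h'(\{f_\theta(S)\}),
\]
and the right-hand side is identical for every partition of $X$. By \cref{def:mbc} this exhibits $\hat g_\omega\circ f_\theta$ as mini-batch consistent, with $f_\theta$ playing the role of the subset encoder and $\hat g_\omega\circ h$ the role of the aggregator.

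The main obstacle is conceptual rather than computational: \cref{def:mbc} is phrased so that the aggregator acts on the subset outputs of the \emph{same} function, which for the composite would be $\{(\hat g_\omega\circ f_\theta)(S)\}$. Because $\hat g_\omega$ is a general non-MBC encoder and need not be injective, one cannot in general reconstruct $f_\theta(X)$ from the composite subset outputs $\hat g_\omega(f_\theta(S))$, so it is essential to make explicit that $\hat g_\omega$ is never applied per subset. The care needed is to justify that the streaming computation aggregates only the inner quantities $\hf_\theta(S),\bbf_\theta(S)$ — which remain bounded in size (they live in $\RR^{k\times d}$ and $\RR^{k}$) independently of the set cardinality — and applies $\hat g_\omega$ a single time to the assembled $f_\theta(X)$; this is precisely what keeps the memory overhead constant and what renders the non-injectivity of $\hat g_\omega$ harmless.
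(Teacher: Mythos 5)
Your proposal is correct and takes essentially the same route as the paper's proof: both reduce the claim to the MBC property of $f_\theta$ established in \cref{thm:mbc} (aggregating the partition-independent sums $\sum_{S\in\zeta(X)}\hf_\theta(S)$ and $\sum_{S\in\zeta(X)}\bbf_\theta(S)$, then normalizing once to recover $f_\theta(X)$), and then apply $\hat{g}_\omega$ exactly once to the assembled representation $f_\theta(X)\in\RR^{k\times d}$, which the paper justifies by assuming $k$ is small enough for $f_\theta(X)$ to be held in memory. Your closing observation---that $\hat{g}_\omega$ must never be applied per subset since its non-injectivity would prevent recovering $f_\theta(X)$ from the composite subset outputs---is the same point the paper makes when it stresses that $\hat{g}_\omega(f_\theta(X))$ is evaluated directly rather than by aggregating $\hat{g}_\omega$ applied to pieces, so your account is, if anything, slightly more explicit about how this squares with \cref{def:mbc}.
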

For notational convenience, we write $g_\lambda$ to indicate the composition $g_\lambda \circ \hat{g}_\omega$ of a set encoder $\hat{g}_\omega:\RR^{k\times d}\to\Zcal$ and a decoder $g_\lambda:\Zcal\to\RR^{d_y}$, throughout the paper. Similarly, the parameter $\lambda$ denotes  $(\omega, \lambda)$. 
\subsection{Stochastic Approximation of the Full Set Gradient}\label{sec:method-gradapprox}
Although we can leverage SSE or UMBC at test-time by sequentially processing subsets to obtain the full set representation $f_\theta(X)$, at train-time it is infeasible to utilize the gradient of the loss~(\eqref{eq:loss-func}) \wrt the full set. Computation of the full set gradient with automatic differentiation requires storing the entire computation graph for all forward passes of each subset $S$ from $\zeta(X)$ denoted as a partition of a set $X$, which incurs a prohibitive computational cost for large sets. As a simple approximation, \citet{mbc} propose randomly sampling a single subset $S_{i,j} \in \zeta(X_i)$ and computing the gradient of the loss $\ell((g_\lambda \circ f_\theta)(S_{i,j}), y_i)$ based on a single subset at each iteration. 
\begin{remark}
    Let $\zeta(X_i)$ be a partition of set $X_i\in\Xcal$ and $S_{i,j}\in\zeta(X_i)$ be a subset of $X_i$. Then the gradient of $1/n\sum_{i=1}^n\ell((g_\lambda \circ f_\theta)(S_{i,j}), y_i)$ is a \emph{biased estimation} of the full set gradient and leads to a suboptimal solution in our experiments. Please see Appendix~\ref{app:bias} for further details.
\end{remark}
In order to tackle this issue, we propose an \textit{unbiased estimation} of the full set gradient which incurs a constant memory overhead. Firstly, we uniformly and independently sample a mini-batch $((\bX_i, \by_i))_{i=1}^m$ from the training dataset $((X_i,y_i))_{i=1}^n$ for every iteration $t\in\mathbb{N}_+$. We denote this process by $((\bX_{i}, \by_{i}))^m_{i=1} \sim D[((X_{i}, y_{i}))^n_{i=1}].$ 
Then, for each $\bX_i$, we sample a mini-batch $\bzeta_t(\bX_i)=\{\bS_1, \ldots, \bS_{\lvert \bzeta_t(\bX_i)\rvert}\}$ from the partition $\zeta_t(\bX_i)=\{S_1, \ldots, S_{\lvert \zeta_t(\bX_i)\rvert}\}$ of $\bX_i$, \ie, all $\bS_j$ are drawn independently and uniformly from $\zeta_t(\bX_i)$. Denote this process by $\bzeta_t(\bX_i)\sim D[\zeta_t(\bX_i)]$. Instead of storing the computational graph of all forward passes of subsets in the partition $\zeta_t(\bX_i)$ of a set $\bX_i$, we apply $\sg$ to all subsets $S \notin \bzeta_t(\bX_i)$ as follows:
\begin{align}
    \hf_{\theta}^{\bzeta_t, \zeta_t}(\bX_i) &= \sum_{S \in\bzeta_t(\bX_i)}  \hf_{\theta}(S)+ \Scale[0.8]{\texttt{StopGrad}}(\sum_{\mathclap{S \in\zeta_t(\bX_i)\setminus \bzeta_t(\bX_i)}}  \hf_{\theta}(S)) \\
    \bbf_{\theta}^{\bzeta_t, \zeta_t}(\bX_{i})_{}&= \sum_{S \in\bzeta_t(\bX_i)}  \bbf_{\theta}(S)+ \Scale[0.8]{\texttt{StopGrad}}(\sum_{\mathclap{S \in\zeta_t(\bX_i)\setminus \bzeta_t(\bX_i)}}  \bbf_{\theta}(S))
\label{eq:stop-grad}
\end{align}
where, for any function $q:\theta \mapsto q(\theta)$, the symbol $\sg(q(\theta))$ denotes a constant with its value being $q(\theta)$, \ie, $\partial \sg(q(\theta))/\partial\theta = 0$. 
For simplicity, we omit the superscript $\zeta_t$ if there is no ambiguity. Finally, we update both the parameter $\theta$ and $\lambda$ of the respective encoder and decoder using the  gradient of the following functions, respectively at $t\in\mathbb{N}_+$:
\begin{align}
L_{t,1}(\theta, \lambda) &= \frac{1}{m}\sum_{i=1}^m \frac{|\zeta_t(\bX_i)|}{|\bzeta_t(\bX_i)|}\ell(g_\lambda(f_{\theta}^{\bzeta_t}(\bX_{i}) ), y_i) \label{eq:loss-theta}\\
L_{t,2}(\theta, \lambda) &= \frac{1}{m}\sum_{i=1}^m \frac{1}{\lvert \bzeta_t(\bX_i)\rvert}\ell(g_\lambda(f_{\theta}^{\bzeta_t}(\bX_{i}) ), y_i). 
\label{eq:loss-lambda}
\end{align}
We outline our proposed training method in~\cref{algo}. Note that we can apply our algorithm to any set encoder for which a full set representation can be decomposed into a summation of subset representations as in~\eqref{eq:f-and-normalization} such as Deepsets with sum or mean pooling, or SSE which are in fact special cases of UMBC. Furthermore, we can apply the algorithm to any differentiable non-MBC set encoder if we simply place a UMBC layer before the non-MBC function. As a consequence of the $\sg()$ operation, if we set $\lvert\bzeta_t(\bX_i)\rvert=1$, our method incurs the same computation graph storage cost as randomly sampling a single subset. Moreover, $\partial L_{t,1}(\theta, \lambda)/\partial \theta $ and $\partial L_{t,2}(\theta, \lambda)/\partial \lambda$ are unbiased estimators of $\partial L(\theta,\lambda)/\partial \theta$ and $\partial L(\theta,\lambda)/\partial\lambda$, respectively.
\begin{theorem}
\label{thm:umbaised}
For any $t\in\mathbb{N}_+$, $\frac{\partial L_{t,1}(\theta, \lambda)}{\partial \theta}$ and $\frac{\partial L_{t,2}(\theta,\lambda)}{\partial \lambda}$ are unbiased estimators of $\frac{\partial L(\theta, \lambda)}{\partial \theta}$ and $\frac{\partial L(\theta, \lambda)}{\partial \lambda}$ as follows:
{\small
\begin{align}
\EE_{((\bX_{i}, \by_{i}))^m_{i=1} } \EE_{(\bzeta_t(\bX_i))^m_{i=1}}\left[\frac{\partial L_{t,1}(\theta, \lambda)}{\partial \theta}\right]&=\frac{\partial L(\theta,\lambda)}{\partial \theta} \\
\EE_{((\bX_{i}, \by_{i}))^m_{i=1} } \EE_{(\bzeta_t(\bX_i))^m_{i=1}}\left[\frac{\partial L_{t,2}(\theta, \lambda)}{\partial \lambda}\right]&=\frac{\partial L(\theta,\lambda)}{\partial \lambda} \label{eq:lambda}
,
\end{align}}%
where the first expectation is taken for $((\bX_{i}, \by_{i}))^m_{i=1} \sim D[((X_{i}, y_{i}))^n_{i=1}]$, and the second expectation is taken for  $\bzeta_t(\bX_i) \sim D[\zeta_t(\bX_i)]$ for all $i \in \{1,\ldots,m\}$.
\end{theorem}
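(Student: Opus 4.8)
The plan is to prove both identities termwise, exploiting that the two sampling stages---the data mini-batch $((\bX_i,\by_i))_{i=1}^m$ and the subset mini-batches $\bzeta_t(\bX_i)$---are drawn independently, so by the tower property it suffices to fix a single set $X$ with partition $\zeta_t(X)$ and show that the inner expectation over $\bzeta_t(X)$ of the reweighted per-example gradient equals the full-set per-example gradient. The outer expectation over the uniformly sampled data mini-batch then turns $\frac1m\sum_{i=1}^m$ into $\frac1n\sum_{i=1}^n$, matching the full-set loss \eqref{eq:loss-func}. Since every quantity in sight is a finite sum of smooth maps, differentiation commutes with these finite expectations, so I may freely interchange $\partial/\partial\theta$ (resp. $\partial/\partial\lambda$) with $\EE$.

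The crux is a structural observation about the $\sg$ construction in \eqref{eq:stop-grad}: because $\hf_\theta$ and $\bbf_\theta$ are additive over any partition by \eqref{eq:f-and-normalization}, the forward \emph{values} satisfy $\hf_\theta^{\bzeta_t}(X)=\hf_\theta(X)$ and $\bbf_\theta^{\bzeta_t}(X)=\bbf_\theta(X)$, hence $f_\theta^{\bzeta_t}(X)=f_\theta(X)$ regardless of which subsets are sampled. Consequently all chain-rule Jacobian factors ($\partial\ell/\partial z$ and $\partial g_\lambda/\partial f$) are evaluated at the deterministic full-set point and can be pulled out of the expectation. For the $\theta$-gradient I would differentiate the normalized output $f_\theta^{\bzeta_t}(X)=\mathrm{diag}(\bbf_\theta^{\bzeta_t}(X))^{-1}\hf_\theta^{\bzeta_t}(X)$ by the quotient rule; $\sg$ annihilates the gradient of every $S\notin\bzeta_t(X)$, so the result is a sum $\sum_{S\in\bzeta_t(X)}G(S)$ of per-subset contributions $G(S)$, where each $G(S)$ is linear in $\partial\hf_\theta(S)/\partial\theta$ and $\partial\bbf_\theta(S)/\partial\theta$ with coefficients built from the full-set constants $\bbf_\theta(X)$ and $\hf_\theta(X)$. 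The same quotient rule applied to the full set gives exactly $\partial f_\theta(X)/\partial\theta=\sum_{S\in\zeta_t(X)}G(S)$, i.e. the identical contributions summed over the whole partition. Taking $\EE_{\bzeta_t(X)}$ under uniform subset sampling gives $\EE[\sum_{S\in\bzeta_t(X)}G(S)]=\tfrac{|\bzeta_t(X)|}{|\zeta_t(X)|}\sum_{S\in\zeta_t(X)}G(S)$, and the reweighting $|\zeta_t(X)|/|\bzeta_t(X)|$ in \eqref{eq:loss-theta} cancels this factor exactly, recovering the full-set gradient.

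The $\lambda$-identity follows from the same template but is structurally simpler: $\lambda$ enters only through $g_\lambda$, and by the value-preservation observation $g_\lambda(f_\theta^{\bzeta_t}(X))=g_\lambda(f_\theta(X))$ in value, so each term's $\lambda$-gradient already coincides with the full-set term gradient $\partial_\lambda\ell(g_\lambda(f_\theta(X)),y)$; unbiasedness then reduces to the weight in \eqref{eq:loss-lambda} together with the same two-stage averaging. I expect the main obstacle to be making the quotient-rule step watertight: because the normalization $\mathrm{diag}(\bbf_\theta)^{-1}$ is nonlinear, the output gradient is \emph{a priori} not additive over subsets, and the decomposition into the clean subset sum $\sum_{S}G(S)$ works only because $\sg$ leaves the denominator's \emph{value} equal to the full-set constant $\bbf_\theta(X)$ (and likewise the $\hf_\theta(X)$ factor in the correction term). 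I would therefore isolate this value-preservation as an explicit lemma before differentiating, and separately dispatch the routine justification for interchanging $\partial_\theta$ with the finite expectations.
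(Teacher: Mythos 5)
Your proposal is correct and takes essentially the same route as the paper's proof: the paper likewise rests on value preservation under \sg\ (so that $f_\theta^{\bzeta_t}(\bX_i)=f_\theta(\bX_i)$ and the chain-rule coefficients $1/\bbf_\theta(\bX_i)_j$ and $\hf_\theta(\bX_i)_j$ are full-set constants that pull out of the inner expectation), expands $\partial L_{t,1}/\partial\theta$ by the quotient rule into per-subset contributions that \sg\ restricts to $\bS\in\bzeta_t(\bX_i)$, and then uses $\EE_{\bS_l}\bigl[\partial\hf_\theta(\bS_l)_j/\partial\theta\bigr]=\tfrac{1}{|\zeta_t(\bX_i)|}\sum_{S\in\zeta_t(\bX_i)}\partial\hf_\theta(S)_j/\partial\theta$ under uniform sampling so that the $|\zeta_t(\bX_i)|/|\bzeta_t(\bX_i)|$ reweighting cancels, after which the outer expectation converts the mini-batch average $\tfrac1m\sum_i$ into the full average $\tfrac1n\sum_i$. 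The $\lambda$-half is also handled exactly as you sketch, with value preservation making each per-example $\lambda$-gradient equal to its full-set counterpart before the two-stage averaging.
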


Under mild conditions, Theorem \ref{thm:umbaised} implies that the updating $\theta$ with our method makes progress towards minimizing $L(\theta,\lambda)$. We formalize this statement in Appendix \ref{app:opt}.
\vspace{-0.1in}
\section{Experiments}
\begin{figure*}[t]
\centering
\vspace{-0.05in}
    \begin{subfigure}{0.5\textwidth}
        \centering
        \includegraphics[width=\linewidth]{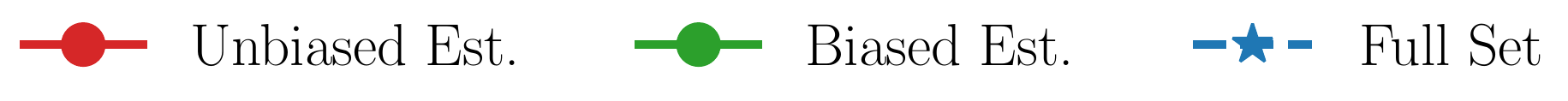}
        \vspace{-0.245in}
    \end{subfigure} \\
    \begin{subfigure}{0.3\textwidth}
        \centering
        \includegraphics[width=\linewidth]{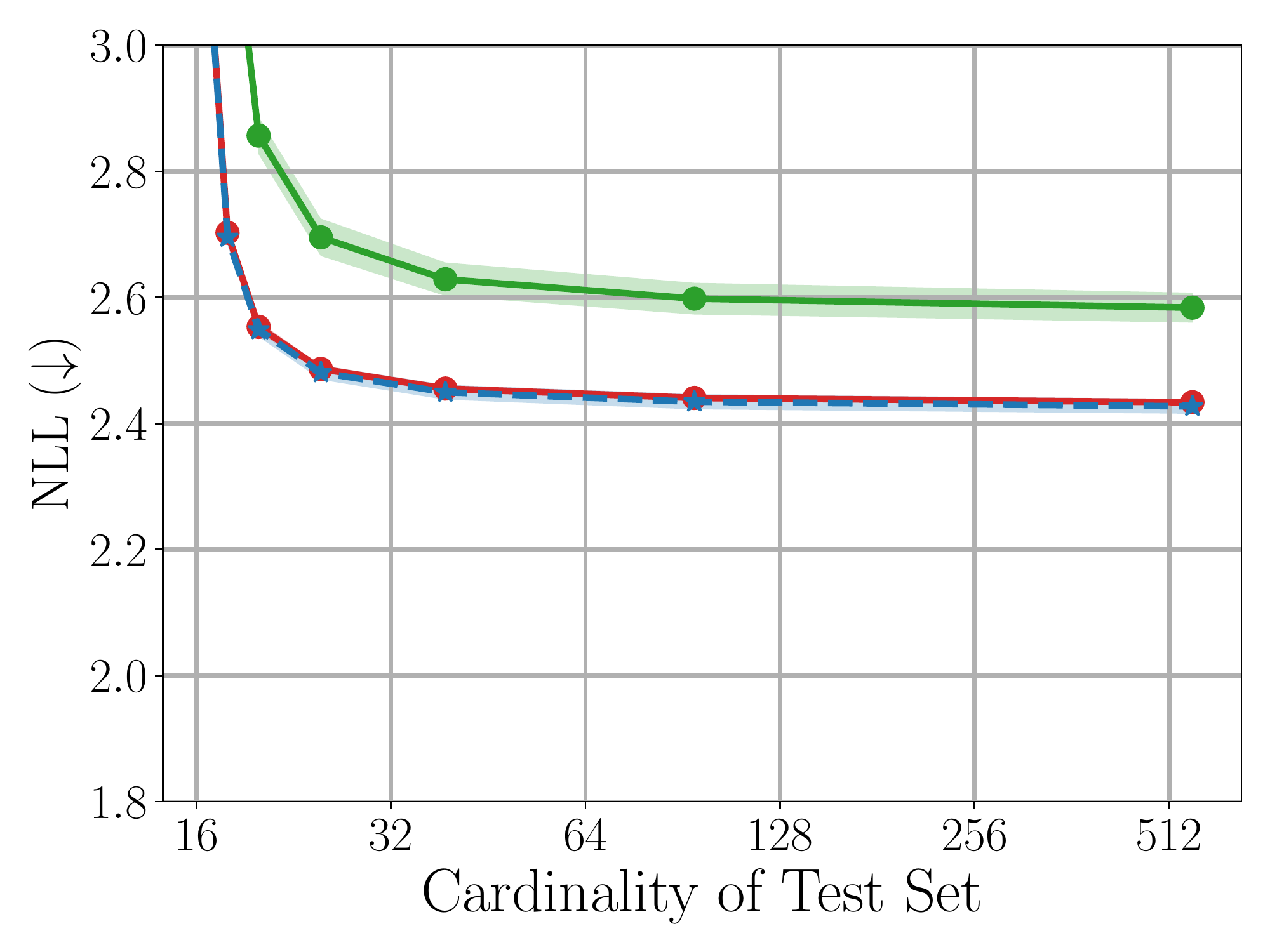}
        \captionsetup{justification=centering,margin=0.5cm}		
        \vspace{-0.25in}
        \caption{\small DeepSets}
        \label{mvn-deepset}	
    \end{subfigure}
    \begin{subfigure}{0.3\textwidth}
	\centering
	\includegraphics[width=\linewidth]{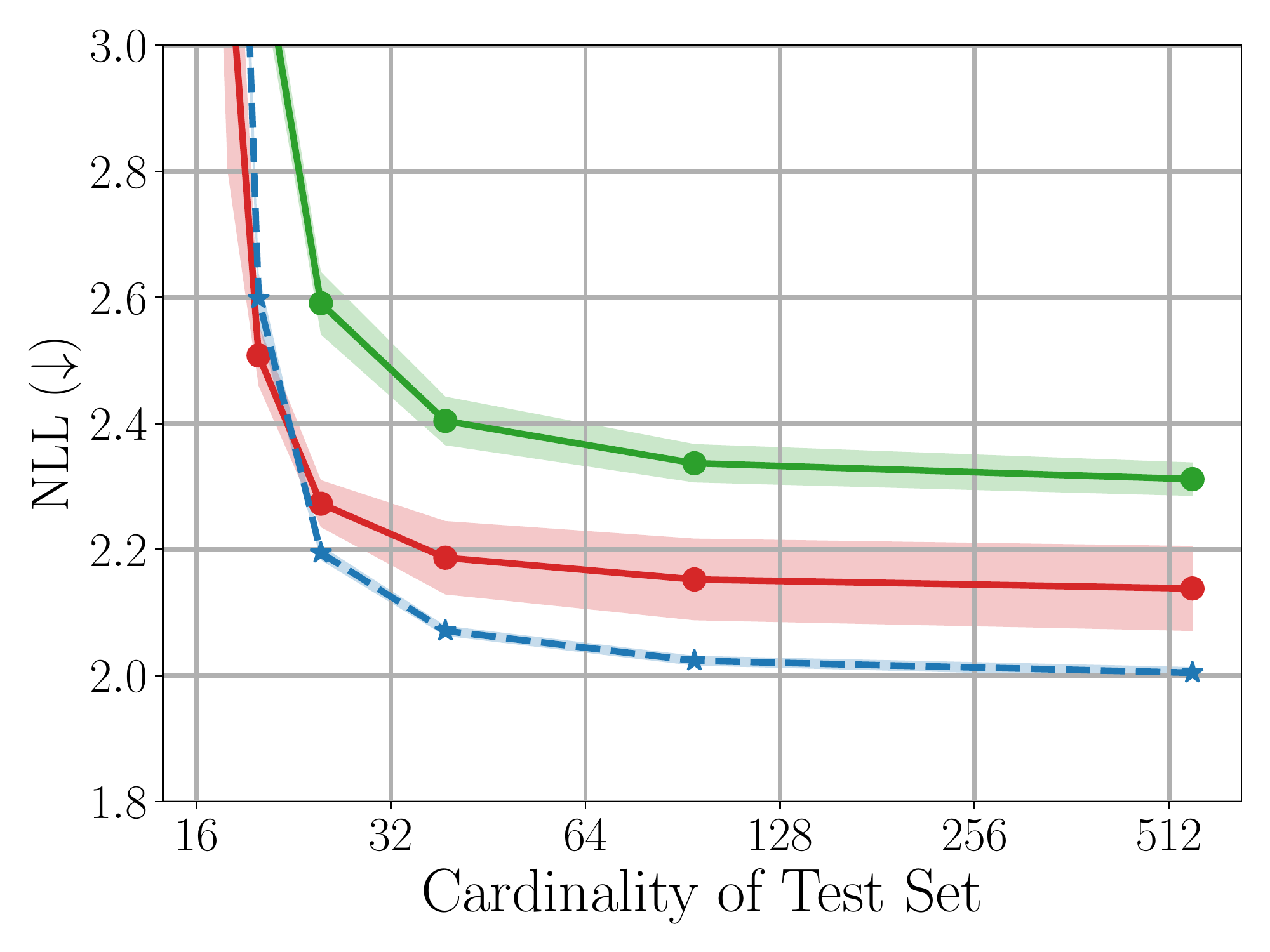}
	\captionsetup{justification=centering,margin=0.5cm}
	\vspace{-0.25in}
	\caption{\small SSE} 
	\label{mvn-sse}
    \end{subfigure}%
    \begin{subfigure}{0.3\textwidth}
	\centering
	\includegraphics[width=\linewidth]{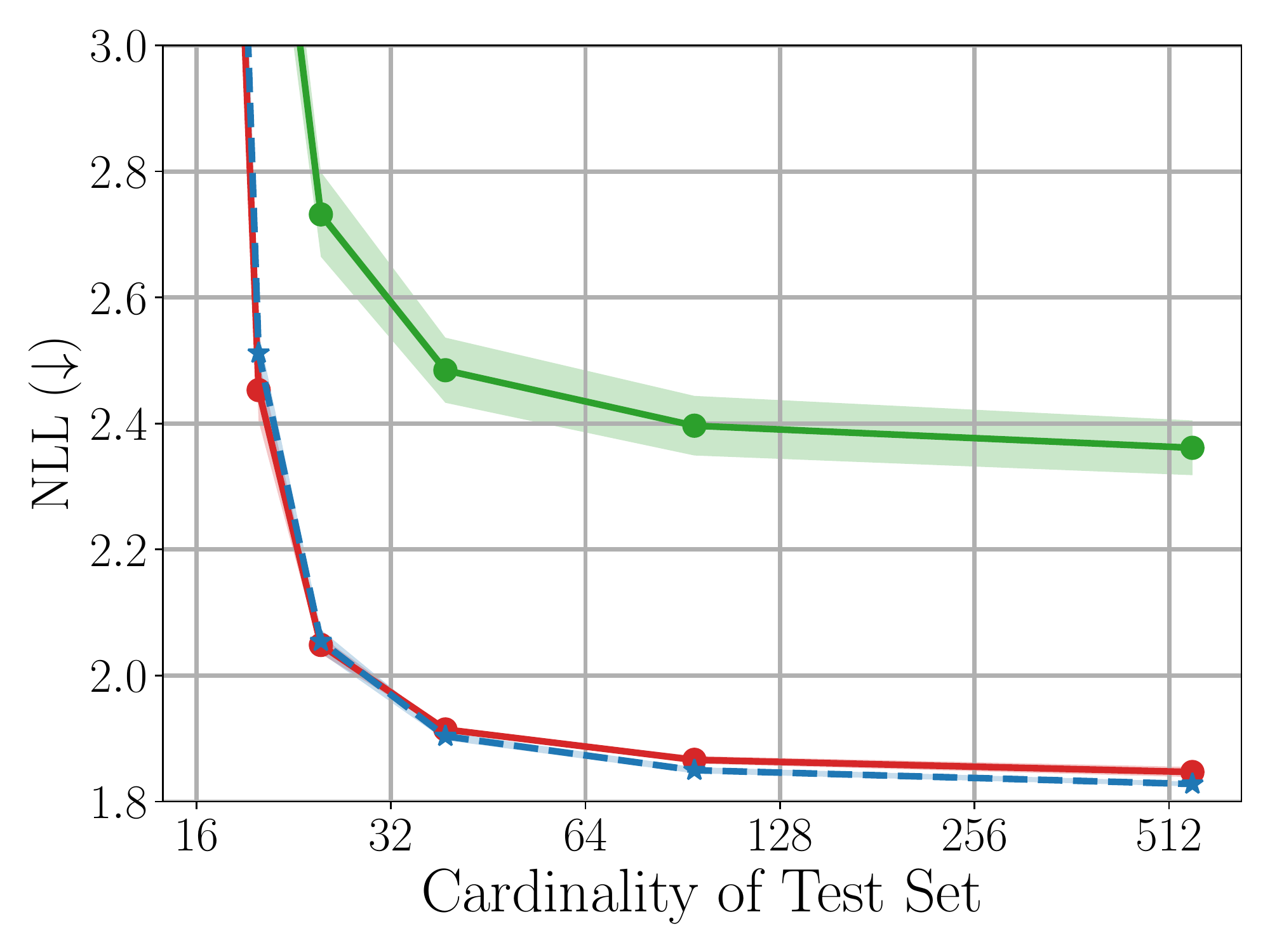}
	\captionsetup{justification=centering,margin=0.5cm}
	\vspace{-0.25in}
	\caption{\small UMBC + ST (Ours)}
	\label{mvn-umbc}
    \end{subfigure}%
    \vspace{-0.18in}
    \caption{\small Performance of 
    \textbf{(a)} DeepSets \textbf{(b)} SSE, and \textbf{(c)} UMBC with varying set sizes for amortized clustering on Mixtures of Gaussians. The unbiased estimate of the full set gradient outperforms the biased estimate, and is usually indistinguishable from the full set gradient. 
    }
    \label{mvn}
    \vspace{-0.22in}
\end{figure*}
\vspace{-0.05in}
\subsection{Amortized Clustering}
\vspace{-0.05in}
We consider amortized clustering on a dataset generated from Mixture of $K$ Gaussians (MoGs) (See \cref{app:mog-details} for dataset construction details). Given a set $X_i=\{\rvx_{i,j} \}_{j=1}^{N_i}$ sampled from a MoGs, the goal is to output the mixing coefficients, and Gaussian mean and variance, which minimizes the negative log-likelihood of the set as follows:
\begin{gather}\label{eq:mog}
    \{\pi_j(X_i), \mu_j(X_i), \Sigma_j(X_i)\}_{j=1}^K= f_\theta(X_i) \\
    \ell(f_\theta(X_i)) = -\sum_{l=1}^{N_i}\log \sum_{j=1}^K \pi_j(X_i) \mathcal{N}(\rvx_{i,l}; \Theta_j(X_i))
\end{gather}
where $\Theta_j(X_i) = (\mu_j(X_i), \Sigma_j(X_i))$ denotes a mean vector and a diagonal covariance matrix for $j$-th Gaussian, and $\pi_j(X_i)$ is $j$-th mixing coefficient. Note that there is no label $y_i$ since it is an unsupervised clustering problem. We optimize the parameters of the set encoder $\theta$ to minimize the loss over a batch, $L(\theta)=1/n\sum_{i=1}^n \ell(f_\theta(X_i))$.

\textbf{Setup.}
We evaluate training with the full set gradient vs. the unbiased estimation of the full set gradient. In this setting, for gradient computation, MBC models use a subset of 8 elements from a full set of 1024 elements. Non MBC models such as Set Transformer (ST), FSpool~\citep{fspool}, and Diff EM~\citep{diff-em} are also trained with the set of 8 elements. We compare our UMBC model against Deepsets, SSE, Set Transformer, FSPool, and Diff EM. Note that at test-time all non-MBC models process every 8 element subset from the full set independently and aggregate the representations with mean pooling.

\begin{table}[t]
\vspace{-0.1in}
    \centering
    \caption{\small Clustering: NLL, mem. usage and wall clock time.}
    \resizebox{0.8\linewidth}{!}{
    \begin{tabular}{lcccc}
    \toprule
    Model & MBC & NLL($\downarrow$) & Memory (Kb) & Time (Ms)\\
    \midrule
    DeepSets & \cmark &$2.43\pm$\tiny$0.004$ & $16$ & $0.46\pm$\tiny$0.07$ \\
    SSE & \cmark& $2.13\pm$\tiny$0.067$ & $61$ & $0.83\pm$\tiny$0.08$ \\
    SSE (Hierarchical) & \cmark& $2.38\pm$\tiny$0.057$ & $125$ & $0.84\pm$\tiny$0.06$ \\
    FSPool &\xmark & $3.52\pm$\tiny$0.192$ & $43$ & $0.79\pm$\tiny$0.08$ \\
    Diff EM &\xmark & $5.58\pm$\tiny$0.966$ & $476$ & $7.11\pm$\tiny$0.31$ \\
    Set Transformer (ST)&\xmark & $11.6\pm$\tiny$2.180$ & $225$ & $2.26\pm$\tiny$0.135$ \\
    \midrule
    \textbf{UMBC} + FSPool &\cmark& $2.01\pm$\tiny$0.027$ & $70$ & $1.18\pm$\tiny$0.10$ \\
    \textbf{UMBC} + Diff EM &\cmark & $2.13\pm$\tiny$0.084$ & $502$ & $8.57\pm$\tiny$0.33$ \\
    \textbf{UMBC} + ST &\cmark & $\textbf{1.84}\pm$\tiny$0.008$ & $100$ & $1.63\pm$\tiny$0.11$ \\
    \bottomrule
    \end{tabular}
    }
    \label{tab:mvn-memory}
\vspace{-0.25in}
\end{table}


\textbf{Results.} 
In~\cref{mvn}, interestingly, the unbiased estimation of the full set gradient (\textcolor{red}{\textbf{red}}) is almost indistinguishable from the full set gradient (\textcolor{celestialblue}{\textbf{blue}}) for DeepSets and UMBC, while there is a significant gap for SSE. In all cases, the unbiased estimation of the full set gradient outperforms training with the biased gradient approximation with only the set of 8 elements per random sample (\textcolor{darkpastelgreen}{\textbf{green}}), which is proposed by~\citet{mbc}. 
Lastly, as shown in~\cref{tab:mvn-memory}, we compare all models in terms of generalization performance (NLL), memory usage, and wall-clock time for processing a single subset. All non-MBC models show underperformance due to their violation of the MBC property. However, if we utilize `UMBC+' compositions, the composition becomes MBC with significantly improved performance and little added overhead for memory and time complexity. In contrast, a composition of pure MBC functions, the Hierarchical SSE degrades the performance of SSE. Notably, UMBC with Set Transformer outperforms all other models whereas Set Transformer alone achieves the worst NLL. These results verify expressive power of UMBC in conjuction with non-MBC models.

\begin{figure*}
\vspace{-0.05in}
\centering
    \begin{subfigure}{0.9\textwidth}
        \centering
        \includegraphics[width=\linewidth]{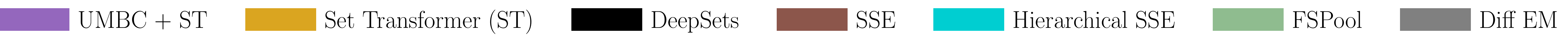}
        \vspace{-0.2in}
    \end{subfigure} \\
    \begin{subfigure}{0.3\textwidth}
		\centering
		\includegraphics[width=\linewidth]{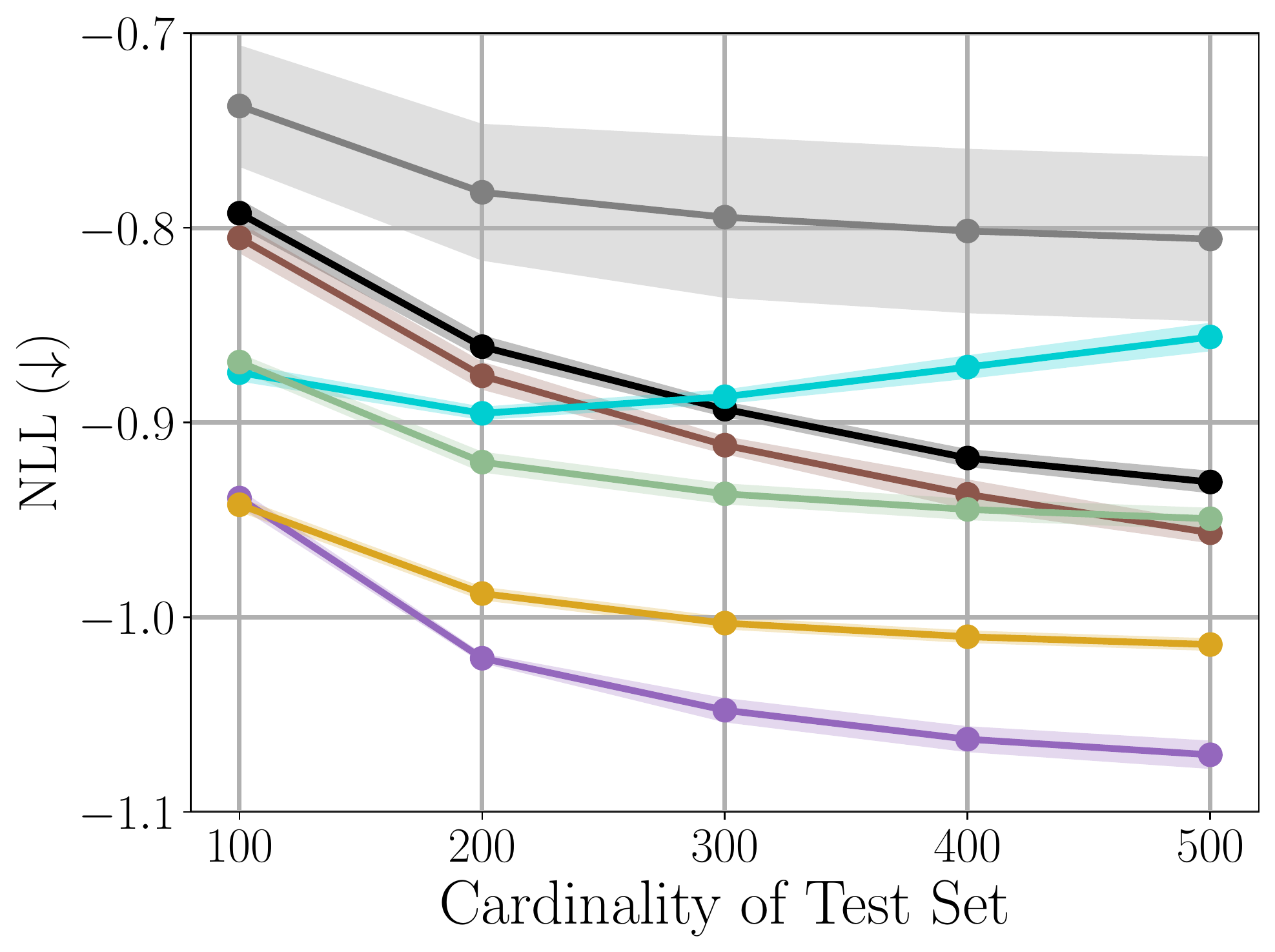}
		\captionsetup{justification=centering,margin=0.5cm}
		\vspace{-0.25in}
		\caption{\small}
		\label{fig:celeba-all-model}
	\end{subfigure}%
    \begin{subfigure}{0.3\textwidth}
    \centering
    \includegraphics[width=\linewidth]{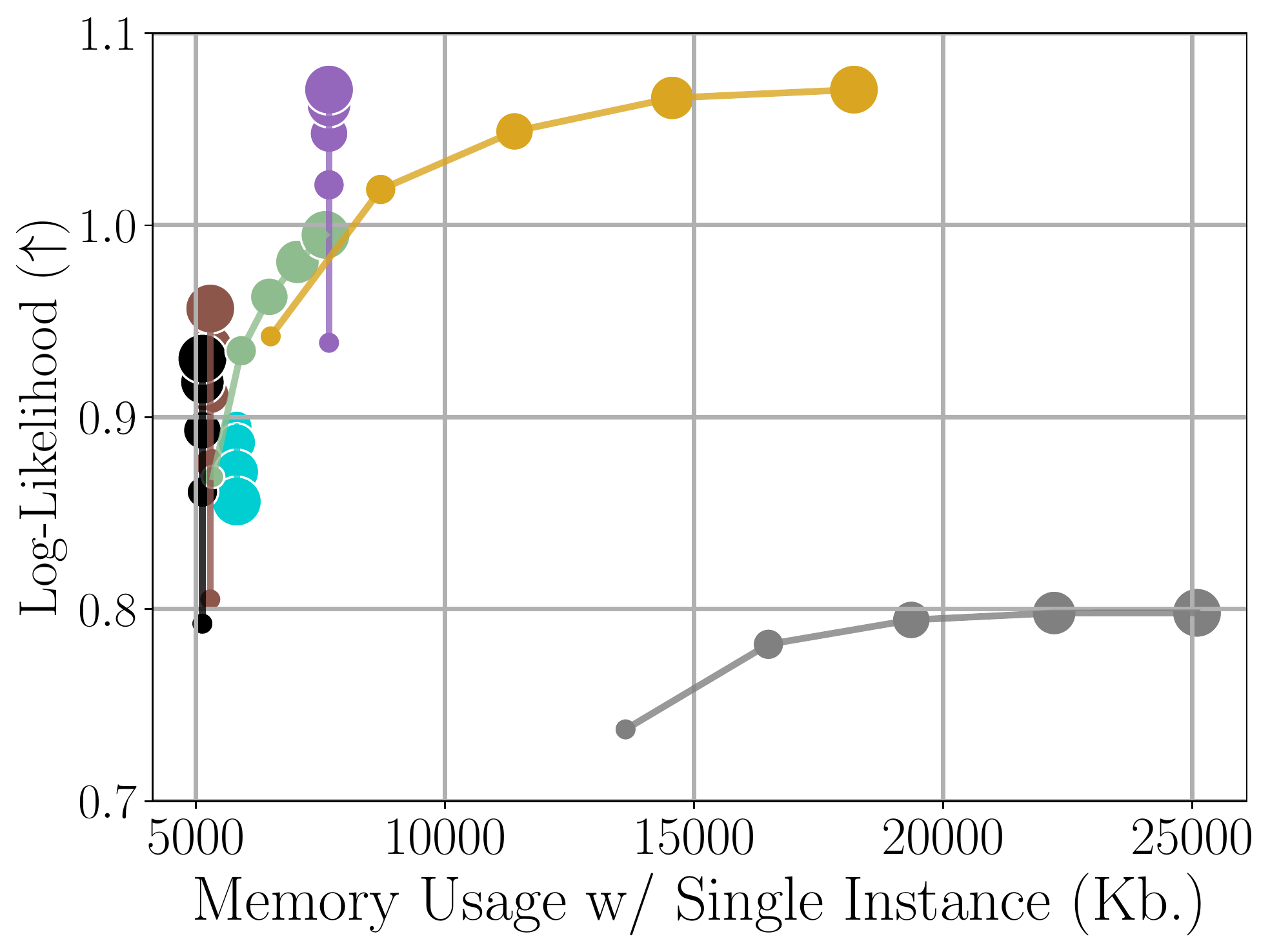}
    \captionsetup{justification=centering,margin=0.5cm}		
    \vspace{-0.25in}
    \caption{\small}
    \label{fig:celeba-mem}	
    \end{subfigure}
	\begin{subfigure}{0.3\textwidth}
		\centering
		\includegraphics[width=\linewidth]{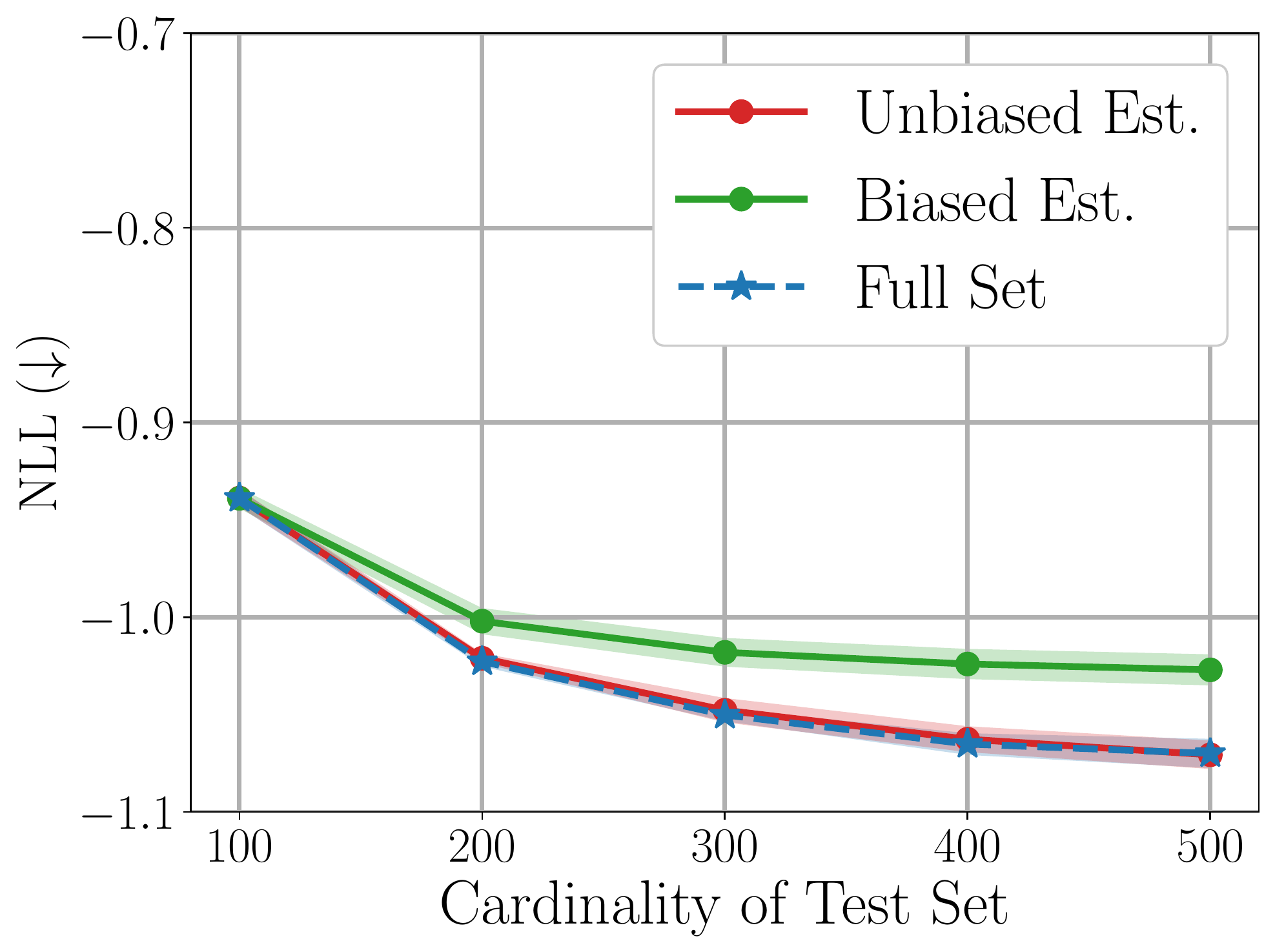}
		\captionsetup{justification=centering,margin=0.5cm}
		 \vspace{-0.25in}
		\caption{\small} 
		\label{fig:celeba-umbc}
	\end{subfigure}%
\vspace{-0.18in}	
 \caption{\small
	\textbf{(a)} Comparison of different models with varying sizes of sets. \textbf{(b)} Memory usage for models to process \emph{a single set} with different cardinalities  denoted as the size of the circles. MBC models with our unbiased full set gradient approximation can process any set size with a \textbf{constant memory} overhead.   \textbf{(c)} The effect of our algorithm compared to training with a small random subset and the full set.
	}
	\label{fig:celeba-exp}
\vspace{-0.2in}
\end{figure*}
\vspace{-0.1in}
\subsection{Image Completion}
\vspace{-0.05in}
In this task, we are given a set of $M$ RGB pixel values $y_i \in\RR^{3}\times  \cdots \times \RR^{3}$ of an image as well as the corresponding 2-dimensional coordinates $(\rvx_{i,1},\ldots, \rvx_{i,M}) $ normalized to be in $[0,1]^2\times  \cdots \times [0,1]^2$. The goal of the task is to predict RGB pixel values of all $M$ coordinates of the image. Specifically, given a context set $X_i=\{(\rvx_{i,c_j}, y_{i, c_j})\}_{j=1}^{N_i}$ processed by the set encoder, we obtain the set representation $f_\theta(X_i)\in\RR^{k\times d}$. Then, a decoder $g_\lambda: \RR^{k\times d} \times [0,1]^2 \times \cdots \times [0,1]^2 \to \RR^{6} \times \cdots \times \RR^{6}$ which utilizes both the set representation and the target coordinates, learns to predict a mean and variance for each coordinate of the image as $((\hat{\mu}_{i,j,l})_{l=1}^3, (\hat{\sigma}_{i,j,l} )_{l=1}^3)_{j=1}^M = g_\lambda(\rvz_i)$, where $\rvz_i= (f_\theta(X_i), \rvx_{i,1}, \ldots, \rvx_{i,M})$.
Then we compute the negative log-likelihood of the label set $y_i=((y_{i,j,l})_{l=1}^3 )_{j=1}^{M}$:
\begin{align}
    \ell(g_\lambda(\rvz_i), y_i) = -\sum_{j=1}^M\sum_{l=1}^3 \log \Ncal(y_{i,j,l}; \hat{\mu}_{i,j,l}, \hat{\sigma}_{i,j,l}),
\end{align}
where $\Ncal(\cdot;\mu, \sigma)$ is a univariate Gaussian probability density function.
Finally, we optimize $\theta$ and $\lambda$ to minimize the loss $L(\theta, \lambda)=1/n\sum_{i=1}^n\ell(g_\lambda(\rvz_i),y_i)$.

\textbf{Setup.} In our experiments, we impose the restriction that a set encoder is only allowed to compute the gradient with 100 elements of a context set $X_i$ during training and the model can only process 100 elements of the context set at once at test time. We train the set encoders in a Conditional Neural Process~\citep{cnp} framework, using $32\times 32$ images from the CelebA dataset~\citep{celeba}. We vary the cardinality of the context set size $N_i\in \{100, \ldots, 500\}$ and compare the negative log-likelihood (NLL) of each model. For baselines, we compare our UMBC set encoder against: Deepsets, SSE, Hierarchical SSE, Set Transformer (ST), FSPool, and Diff EM. For our UMBC, we use the softmax for $\sigma$ in~\eqref{eq:sse-module} and place the ST after the UMBC layer.

\textbf{Results.} First, as shown in \cref{fig:celeba-all-model}, our \textcolor{purple}{\textbf{UMBC + ST}} model outperforms all baselines, empirically verifying the expressive power of UMBC. SSE underperforms in terms of NLL due to its constrained architecture. Moreover, stacking hierarchical SSE layers degrades the performance of SSE for larger sets. Note that all MBC set encoders (\textbf{Deepsets}, \textcolor{brown}{\textbf{SSE}}, \textcolor{darkturquoise}{\textbf{Hierarchical SSE}} and \textcolor{purple}{\textbf{UMBC}}) in \cref{fig:celeba-all-model} are trained with our proposed unbiased gradient approximation in~\cref{algo}. On the other hand, we train non-MBC models such as \textcolor{goldenrod}{\textbf{Set Transformer (ST)}}, \textcolor{darkseagreen}{\textbf{FSPool}}, and \textcolor{coolgrey}{\textbf{Diff EM}} with a randomly sampled subset of 100 elements, and perform mean pooling over all subset representations at test-time to approximate an MBC model. 

Additionally, \cref{fig:celeba-mem} shows GPU memory usage for each model while processing sets of varying cardinalities without a memory constraint. The marker size is proportional to the set cardinality. Notably, all four MBC models incur a \emph{constant memory} overhead to process any set size, as we can apply $\sg$ to most of the subset, and compute an unbiased gradient estimate with a fixed sized subset (100). However, memory overhead for all non-MBC models is a function of set size. Thus, Set Transformer uses more than twice the memory of UMBC to achieve a similar log-likelihood on a set of 500 elements.

Lastly, in~\cref{fig:celeba-umbc}, we show how our proposed unbiased training algorithm (\textcolor{red}{\textbf{red}}) improves the generalization performance of UMBC models compared to training with a limited subset of 100 elements (\textcolor{darkpastelgreen}{\textbf{green}}). Notably, the performance of our algorithm is indistinguishable from that of training models with the full set gradient (\textcolor{celestialblue}{\textbf{blue}}). We present similar plots for Deepsets and SSE in~\cref{fig:celeba-deepset,fig:celeba-sse}. Across all models, our training algorithm significantly and consistently improves performance compared to training with random subsets of 100 elements, while requiring the same amount of memory. These empirical results verify both efficiency and effectiveness of our proposed method.
\begin{table}[t]
\vspace{-0.1in}
    \centering
    \caption{\small Micro F1 score and  memory usage on Quadro RTX8000 for long document classification with inverted EURLEX dataset.}
    \resizebox{0.8\linewidth}{!}{
    \begin{tabular}{lccc}
        \toprule
        Model & F1 & MBC  & Memory (MB)\\
        \midrule
        Longformer & $56.47$ \tiny$\pm0.43$ & \xmark & $25\text{,}185$\\
        ToBERT & $67.11$ \tiny$\pm0.87$ & \xmark & $38\text{,}563$\\
        \midrule
        DeepSets w/ 100 &$59.97$\tiny$\pm0.59$ & \cmark & $1\text{,}295$\\
        DeepSets w/ full &$60.82$\tiny$\pm0.58$ & \cmark &$7\text{,}317$\\
        \midrule
        SSE w/ 100 & $67.60$\tiny$\pm0.17$ & \cmark & $1\text{,}319$\\
        SSE w/ full & $67.91$\tiny$\pm0.33$ & \cmark & $6\text{,}799$\\
        \midrule
        \textbf{UMBC} + BERT w/ 100  & $\textbf{70.48}$\tiny$\pm\textbf{0.23}$ &\cmark & $4\text{,}909$\\
        \textbf{UMBC} + BERT w/ full &  $\textbf{70.23}$\tiny$\pm\textbf{0.84}$ &\cmark & $11\text{,}497$\\
        \bottomrule
    \end{tabular}
    }
    \label{tab:text-exp}
\vspace{-0.25in}
\end{table}
\vspace{-0.1in}
\subsection{Long Document Classification}
\vspace{-0.05in}
In this task, we are given a long document $X_i = (\rvx_{i,1}, \ldots, \rvx_{i,N_i})$ consisting of an average of $707.99$ words. The goal of this task is to predict a binary multi-label $y_i=(y_{i,1}, \ldots, y_{i,c})\in\{0,1\}^{c}$ of the document, where $c$ is the number of classes. We ignore the order of words and consider the document as a multiset of words, \ie, a set allowing duplicate elements. Specifically, given a training dataset $((X_i, y_i))_{i=1}^n$, we process each set $X_i$ with the set encoder to obtain the set representation $f_\theta(X_i)\in\RR^{k\times d}$. We then use a decoder  $g_\lambda:\RR^{k\times d}\to\RR^c$ to output the probability of each class and compute the cross entropy loss:
\begin{equation}
\Scale[0.9]{
    \displaystyle{\ell(\rvz_i, y_i) = -\sum_{j=1}^c\left(y_{i,j}\log \tilde{\sigma}(z_{i,j})+ (1-y_{i,j})\log(1-\tilde{\sigma}(z_{i,j})) \right)}}
\label{eq:bce-loss}
\end{equation}
where $\rvz_i = (z_{i,1}, \ldots, z_{i,c}) = (g_\lambda \circ f_\theta)(X_i)$ and $\tilde{\sigma}$ denotes the sigmoid function. Finally we optimize $\theta$ and $\lambda$ to minimize the loss $L(\theta, \lambda)=1/n\sum_{i=1}^n\ell(\rvz_i, y_i)$.

\vspace{-0.05in}
\textbf{Setup.}
All models are trained on the inverted EURLEX dataset~\citep{eurlex} consisting of long legal documents divided into sections. The order of sections are inverted following prior work~\citep{long-text}. To predict a label, we give the whole document to the model without any truncation. We compare the micro F1 of each model. 

We compare UMBC to Deepsets, SSE, ToBERT~\citep{tobert}, and Longformer~\citep{longformer}.
For Deepsets and SSE, we use the pre-trained word embedding from BERT~\citep{bert} without positional encoding and 2 layer fully connected (FC) networks for feature extractor $\phi$. We use another 3 layer FC network for the decoder. For UMBC, we use the same feature extractor as  SSE but instead use the pre-trained BERT as a decoder, with a randomly initialized linear classifier. We remove the positional encoding of BERT for UMBC to ignore word order. For all the MBC models, we train them both with full set denoted as ``w/ full" and with our gradient approximation method on a subset of 100 elements denoted as ``w/ 100".

\begin{figure}[t]
    \centering
    \includegraphics[width=0.55\linewidth]{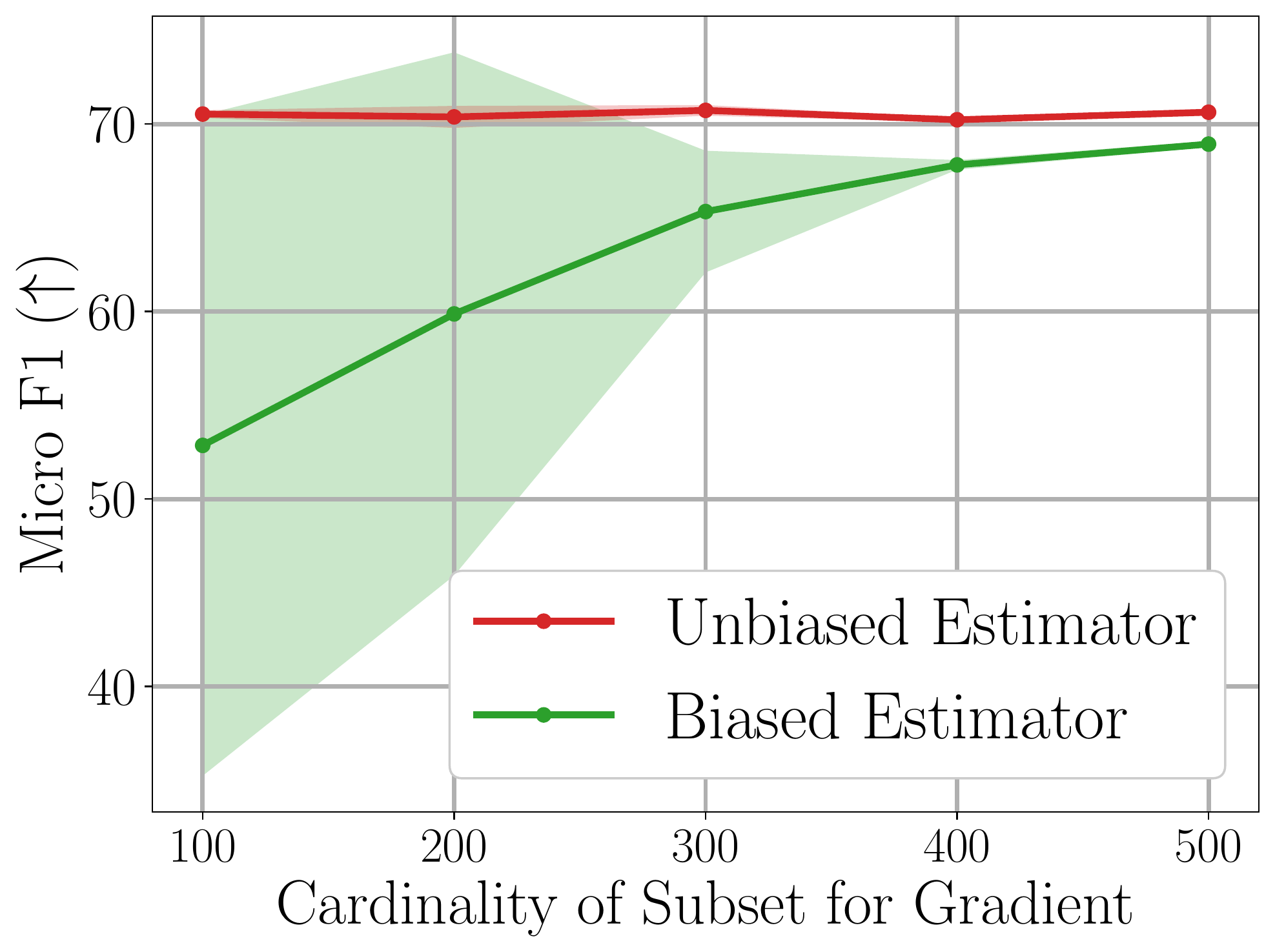}
    \vspace{-0.18in}
    \caption{\small F1 score with varying the size of subset for gradient computation for our method and random sampling.}
    \label{fig:umbc_text}
\vspace{-0.25in}
\end{figure}
\vspace{-0.05in}
\textbf{Results.}
As shown in Table~\ref{tab:text-exp}, our proposed UMBC outperforms all baselines including non-MBC models --- Longformer and ToBERT which require excessive amounts of GPU memory for training models with long sequences. This result again verifies the expressive power of UMBC with BERT (a non-MBC model) for long document classification. Moreover, with significantly less GPU memory, all MBC models (Deepsets, SSE, and UMBC) trained with our unbiased gradient approximation using a subset of 100 elements, achieve similar performance to the models trained with full set.
Lastly, in Figure~\ref{fig:umbc_text}, we plot the micro F1 score as a function of the cardinality of the subset used for gradient computation when training the UMBC model. Our proposed unbiased gradient approximation (\textcolor{red}{\textbf{red}}) shows consistent performance for all subset cardinalities. In contrast, training the model with a small random subset (\textcolor{darkpastelgreen}{\textbf{green}}) is unstable, resulting in underperformance and higher F1 variance.


\begin{table}[t]
    \centering
    \caption{Acc. and AUC of each model on the Camelyon16 dataset.}
    \resizebox{0.9\linewidth}{!}{
    \begin{tabular}{lccc|cc}
        \toprule
        Model & MBC & Accuracy & AUROC & Accuracy & AUROC \\
        \midrule
        \multicolumn{2}{l}{} & \multicolumn{2}{c}{Pretrain} & \multicolumn{2}{c}{MBC Finetune} \\
        \midrule
        DS-MIL & \xmark & $86.36$\tiny$\pm 0.88$ & $0.866$\tiny$\pm0.00$ & - & - \\
        AB-MIL & \xmark & $86.82$\tiny$\pm0.00$ & \underline{$0.884$\tiny$\pm0.01$} & - & - \\
        \midrule
        DeepSets & \cmark & $82.02$\tiny$\pm0.65$ & $0.848$\tiny$\pm0.01$ & $82.02$\tiny$\pm0.65$ & $0.848$\tiny$\pm0.01$ \\
        SSE & \cmark & $74.73$\tiny$\pm1.04$ & $0.748$\tiny$\pm0.03$ & $74.57$\tiny$\pm1.27$ & $0.755$\tiny$\pm0.04$ \\
        \textbf{UMBC} + ST & \cmark & \underline{$87.91$\tiny$\pm1.41$} & $0.874$\tiny$\pm0.01$ & $\mathbf{88.84}$\tiny$\pm\mathbf{0.88}$ & $\mathbf{0.892}$\tiny$\mathbf{\pm0.01}$ \\
        \bottomrule
    \end{tabular}
    }
    \label{tab:camelyon}
\vspace{-0.1in}
\end{table}

\vspace{-0.1in}
\subsection{Multiple Instance Learning (MIL)}\label{sec:mil}
\vspace{-0.05in}
In MIL, we are given a `bag' of instances with a corresponding bag label, but no labels for each instance within the bag. Labels should not depend on the order of the instances, \ie, MIL can be recast as a set classification problem. Specifically, given a set $X_i=\{\rvx_{i,j} \}_{j=1}^{N_i}$, the goal is to predict its binary label $y_i\in\{0,1\}$. For this task, we obtain two streams of set representations and compute the cross entropy loss from the decoder $g_\lambda: \RR^{k\times d}\to \RR$ output as:
\begin{gather}
     \rvz_{i,1} = \max\{w^\top \phi(\rvx_{i,j}) + b \}_{j=1}^{N_i}, \quad \rvz_{i,2} = f_\theta(X_i) \\
     \Lcal_i = \frac{1}{2}\left(\ell(\rvz_{i,1}, y_i) + \ell(g_\lambda(\rvz_{i,2}), y_i)\right),
\end{gather}
where $w\in \RR^{d_h}$ and $b\in \RR$ are parameters and $\ell$ is the cross entropy loss described in~\eqref{eq:bce-loss} with $c=1$. We optimize all parameters $\theta, \lambda, w,\text{and } b$ to minimize the loss $1/n\sum_{i=1}^n\Lcal_i$. At test time, we predict a label $y_*$ for a set $X_*$ as:
\begin{gather}
    p_* = \frac{1}{2}\left(\tilde{\sigma}(\rvz_{*,1}) + \tilde{\sigma}(g_\lambda(\rvz_{*,2}) \right), \:  y_* = \one\{p_*\geq \tau \},
\end{gather}
where  $\rvz_{*,1}=\max\{w^\top\phi(\rvx):\rvx\in X_* \}$, $\rvz_{*,2}=f_\theta(X_*)$, $\tilde \sigma$ is the sigmoid function, $\one$ is indicator function and $0<\tau<1$ is threshold tuned on the validation set.

\textbf{Setup.}
We evaluate all models on the Camelyon16 Whole Slide Image cancer detection dataset~\citep{camelyon16}. Each instance consists of a high resolution image of tissue from a medical scan which is pre-processed into $256\times256$ patches of RGB pixels. After pre-processing, the average number of patches in a single set is over 9,300 (7.3GB), making each input roughly equivalent to processing \emph{1\% of ImageNet1k}~\citep{imagenet}. The largest input in the training set contains 32,382 patches (25.4 GB). We utilize a ResNet18~\citep{resnet} which is pretrained on Camelyon16~\citep{ds-mil} via SimCLR~\citep{simclr} as a backbone feature extractor whose weights can be downloaded from \href{https://github.com/binli123/dsmil-wsi}{this repository}\footnote{\href{https://github.com/binli123/dsmil-wsi}{https://github.com/binli123/dsmil-wsi}}. Our goal is to first pretrain MBC set encoders on the extracted features, and then use the unbiased estimation of the full set gradient to fine-tune the feature extractor on the full input sets. We evaluate the performance of UMBC against non-MBC MIL baselines: DS-MIL~\citep{ds-mil} and AB-MIL~\citep{ab-mil}, as well as MBC baselines: DeepSets and SSE.

\textbf{Results.} As shown in~\cref{tab:camelyon}, our UMBC model achieves the best accuracy and competitive AUROC score. Note that SSE shows the worst performance due to its constrained architecture, which even underperforms DeepSets in this task. These empirical results again verify the expressive power of our UMBC model. Moreover, we can further improve the performance of UMBC via fine-tuning the backbone network, ResNet18, which is only feasible as a consequence of our unbiased full set gradient approximation which incurs constant memory overhead. However, it is not possible for the non-MBC models to fine-tune with the ResNet18 since it is computationally prohibitive to compute the gradient of the ResNet18 with sets consisting of tens of thousands of patches with $256\times 256$ resolution.

\vspace{-0.1in}
\subsection{Ablation Study}
\vspace{-0.05in}
To validate effectiveness of activation functions $\sigma$ for attention in~\eqref{eq:sse-module}, we train UMBC + Set Transformer with different activation functions listed in~\cref{tab:attn-acts} for the amortized clustering and MIL pretraining tasks. As shown in~\cref{mvn-attn,mil-attn}, softmax attention outperforms all the other activation functions whereas the slot-sigmoid used for attention in SSE underperforms. This experiment highlights the importance of choosing the proper activation function for attention, which is enabled by our UMBC framework.
\begin{figure}
  \begin{minipage}[b]{0.54\linewidth}
    \centering
    \includegraphics[width=0.99\linewidth]{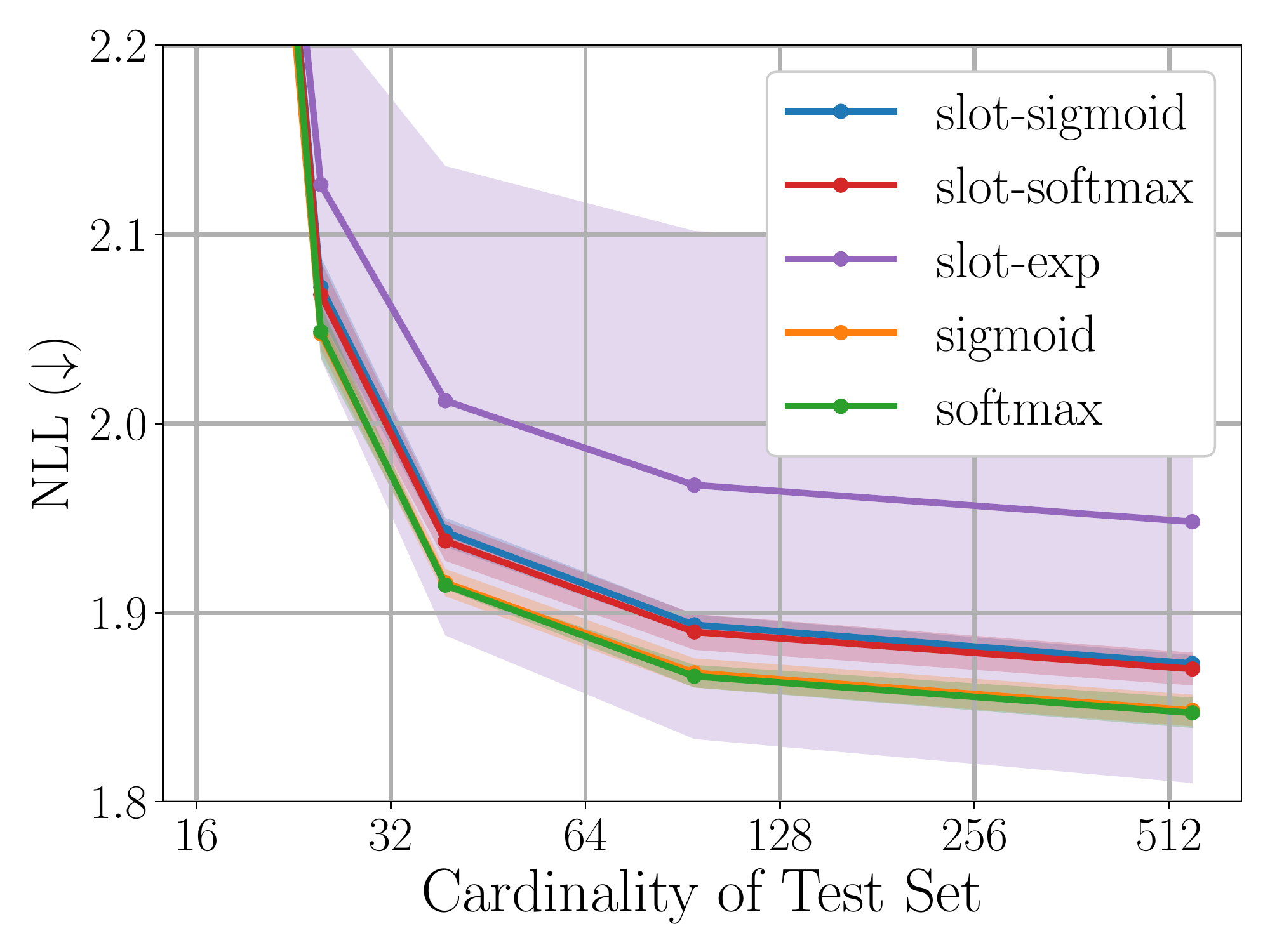}
  \vspace{-0.35in}
  \caption{$\sigma$ for amortized clustering}
  \label{mvn-attn}
  \end{minipage}
  \begin{minipage}[b]{0.45\linewidth}
    \centering
    \resizebox{\linewidth}{!}{
\begin{tabular}[b]{ lcc }
  \toprule
  Activation & Acc. & AUC \\
  \midrule
  slot-sigmoid & $83.72$\tiny$\pm1.45$ & $0.846$\tiny$\pm0.01$ \\
  slot-exp & $83.26$\tiny$\pm0.42$ & $0.850$\tiny$\pm0.01$ \\
  sigmoid & $81.71$\tiny$\pm4.09$ & $0.847$\tiny$\pm0.03$ \\
  slot-softmax & $84.81$\tiny$\pm0.04$ & $0.848$\tiny$\pm0.01$ \\
  softmax & $\textbf{87.91}$\tiny$\pm\textbf{1.41}$ & $\textbf{0.874}$\tiny$\pm\textbf{0.01}$ \\
  \bottomrule
\end{tabular}
}
\vspace{-0.2in}
\captionof{table}{$\sigma$ for MIL}
\label{mil-attn}
\end{minipage}
\vspace{-0.3in}
\end{figure}

%
%
%
%
\section{Limitations and Future Work}

One potential limitation of our method is a higher time complexity needed for our proposed unbiased gradient estimation during mini-batch training. If $n$ represents the size of a single subset, and $N$ represents the size of the whole set, then the naive sampling strategy of \citet{mbc} has a time complexity of $\mathcal{O}(nk)$ while our full set gradient approximation has a complexity of $\mathcal{O}(Nk)$ during training since we must process the full set. However, we note some things we gain in exchange for this higher time complexity below.

Our unbiased gradient approximation can achieve higher performance than the biased sampling of a single subset as denoted in our experiments (specifically, see~\cref{mvn,fig:celeba-umbc,fig:umbc_text}). Additionally, due to the stop gradient operation in~\cref{eq:stop-grad}, UMBC achieves a constant memory overhead for any training set size. Our experiment in~\cref{fig:celeba-mem} shows a constant memory overhead for SSE and DeepSets only because we apply our unbiased gradient approximation to those models in that experiment. The original models as they were presented in the original works do not have a constant memory overhead. As a result, our method can process huge sets during training, and practically any GPU size can be accommodated by adjusting the size of the gradient set. For example, the average set in the experiment on Camelyon16 contains 9329 patches (7.3 GB), while the largest input in the training set contains 32,382 patches (25.46 GB). Even though the set sizes are large, models can be trained on all inputs using a single 12GB GPU due to the constant memory overhead. 

Another potential limitation is that UMBC (and SSE) use a cross attention layer with parameterized slots in order to achieve an MBC model. However, the fixed parameters, which are independent to the input set, can be seen as a type of bottleneck in the attention layer which is not present in traditional self-attention. Therefore we look forward to seeing future work which may find ways to make the slot parameters depend on the input set, which may increase overall model expressivity. 
  
\section{Conclusion}
In order to overcome the limited expressive power and training scalability of existing MBC set functions, such as DeepSets and SSE, we have proposed Universal MBC set functions that allow mixing both MBC and non-MBC components to leverage a broader range of architectures which increases model expressivity while universally maintaining the MBC property. Additionally, we generalized MBC attention activation functions, showing that many functions, including the softmax, are MBC. Furthermore, for training scalability, we have proposed an unbiased approximation to the full set gradient with a constant memory overhead for processing a set of any size. Lastly, we have performed extensive experiments to verify the efficiency and efficacy of our scalable set encoding framework, and theoretically shown that UMBC is a universal approximator of continuous permutation invariant functions and converges to stationary points of the total loss with the full set.

\section*{Acknowledgements}
This work was supported by Institute of Information \& communications Technology Planning \& Evaluation (IITP) grant funded by the Korea government(MSIT)  (No.2019-0-00075, Artificial Intelligence Graduate School Program(KAIST)), 
the Engineering Research Center Program through the National Research Foundation of Korea (NRF) funded by the Korean Government MSIT (NRF-2018R1A5A1059921), 
Institute of Information \& communications Technology Planning \& Evaluation (IITP) grant funded by the Korea government(MSIT) (No. 2021-0-02068, Artificial Intelligence Innovation Hub), 
Institute of Information \& communications Technology Planning \& Evaluation (IITP) grant funded by the Korea government(MSIT) (No. 2022-0-00184, Development and Study of AI Technologies to Inexpensively Conform to Evolving Policy on Ethics), 
Institute of Information \& communications Technology Planning \& Evaluation (IITP) grant funded by the Korea government(MSIT) (No.2022-0-00713), and
Samsung Electronics (IO201214-08145-01)


\bibliography{reference}

\begin{thebibliography}{35}
\providecommand{\natexlab}[1]{#1}
\providecommand{\url}[1]{\texttt{#1}}
\expandafter\ifx\csname urlstyle\endcsname\relax
  \providecommand{\doi}[1]{doi: #1}\else
  \providecommand{\doi}{doi: \begingroup \urlstyle{rm}\Url}\fi

\bibitem[Ba et~al.(2016)Ba, Kiros, and Hinton]{layer-norm}
Ba, J.~L., Kiros, J.~R., and Hinton, G.~E.
\newblock Layer normalization.
\newblock \emph{arXiv preprint arXiv:1607.06450}, 2016.

\bibitem[Bejnordi et~al.(2017)Bejnordi, Veta, Van~Diest, Van~Ginneken,
  Karssemeijer, Litjens, Van Der~Laak, Hermsen, Manson, Balkenhol,
  et~al.]{camelyon16}
Bejnordi, B.~E., Veta, M., Van~Diest, P.~J., Van~Ginneken, B., Karssemeijer,
  N., Litjens, G., Van Der~Laak, J.~A., Hermsen, M., Manson, Q.~F., Balkenhol,
  M., et~al.
\newblock Diagnostic assessment of deep learning algorithms for detection of
  lymph node metastases in women with breast cancer.
\newblock \emph{Jama}, 318\penalty0 (22):\penalty0 2199--2210, 2017.

\bibitem[Beltagy et~al.(2020)Beltagy, Peters, and Cohan]{longformer}
Beltagy, I., Peters, M.~E., and Cohan, A.
\newblock Longformer: The long-document transformer.
\newblock \emph{arXiv preprint arXiv:2004.05150}, 2020.

\bibitem[Bruno et~al.(2021)Bruno, Willette, Lee, and Hwang]{mbc}
Bruno, A., Willette, J., Lee, J., and Hwang, S.~J.
\newblock Mini-batch consistent slot set encoder for scalable set encoding.
\newblock \emph{Advances in Neural Information Processing Systems},
  34:\penalty0 21365--21374, 2021.

\bibitem[Chalkidis et~al.(2019)Chalkidis, Fergadiotis, Malakasiotis, and
  Androutsopoulos]{eurlex}
Chalkidis, I., Fergadiotis, E., Malakasiotis, P., and Androutsopoulos, I.
\newblock Large-scale multi-label text classification on {EU} legislation.
\newblock In \emph{Proceedings of the 57th Annual Meeting of the Association
  for Computational Linguistics}, pp.\  6314--6322. Association for
  Computational Linguistics, 2019.

\bibitem[Chen et~al.(2020)Chen, Kornblith, Norouzi, and Hinton]{simclr}
Chen, T., Kornblith, S., Norouzi, M., and Hinton, G.
\newblock A simple framework for contrastive learning of visual
  representations.
\newblock In \emph{International conference on machine learning}, pp.\
  1597--1607. PMLR, 2020.

\bibitem[Deng et~al.(2009)Deng, Dong, Socher, Li, Li, and Fei-Fei]{imagenet}
Deng, J., Dong, W., Socher, R., Li, L.-J., Li, K., and Fei-Fei, L.
\newblock Imagenet: A large-scale hierarchical image database.
\newblock In \emph{2009 IEEE conference on computer vision and pattern
  recognition}, pp.\  248--255. Ieee, 2009.

\bibitem[Devlin et~al.(2019)Devlin, Chang, Lee, and Toutanova]{bert}
Devlin, J., Chang, M.-W., Lee, K., and Toutanova, K.
\newblock {BERT}: Pre-training of deep bidirectional transformers for language
  understanding.
\newblock In \emph{Proceedings of NAACL-HLT}, pp.\  4171--4186, 2019.

\bibitem[Fehrman et~al.(2020)Fehrman, Gess, and
  Jentzen]{fehrman2020convergence}
Fehrman, B., Gess, B., and Jentzen, A.
\newblock Convergence rates for the stochastic gradient descent method for
  non-convex objective functions.
\newblock \emph{Journal of Machine Learning Research}, 21:\penalty0 136, 2020.

\bibitem[Garnelo et~al.(2018)Garnelo, Rosenbaum, Maddison, Ramalho, Saxton,
  Shanahan, Teh, Rezende, and Eslami]{cnp}
Garnelo, M., Rosenbaum, D., Maddison, C., Ramalho, T., Saxton, D., Shanahan,
  M., Teh, Y.~W., Rezende, D., and Eslami, S.~A.
\newblock Conditional neural processes.
\newblock In \emph{International Conference on Machine Learning}, pp.\
  1704--1713. PMLR, 2018.

\bibitem[He et~al.(2016)He, Zhang, Ren, and Sun]{resnet}
He, K., Zhang, X., Ren, S., and Sun, J.
\newblock Deep residual learning for image recognition.
\newblock In \emph{Proceedings of the IEEE conference on computer vision and
  pattern recognition}, pp.\  770--778, 2016.

\bibitem[Ilse et~al.(2018)Ilse, Tomczak, and Welling]{ab-mil}
Ilse, M., Tomczak, J., and Welling, M.
\newblock Attention-based deep multiple instance learning.
\newblock In \emph{International conference on machine learning}, pp.\
  2127--2136. PMLR, 2018.

\bibitem[Jurafsky \& Martin(2008)Jurafsky and Martin]{jurafsky2008speech}
Jurafsky, D. and Martin, J.~H.
\newblock Speech and language processing: An introduction to speech
  recognition, computational linguistics and natural language processing.
\newblock \emph{Upper Saddle River, NJ: Prentice Hall}, 2008.

\bibitem[Kawaguchi et~al.(2022)Kawaguchi, Zhang, and
  Deng]{kawaguchi2022understanding}
Kawaguchi, K., Zhang, L., and Deng, Z.
\newblock Understanding dynamics of nonlinear representation learning and its
  application.
\newblock \emph{Neural Computation}, 34\penalty0 (4):\penalty0 991--1018, 2022.

\bibitem[Kim(2022)]{diff-em}
Kim, M.
\newblock Differentiable expectation-maximization for set representation
  learning.
\newblock In \emph{International Conference on Learning Representations}, 2022.

\bibitem[Kingma \& Ba(2015)Kingma and Ba]{adam}
Kingma, D.~P. and Ba, J.
\newblock Adam: {A} method for stochastic optimization.
\newblock In \emph{International Conference on Learning Representations}, 2015.

\bibitem[Kingma \& Welling(2013)Kingma and Welling]{vae}
Kingma, D.~P. and Welling, M.
\newblock Auto-encoding variational bayes.
\newblock \emph{arXiv preprint arXiv:1312.6114}, 2013.

\bibitem[Lee et~al.(2019)Lee, Lee, Kim, Kosiorek, Choi, and Teh]{set-trans}
Lee, J., Lee, Y., Kim, J., Kosiorek, A., Choi, S., and Teh, Y.~W.
\newblock Set transformer: A framework for attention-based
  permutation-invariant neural networks.
\newblock In \emph{International conference on machine learning}, pp.\
  3744--3753. PMLR, 2019.

\bibitem[Lee et~al.(2016)Lee, Simchowitz, Jordan, and Recht]{lee2016gradient}
Lee, J.~D., Simchowitz, M., Jordan, M.~I., and Recht, B.
\newblock Gradient descent only converges to minimizers.
\newblock In \emph{Conference on learning theory}, pp.\  1246--1257. PMLR,
  2016.

\bibitem[Li et~al.(2021)Li, Li, and Eliceiri]{ds-mil}
Li, B., Li, Y., and Eliceiri, K.~W.
\newblock Dual-stream multiple instance learning network for whole slide image
  classification with self-supervised contrastive learning.
\newblock In \emph{Proceedings of the IEEE/CVF conference on computer vision
  and pattern recognition}, pp.\  14318--14328, 2021.

\bibitem[Liu et~al.(2015)Liu, Luo, Wang, and Tang]{celeba}
Liu, Z., Luo, P., Wang, X., and Tang, X.
\newblock Deep learning face attributes in the wild.
\newblock In \emph{Proceedings of International Conference on Computer Vision
  (ICCV)}, December 2015.

\bibitem[Locatello et~al.(2020)Locatello, Weissenborn, Unterthiner, Mahendran,
  Heigold, Uszkoreit, Dosovitskiy, and Kipf]{slot-attn}
Locatello, F., Weissenborn, D., Unterthiner, T., Mahendran, A., Heigold, G.,
  Uszkoreit, J., Dosovitskiy, A., and Kipf, T.
\newblock Object-centric learning with slot attention.
\newblock \emph{Advances in Neural Information Processing Systems},
  33:\penalty0 11525--11538, 2020.

\bibitem[Loshchilov \& Hutter(2019)Loshchilov and Hutter]{adamw}
Loshchilov, I. and Hutter, F.
\newblock Decoupled weight decay regularization.
\newblock In \emph{International Conference on Learning Representations}, 2019.

\bibitem[Mertikopoulos et~al.(2020)Mertikopoulos, Hallak, Kavis, and
  Cevher]{mertikopoulos2020almost}
Mertikopoulos, P., Hallak, N., Kavis, A., and Cevher, V.
\newblock On the almost sure convergence of stochastic gradient descent in
  non-convex problems.
\newblock \emph{Advances in Neural Information Processing Systems},
  33:\penalty0 1117--1128, 2020.

\bibitem[Mialon et~al.(2021)Mialon, Chen, d'Aspremont, and
  Mairal]{optimal-transport-kernel-embedding}
Mialon, G., Chen, D., d'Aspremont, A., and Mairal, J.
\newblock A trainable optimal transport embedding for feature aggregation and
  its relationship to attention.
\newblock In \emph{International Conference on Learning Representations}, 2021.

\bibitem[Murphy et~al.(2019)Murphy, Srinivasan, Rao, and Ribeiro]{janossy}
Murphy, R.~L., Srinivasan, B., Rao, V., and Ribeiro, B.
\newblock Janossy pooling: Learning deep permutation-invariant functions for
  variable-size inputs.
\newblock In \emph{International Conference on Learning Representations}, 2019.

\bibitem[Pappagari et~al.(2019)Pappagari, Zelasko, Villalba, Carmiel, and
  Dehak]{tobert}
Pappagari, R., Zelasko, P., Villalba, J., Carmiel, Y., and Dehak, N.
\newblock Hierarchical transformers for long document classification.
\newblock In \emph{2019 IEEE Automatic Speech Recognition and Understanding
  Workshop (ASRU)}, pp.\  838--844. IEEE, 2019.

\bibitem[Park et~al.(2022)Park, Vyas, and Shah]{long-text}
Park, H., Vyas, Y., and Shah, K.
\newblock Efficient classification of long documents using transformers.
\newblock In \emph{Proceedings of the 60th Annual Meeting of the Association
  for Computational Linguistics (Volume 2: Short Papers)}, pp.\  702--709,
  2022.

\bibitem[Pinkus(1999)]{pinkus1999approximation}
Pinkus, A.
\newblock Approximation theory of the {MLP} model in neural networks.
\newblock \emph{Acta numerica}, 8:\penalty0 143--195, 1999.

\bibitem[Quellec et~al.(2017)Quellec, Cazuguel, Cochener, and
  Lamard]{mil-medical-img}
Quellec, G., Cazuguel, G., Cochener, B., and Lamard, M.
\newblock Multiple-instance learning for medical image and video analysis.
\newblock \emph{IEEE reviews in biomedical engineering}, 10:\penalty0 213--234,
  2017.

\bibitem[Rolnick \& Tegmark(2018)Rolnick and Tegmark]{poly-univ-approx}
Rolnick, D. and Tegmark, M.
\newblock The power of deeper networks for expressing natural functions.
\newblock In \emph{International Conference on Learning Representations}, 2018.

\bibitem[Vaswani et~al.(2017)Vaswani, Shazeer, Parmar, Uszkoreit, Jones, Gomez,
  Kaiser, and Polosukhin]{attention}
Vaswani, A., Shazeer, N., Parmar, N., Uszkoreit, J., Jones, L., Gomez, A.~N.,
  Kaiser, {\L}., and Polosukhin, I.
\newblock Attention is all you need.
\newblock \emph{Advances in neural information processing systems}, 30, 2017.

\bibitem[Wagstaff et~al.(2022)Wagstaff, Fuchs, Engelcke, Osborne, and
  Posner]{univ-approx}
Wagstaff, E., Fuchs, F.~B., Engelcke, M., Osborne, M.~A., and Posner, I.
\newblock Universal approximation of functions on sets.
\newblock \emph{Journal of Machine Learning Research}, 23\penalty0
  (151):\penalty0 1--56, 2022.

\bibitem[Zaheer et~al.(2017)Zaheer, Kottur, Ravanbakhsh, Poczos, Salakhutdinov,
  and Smola]{deepsets}
Zaheer, M., Kottur, S., Ravanbakhsh, S., Poczos, B., Salakhutdinov, R.~R., and
  Smola, A.~J.
\newblock Deep sets.
\newblock \emph{Advances in neural information processing systems}, 30, 2017.

\bibitem[Zhang et~al.(2020)Zhang, Hare, and Prügel-Bennett]{fspool}
Zhang, Y., Hare, J., and Prügel-Bennett, A.
\newblock Fspool: Learning set representations with featurewise sort pooling.
\newblock In \emph{International Conference on Learning Representations}, 2020.

\end{thebibliography}
\bibliographystyle{icml2023}

\newpage
\clearpage
\onecolumn
\appendix

\section{Proofs}\label{app:proofs}
\subsection{Proof of Theorem~\ref{thm:perm-inv}}\label{app:proof-theorem-perm-inv}
\begin{proof}
Let $\mathfrak{S}_N$ be a set of all permutations on $\{1, \ldots, N\}$ and let $\pi \in \mathfrak{S}_N$ be given. For a given set $X=[\rvx_1 \cdots \rvx_N]^\top\in\RR^{N\times d_x}$, and permutation $\pi$, we can construct a permutation matrix $P\in\mathbb{R}^{N\times N}$ such that 
$$
PX =  \begin{bmatrix}
    - \rvx_{\pi(1)}^\top - \\
    \vdots \\
    -\rvx_{\pi(N)}^\top-
\end{bmatrix}. 
$$
Since we apply the feature extractor $\phi$ independently to each element of the set $X$,
$$
\Phi(PX) =  \begin{bmatrix}
    - \phi(\rvx_{\pi(1)})^\top - \\
    \vdots \\
    -\phi(\rvx_{\pi(N)})^\top-
\end{bmatrix} = P\Phi(X). 
$$

With an elementwise, strictly positive activation function $\sigma$,
\begin{align*}
\sigma(\sqrt{d^{-1}}\cdot Q(\Phi(PX)W^K)^\top) &= \sigma(\sqrt{d^{-1}}\cdot Q(P\Phi(X)W^K)^\top )\\
&= \sigma(\sqrt{d^{-1}}\cdot Q(\Phi(X)W^K)^\top P^\top)\\
&= \sigma(\sqrt{d^{-1}}\cdot QK(X)^\top)P^\top \\
&= \hat{A} P^\top.
\end{align*}

For $p=1$, the un-normalized attention score with the permutation $\pi$ is 
$$
\nu_1(\hat{A}P^\top)= \begin{bmatrix}
    \hat{A}_{1, \pi(1)}/\sum_{i=1}^k\hat{A}_{i, \pi(1)} & \cdots &\hat{A}_{1,\pi(N)} / \sum_{i=1}^k \hat{A}_{i, \pi(N)} \\
    \vdots & \ddots & \vdots \\
    \hat{A}_{k, \pi(1)}/\sum_{i=1}^k\hat{A}_{i, \pi(1)} & \cdots &\hat{A}_{k,\pi(N)} / \sum_{i=1}^k \hat{A}_{i, \pi(N)}
\end{bmatrix} = \nu_1(\hat{A})P^\top.
$$
Since $\nu_2$ is the identity mapping, $\nu_p(\hat{A}P^\top)= \nu_p(\hat{A})P^\top$ for $p=1,2$. 

Now, we consider the matrix multiplication of  
$$\nu_p(\hat{A})P^\top=\begin{bmatrix}
    \nu_p(\hat{A})_{1, \pi(1)} & \cdots & \nu_p(\hat{A})_{1, \pi(N)}  \\
    \vdots  & \ddots & \vdots \\
    \nu_p(\hat{A})_{k, \pi(1)} & \cdots & \nu_p(\hat{A})_{k, \pi(N)} 
\end{bmatrix} \text{ and } P\Phi(X)W^V = \begin{bmatrix}
    \phi(\rvx_{\pi(1)})^\top W^V \\
    \vdots \\
     \phi(\rvx_{\pi(N)})^\top W^V
\end{bmatrix}.
$$

Since
\begin{align*}
\begin{bmatrix}
    \nu_p(\hat{A})_{1, \pi(1)} & \cdots & \nu_p(\hat{A})_{1, \pi(N)}  \\
    \vdots  & \ddots & \vdots \\
    \nu_p(\hat{A})_{k, \pi(1)} & \cdots & \nu_p(\hat{A})_{k, \pi(N)} 
\end{bmatrix} 
\begin{bmatrix}
    \phi(\rvx_{\pi(1)})^\top W^V \\
    \vdots \\
    \phi(\rvx_{\pi(N)})^\top W^V 
\end{bmatrix}
&=\begin{bmatrix}
    \sum_{j=1}^N \nu_p(\hat{A})_{1,\pi(j)} \phi(\rvx_{\pi(j)})^\top W^V \\
    \vdots \\
    \sum_{j=1}^N \nu_p(\hat{A})_{k,\pi(j)} \phi(\rvx_{\pi(j)})^\top W^V \\
\end{bmatrix}\\
&=\begin{bmatrix}
    \sum_{j=1}^N \nu_p(\hat{A})_{1,j} \phi(\rvx_j)^\top W^V \\
    \vdots \\
    \sum_{j=1}^N \nu_p(\hat{A})_{k,j} \phi(\rvx^\top_j) W^V \\
\end{bmatrix}\\
&= \nu_p(\hat{A})V(X),
\end{align*}

$\hat{f}_\theta$ is permutation invariant. Since $
\bbf_\theta(X)_i=\sum_{j=1}^N \nu_p(\hat{A})_{i,j}=\sum_{j=1}^N\nu_p(\hat{A})_{i, \pi(j)}$, $\text{diag}\left(\bbf_\theta(X)\right)^{-1}$ is also invariant with respect to the permutation of input $X$, which leads to the conclusion that $f_\theta(PX) = \text{diag}\left(\bbf_\theta (PX)\right)^{-1}\hat{f}_\theta(PX) = \text{diag}(\bbf_\theta(X))^{-1}\hat{f}_\theta(X).$\\
$\therefore f_\theta$ is permutation invariant. 
\end{proof}

\subsection{Proof for Theorem~\ref{thm:mbc}} \label{app:proof:mbc}
\label{proof:mbc}
\begin{proof}
Let input set $X\in\mathbb{R}^{N\times d_x}$ be given and let $\zeta(X)=\{S_1, \ldots, S_l\}$ be a partition of $X$ with $|S_i| = N_i$, \ie, $X=\bigcup_{i=1}^lS_i$ and $S_i\cap S_j = \emptyset$ for $i\neq j$. Since a Universal MBC set encoder is permutation invariant, without loss of generality we can assume that,
\begin{align}
    K(X) = \begin{bmatrix}
        K(X)_1 \\
        \vdots \\
        K(X)_l
    \end{bmatrix}, \quad
    V(X) = \begin{bmatrix}
        V(X)_1 \\
        \vdots \\
        V(X)_l
    \end{bmatrix}
\end{align}
where $K(X)_i = \Phi(S_i) W^K\in\mathbb{R}^{N_i \times d}$ and $V(X)_i = \Phi(S_i) W^V\in\mathbb{R}^{N_i \times d}$ for $i=1\ldots, l$. 
Then we can express the matrix $\nu_p(\hat{A})$ as follows:
\begin{align}
    \nu_p(\hat{A}) =  \begin{bmatrix}
        \nu_p(\hat{A}^{(1)})
        \cdots 
        \nu_p(\hat{A}^{(l)})
    \end{bmatrix} ,
\end{align}
where $\hat{A}^{(i)} =  \sigma(\sqrt{d^{-1}}\cdot QK(X)^\top_i)\in \mathbb{R}^{k\times N_i}$ for $i=1\ldots, l$ since $\nu_p(\hat{A})_{i,j}$ is independent to $\nu_p(\hat{A})_{i,q}$ for all $q\neq j$. 

Since 
$$
\hat{f}_\theta (X) =\begin{bmatrix}
        \nu_p(\hat{A}^{(1)}) \cdots \nu_p(\hat{A}^{(l)})
    \end{bmatrix}
    \begin{bmatrix}
        V(X)_1 \\
        \vdots \\
        V(X)_l
    \end{bmatrix},
$$
the following equality holds
\begin{align}
\begin{split}
\hat{f}_\theta(X) &= \begin{bmatrix}
        \nu_p(\hat{A}^{(1)}) \cdots \nu_p(\hat{A}^{(l)})
    \end{bmatrix}
    \begin{bmatrix}
        V(X)_1 \\
        \vdots \\
        V(X)_l
    \end{bmatrix} \\
    &= \sum_{i=1}^l \nu_p(\hat{A}^{(i)})V(X)_i \\
    &= \sum_{i=1}^l\hat{f}_\theta (S_i).
\end{split}
\label{eq:f-hat-mbc}
\end{align}
Thus, $\hf_\theta(X)$ is mini-batch consistent.

Since 
$$
\bbf_\theta(X)_i = \sum_{q=1}^l \sum_{j=1}^{N_i} \nu_p(\hat{A}^{(q)})_{i,j},
$$
we can decompose $\bbf_\theta(X)$ into a summation of $\bbf_\theta(S_i)$ as 
\begin{align}
\begin{split}
\bbf_\theta(X) &= \sum_{i=1}^l\left(\sum_{j=1}^{N_i}\nu_p(\hat{A}^{(i)})_{1,j}, \ldots, \sum_{j=1}^{N_i}\nu_p(\hat{A}^{(i)})_{k,j}  \right)^\top \\
&= \sum_{i=1}^l\nu_p(\hat{A}^{(i)})\1_{N_i}  \\
&= \sum_{i=1}^l\bbf_\theta(S_i),
\label{eq:f-bar-mbc}
\end{split}
\end{align}
where $\1_{N_i}=(1,\ldots, 1)\in \RR^{N_i}$. It implies that $\bbf_\theta(X)$ is mini-batch consistent (MBC). Combining~\eqref{eq:f-hat-mbc} and~\eqref{eq:f-bar-mbc}
$$
f_\theta(X) = \text{diag}\left(\sum_{i=1}^l\bbf_\theta(S_i)\right)^{-1} \left(\sum_{i=1}^l\hf_\theta(S_i) \right).
$$
Now, we define a function, 
\begin{align*}
h(\{f_\theta(S_1), \ldots, f_\theta(S_l) \} ) \coloneqq h_1(\{\bbf_\theta(S_1), \ldots, \bbf_\theta(S_l) \}) \cdot h_2(\{\hf_\theta(S_1), \ldots, \hf_\theta(S_l) \} ),   
\end{align*}
where 
\begin{gather*}
    h_1(\{\bbf_\theta(S_1), \ldots \bbf_\theta(S_l)\} \coloneqq \text{diag}\left(\sum_{i=1}^l\bbf_\theta(S_i)\right)^{-1} \\
     h_2(\{\hf_\theta(S_1), \ldots, \hf_\theta(S_l) \} ) \coloneqq \sum_{i=1}^l \hf_\theta(S_i).
\end{gather*}
Then $f_\theta(X) = h(\{f_\theta(S_1), \ldots, f_\theta(S_l) \} )$. Since $\zeta(X)$ is arbitrary, $f_\theta$ is mini-batch consistent.
\end{proof}

\subsection{Proof of Corollary~\ref{cor:mbc-no-mbc}}\label{app:proof-mbc-no-mbc}
\begin{proof}
Let $\hat{g}_\omega: \RR^{k\times d} \to \Zcal$ be an arbitrary set encoder and let $f_\theta : \Xcal \to \RR^{k\times d}$ be a UMBC set encoder. Given a set $X\in\Xcal$ and a partition $\zeta(X)$, we get 
$$f_\theta(X)= \text{diag}\left(\sum_{S\in\zeta(X)}\bbf_\theta(S)\right)^{-1} \left(\sum_{S\in\zeta(X)} \hf_\theta(S) \right)\in\RR^{k\times d},$$     
as shown in section~\ref{proof:mbc}. We assume that $k$ is small enough so that we can load $f_\theta(X)$ in memory after we compute $f_\theta(X)$. As a consequence, we can directly evaluate $\hat{g}_\omega (f_\theta(X))$ without partitioning $f_\theta(X)$ into smaller subsets $\{S_1, \ldots, S_l\}$ and aggregating $\hat{g}_\omega(S_i)$. \\
$\therefore \hat{g}_\omega \circ f_\theta$ is mini-batch consistent.
\end{proof}

\subsection{Proof of Theorem~\ref{thm:perm-eq}}\label{app:proof-perm-equiv}
\begin{proof}
Let $\pi \in\mathfrak{S}_k$ be a permutation on $\{1\ldots, k\}$ and let $\mathfrak{S}_k$ be a set of all permutations on $\{1,\ldots, k\}$.  Define a matrix 
$$
Z = \begin{bmatrix}
    -\rvz^\top_1 -\\
    \vdots \\
    -\rvz^\top_k - 
\end{bmatrix} \coloneqq \varphi(\Sigma;X,\theta) \in \RR^{k\times d}
$$
with the input set $X\in\RR^{N\times d_x}$ and the given slots $\Sigma = [\rvs_1 \cdots \rvs_k]^\top\in\RR^{k\times d_s}$. Then we can identify a permutation matrix $P\in\mathbb{R}^{k\times k}$ such that,
$$
P\Sigma = \begin{bmatrix}
    -\rvs^\top_{\pi(1)}- \\
    \vdots \\
    -\rvs^\top_{\pi(k)}-
\end{bmatrix}.
$$

Since the query $Q$ with the  permutation $P$ is  $P\Sigma W^Q= PQ$, the un-normalized attention score with the permutation matrix $P$ is
$$
\sigma(\sqrt{d^{-1}}\cdot\ PQK(X)^\top) = P\sigma(\sqrt{d^{-1}}\cdot\ QK(X)^\top) = P\hat{A}. 
$$

Since the normalization matrix $\bbf_\theta(X)$ is a function of the slots $\Sigma$, we define a new normalization matrix by permuting the slots $\Sigma$ with the given permutation matrix $P$ as
$$
\text{diag}\left(\bbf_\theta(X); P\Sigma\right)^{-1}=\begin{bmatrix}
    \frac{1}{{c}_1} & & & \\
    & \frac{1}{{c}_2} & & \\
    & & \ddots & \\ 
    & & & \frac{1}{{c}_k}
\end{bmatrix}
$$
where ${c}_i = \sum_{j=1}^N \nu_p(P\hat{A})_{i,j}$ for $i=1,\ldots, k$. Note that 

\begin{align*}
\nu_1(P\hat{A}) &= \begin{bmatrix}
    \hat{A}_{\pi(1), 1} / \sum_{i=1}^k\hat{A}_{\pi(i),1} & \cdots &\hat{A}_{\pi(1), N} / \sum_{i=1}^k\hat{A}_{\pi(i),N} \\
    \vdots & \ddots & \vdots \\
    \hat{A}_{\pi(k), 1} / \sum_{i=1}^k \hat{A}_{\pi(i), 1} & \cdots & \hat{A}_{\pi(k), N} / \sum_{i=1}^k \hat{A}_{\pi(i),N}
\end{bmatrix} \\
&= P\hat{A}\cdot\text{diag}\left(\sum_{i=1}^k\hat{A}_{i,1}, \ldots, \sum_{i=1}^k \hat{A}_{i,N}\right)^{-1} \\
&=P\nu_1(\hat{A})
\end{align*}
and $\nu_2(P\hat{A})=P\nu_2(\hat{A})$.

Then we get ${c}_i = \bbf_\theta(X)_{\pi(i)}$ since
\begin{align*}
{c}_i &=\sum_{j=1}^N\nu_p(P\hat{A})_{i,j} \\
&=\sum_{j=1}^N(P\nu_p(\hat{A}))_{i,j} \\
&= \sum_{j=1}^N\sum_{l=1}^k P_{i,l}\nu_p(\hat{A})_{l,j}  \\
&=\sum_{l=1}^kP_{i,l} \left(\sum_{j=1}^N \nu_p(\hat{A})_{l,j}\right) \\
&= \sum_{l=1}^kP_{i,l} \bbf_\theta(X)_l \\
& = \bbf_\theta(X)_{\pi(i)}.
\end{align*}
The last equality holds since $i$-th row of the permutation matrix $P$ has a single non-zero entry which is 1. 
Thus, 
$$
\begin{bmatrix}
    {{c}_1} & & & \\
    & {{c}_2} & & \\
    & & \ddots & \\ 
    & & & {{c}_k}
\end{bmatrix} 
= \begin{bmatrix}
    {\bbf_\theta(X)}_{\pi(1)} & & & \\
    & {\bbf_\theta(X)}_{\pi(2)} & & \\
    & & \ddots & \\ 
    & & & {\bbf_\theta(X)}_{\pi(k)}
\end{bmatrix} 
=P\text{diag}(\bbf_\theta(X)),
$$
which implies that 
\begin{align*}
\text{diag}\left(\bbf_\theta(X); P\Sigma\right)^{-1}&=\begin{bmatrix}
    {\frac{1}{{c}_1}} & & & \\
    & \frac{1}{{{c}_2}} & & \\
    & & \ddots & \\ 
    & & & \frac{1}{{{c}_k}}
\end{bmatrix} \\
&= \begin{bmatrix}
    {\bbf_\theta(X)}_{\pi(1)} & & & \\
    & {\bbf_\theta(X)}_{\pi(2)} & & \\
    & & \ddots & \\ 
    & & & {\bbf_\theta(X)}_{\pi(k)}
\end{bmatrix}^{-1}. 
\end{align*}

Finally, combining all the pieces, we get  

\begin{align*}
    &\text{diag}\left(\bbf_\theta(X); P\Sigma\right)^{-1}\nu_p(P\hat{A})V(X) \\
    &=\text{diag}\left(\bbf_\theta(X); P\Sigma\right)^{-1}P\nu_p(\hat{A})V(X) \\
    &=  \begin{bmatrix}
    {\bbf_\theta(X)}_{\pi(1)} & & & \\
    & {\bbf_\theta(X)}_{\pi(2)} & & \\
    & & \ddots & \\ 
    & & & {\bbf_\theta(X)}_{\pi(k)}
\end{bmatrix}^{-1}\begin{bmatrix}
    (\nu_p(\hat{A})V(X))_{\pi(1), 1} & \cdots & (\nu_p(\hat{A})V(X))_{\pi(1), d} \\
    \vdots & \ddots & \vdots \\
    (\nu_p(\hat{A})V(X))_{\pi(k),1} & \cdots & (\nu_p(\hat{A})V(X))_{\pi(k), d}
\end{bmatrix} \\
&= \begin{bmatrix}
    (\nu_p(\hat{A})V(X))_{\pi(1), 1} /\bbf_\theta(X)_{\pi(1)} & \cdots & (\nu_p(\hat{A})V(X))_{\pi(1), d} / \bbf_\theta(X)_{\pi(1)} \\
    \vdots & \ddots & \vdots \\
    (\nu_p(\hat{A})V(X))_{\pi(k),1} / \bbf_\theta(X)_{\pi(k)} & \cdots & (\nu_p(\hat{A})V(X))_{\pi(k), d}/\bbf_\theta(X)_{\pi(k)}
\end{bmatrix} \\
&= P \begin{bmatrix}
    {\bbf_\theta(X)}_{1} & & & \\
    & {\bbf_\theta(X)}_{2} & & \\
    & & \ddots & \\ 
    & & & {\bbf_\theta(X)}_{k}
\end{bmatrix}^{-1}\nu_p(\hat{A})V(X) \\
&= P \text{diag}\left(\bbf_\theta(X)\right)^{-1} \hf_\theta(X) \\
&= P \varphi(\Sigma;X,\theta) \\
&= P Z \\
&= \begin{bmatrix}
    -\rvz^\top_{\pi(1)} -\\
    \vdots \\
    -\rvz^\top_{\pi(k)} -
\end{bmatrix}.
\end{align*}
$\therefore \varphi(\Sigma; X,\theta)$ is permutation equivariant.
\end{proof}

\subsection{Proof of Theorem~\ref{thm:univ-approx}}
\begin{proof}
Following the previous proofs~\citep{deepsets, univ-approx} for \emph{uncountable set} $\mathfrak{X}$, we assume a set size is fixed. In other words, we restrict the domain $\Xcal\subset 2^\mathfrak{X}$ to $[0,1]^M$. Let $f\in\mathcal{F}$ be given. By using the proof of Theorem 13 from~\citet{univ-approx} (with a more detailed proof in \citealp{deepsets}), the function $f$ is continuously sum-decomposable via $\RR^{M+1}$ as:
\begin{equation*}
    f(X) = \rho\left(\Psi(X)\right) \\
\end{equation*}
for all $X=\{x_1, \ldots, x_M\} \in [0,1]^M$, where $\Psi:[0,1]^M\to \Qcal \subseteq\RR^{M+1}$ is invertible and defined by
\begin{align*}
\Psi(X) = \left(\sum_{x\in X} \psi_0(x), \ldots, \sum_{x\in X}\psi_M(x) \right),\quad \psi_q(x) = x^q, \quad \text{for $q=0,\ldots, M$},
\end{align*}
and $\rho:\RR^{M+1}\to\RR$ is continuous and defined by
$$
\rho = f\circ \Psi^{-1}.
$$
We want to show that UMBC with some continuously decomposable permutation invariant deep neural network can approximate the function $f$ by showing that $\rho$ and $\Psi$ are approximated by components of the UMBC model. Let $h: [0,1]^M \to\RR$ be a continuously decomposable permutation invariant deep neural network defined by
$$
h(Z) = \kappa\left(\sum_{\rvz \in Z}\xi(\rvz)\right),
$$
where $Z=\{\rvz_i\in\RR^{M+1}\}_{i=1}^k$, $\kappa:\RR^{M+1}\to \RR$ is a deep neural network, and $\xi:\RR^{M+1}\to\RR^{M+1}$ is defined by 
$$\xi(\rvz) = \frac{1}{k}\cdot\hat{\xi}(M\cdot\rvz)
$$
with some deep neural network $\hat{\xi}: \RR^{M+1}\to\RR^{M+1}$.
First, we want to show that Deepsets with average pooling is a special case of UMBC.  Set the slots $\Sigma$ as the zero matrix $\mathbf{0}\in\RR^{k\times d_s}, d=d_h=M+1$, and $W^V =I_{M+1}$. Then by using Lemma 1 from \citet{set-trans}, $f_\theta$ becomes average pooling, \ie,
$$
f_\theta(X) = \begin{bmatrix}
    \frac{1}{M}\sum_{i=1}^M \phi(x_i) \\
    \vdots \\
    \frac{1}{M}\sum_{i=1}^M \phi(x_i)
\end{bmatrix} \in \RR^{k \times (M+1)}.
$$
Then the composition of UMBC and $h$ becomes continuously sum-decomposable function as follows:
\begin{align*}
(h\circ f_\theta)(X) &= \kappa\left(\sum_{j=1}^k \xi(f_\theta(X)_j) \right) \\
&= \kappa\left( \sum_{j=1}^k \frac{1}{k} \hat{\xi}\left( \frac{M}{M}\sum_{i=1}^M \phi(x_i) \right)\right)  \\
&=(\kappa\circ \hat{\xi}) \left( \sum_{i=1}^M \phi(x_i) \right)
\end{align*}
where $f_\theta(X)_j$ is $j$-th row of $f_\theta(X)$. By defining $\hat\rho\coloneqq\kappa\circ \hat{\xi}$ and $\hat\Psi(X)\coloneqq\sum_{x\in X}\phi(x)\in\hat \Qcal \subseteq \RR^{M+1}$,
\begin{align*}
\sup_{X}\norm{f(X) - (h\circ f_\theta)(X)}_2 &= \sup_{X} \norm{(\rho\circ \Psi)(X) - (\hat\rho\circ \hat\Psi)(X) + (\rho\circ \hat\Psi)(X) - (\rho\circ \hat\Psi)(X)}_2
\\ & \le \sup_{X} \norm{(\rho\circ \Psi)(X) - (\rho\circ \hat\Psi)(X)}_2 + \sup_{X}\norm{ (\hat\rho\circ \hat\Psi)(X) - (\rho\circ \hat\Psi)(X)}_2
\\ & \le   \sup_{X} \norm{ \rho(\hat\Psi(X)) - \rho(\Psi(X))}_2 + \sup_{z \in \hat \Qcal} \norm{\rho(z) - \hat\rho(z)}_2.
\end{align*}
Since $\hat \Psi:[0,1]^M\to\hat \Qcal \subseteq \RR^{M+1}$ is continuous and $[0,1]^M$ is compact, $\hat \Qcal $ is compact. Since $\hat \Qcal$ is compact and $\rho$ is continuous, and the nonlinearity of $\hat \rho$ is not a polynomial of finite degree, Theorem 3.1 of \citep{pinkus1999approximation}  implies the following (as a network with one hidden layer can be approximated by a network of greater depth by using the same construction for the first layer and approximating the identity function with later layers): for any $\epsilon'>0$, there exists $\tau' \ge 1$ and parameters of  $\hat \rho$  such that if the width of $\hat \rho$ is at least $\tau'$, then $\sup_{z \in \hat \Qcal} \|\rho(z) - \hat\rho(z)\|<\epsilon'$. Combining these, we have that 
$$
\sup_{X}\norm{f(X) - (h\circ f_\theta)(X)}_2 < \sup_{X} \norm{\rho(\hat\Psi(X)) - \rho(\Psi(X))}_2 + \epsilon'_{(\tau')}
$$ 
where $\epsilon'_{(\tau')}$ depends on the width $\tau'$ of $\hat \rho$. 
Since the nonlinearity of $\phi$ has nonzero Taylor coefficients up to degree $M$, the proof of Theorem 3.4 of \citep{poly-univ-approx} implies the following: there exists $\tau\ge 1$ such that if the width of $\phi=(\phi_0,\phi_1,\dots,\phi_M)$ is at least $\tau$, for every $\delta>0$, there exists parameters of $\phi$ such that $\sup_{x}\lvert\phi_q(x) - \psi_q(x)\rvert < \frac{\delta}{2M(M+1)}$ for $q \in \{0,1,\dots,M\}$. Let us fix the width of $\phi$ to be at least $\tau$. By the triangle inequality,
\begin{align*}
    \norm{\hat{\Psi}(X) - \Psi(X)}_2 &= \norm{\sum_{x\in X}\left(\phi_0(x)-\psi_0(x), \ldots, \phi_M(x)-\psi_M(x)\right) }_2 \\
    &\leq \sum_{x\in X} \norm{\left(\phi_0(x)-\psi_0(x), \ldots, \phi_M(x)-\psi_M(x)\right)}_2 \\
    &\leq \sum_{x\in X} \sum_{q=0}^M \lvert \phi_q(x) - \psi_q(x)\rvert \\
    &\leq \lvert X \rvert \sum_{q=0}^M \sup_x \lvert \phi_q(x) - \psi_q(x)\rvert \\
    &< M(M+1) \frac{\delta}{2M(M+1)} \\
    &= \frac{\delta}{2}
\end{align*}
for all $X\in [0,1]^M$. It implies that  for every $\delta>0$ there exists the parameters of $\phi$ such that
$$
\sup_X \norm{\hat{\Psi}(X) - \Psi(X)}_2 \leq \frac{\delta}{2} < \delta.
$$
Since $\Psi:[0,1]^M\to \Qcal \subseteq\RR^{M+1}$ is continuous and $[0,1]^M$ is compact, $\Qcal$ is compact. 
Since $\Qcal$ and $\hat{\Qcal}$ are compact, $\tilde{\Qcal}\coloneqq\Qcal \cup \hat{\Qcal}$ is compact. 
Define $\tilde \rho: \tilde \Qcal \to\RR$ by $\tilde \rho(z)=\rho(z)$ for all $z \in \tilde \Qcal$. Replacing $\rho$ with $\tilde \rho$, 
$$
\sup_{X}\|f(X) - (h\circ f_\theta)(X)\| < \sup_{X} \|\tilde \rho(\hat\Psi(X)) - \tilde \rho(\Psi(X))\| + \epsilon'_{(\tau')}.
$$ 
Since  $\tilde \Qcal$ is compact and $\tilde \rho$ is continuous on $\tilde \Qcal$, $\tilde \rho$ is uniformly continuous. Thus, for any $\epsilon>0$ there is a $\delta_0$ such that 
$$
\text{for any } z_1, z_2 \in \tilde{\Qcal} \text{ with }\norm{z_1 - z_2}_2 < \delta_0 \Rightarrow \norm{\tilde{\rho}(z_1) - \tilde{\rho}(z_2)}_2 <\epsilon.
$$
Since $\sup_{X}\|\hat\Psi(X)-\Psi(X)\|_2<\delta$ with an arbitrary small $\delta>0$, we can take a small $\delta$ such that $\delta < \delta_0$, \ie $\|\hat\Psi(X)-\Psi(X)\|_2<\delta_0$ for all $X\in [0,1]^M$. 
Then $\text{ for all } X \in [0,1]^M$,
$$
\norm{\tilde \rho(\hat \Psi(X)) - \tilde \rho(\Psi(X))}_2 < \epsilon.
$$
It implies that 
\begin{align*}
\norm{\tilde \rho(\hat \Psi(X)) - \tilde \rho(\Psi(X))}_2 \leq 
\sup_X\norm{\tilde \rho(\hat \Psi(X)) - \tilde \rho(\Psi(X))}_2
\leq\epsilon.
\end{align*}
Thus, we get 
$$
\sup_{X}\norm{f(X) - (h\circ f_\theta)(X)}_2 \leq \epsilon+ \epsilon'_{(\tau')},
$$  
where $\epsilon'_{(\tau')}$ depends on the width $\tau'$ of $\hat \rho$, while $\epsilon>0$ is arbitrarily small with a fixed width of $\phi$ (due to the universal approximation result with a bounded width of \citealp{poly-univ-approx}). Let $\epsilon_0>0$ be given. Then, we set $\tau^\prime$ to be sufficiently large to ensure that $\epsilon'_{(\tau')}<\epsilon_0/2$ and set $\epsilon <\epsilon_0/2$, obtaining
\begin{align*}  
\sup_{X}\norm{f(X) - (h\circ f_\theta)(X)}_2 &< \epsilon_0.
\end{align*} 
Since $\epsilon_0>0$ was arbitrary, this proves the following desired result: 
(a formal restatement of this theorem) suppose that the nonlinear activation function of $\phi$ has nonzero Taylor coefficients up to degree $M$. Let $h$ be a continuously-decomposable permutation-invariant deep neural network with the nonlinear activation functions that are not polynomials of finite degrees. 
Then, there exists $\tau\ge 1$ such that if the width of $\phi$ is at least $\tau$, then for any $\epsilon_0>0$, there exists $\tau' \ge 1$ for which the following statement holds: if the width of $h$ is at least $\tau'$, then there exist trainable parameters of $f_\theta$ and $h$ satisfying
$$
\sup_{X}\norm{f(X) - (h\circ f_\theta)(X)}_2 < \epsilon_0.
$$ 
\end{proof}

\subsection{Proof for Theorem~\ref{thm:umbaised}}
$f_{\theta}(X) \in \RR^{k \times d}$ is defined by  
$$
f_{\theta}(X_{})_{i}=\frac{1}{\bbf_{\theta} (X)_{i}}\hf_{\theta}(X_{}) _i \in \RR^d
$$
for all $i=1,\dots,k$, where $\hf_{\theta}(X)_i\in \RR^d$ is $i$-th row of $\hf_{\theta}(X_{})_{}$ defined in~\eqref{eq:sse} and $\bbf_{\theta} (X)_{i} \in \RR$ is $i$-th component of $\bbf_\theta(X)$ which is defined in~\eqref{eq:sse-constant}.

\begin{proof} 
From the mini-batch consistency and definition of $\hf_\theta$ and $\bbf_\theta$, we have that for any partition procedure $\zeta$, 
\begin{align*}
\hf_\theta(X) =  \sum_{S \in \zeta(X)} \hf_\theta(S) \ \text{ and } \ \bbf_{\theta}(X) = \sum_{S \in \zeta(X)}  \bbf_{\theta}(S). 
\end{align*}
By using this and defining  $\ell_i(q)\coloneqq\ell(q, y_i)\in \RR$ and $\rvz^{(i)}_{j}\coloneqq f_{\theta}(X_{i})_{j} \in \RR^d$, where $f_\theta(X_i)_j$ is $j$-th row of $f_\theta(X_i)$, the chain rule along with the linearity of the derivative operator yields  that
for any partition procedure $\zeta$,
\begin{align}
\label{eq:2}
\begin{split}
 \frac{\partial L(\theta,\lambda)}{\partial \theta}&= \frac{1}{n}\sum_{i=1}^n \sum_{j=1}^k\frac{\partial (\ell_i\circ g_\lambda)(\rvz^{(i)}_j)}{\partial \rvz^{(i)}_j} \frac{\partial f_{\theta}(X_i)_{j} }{\partial \theta}
\\   & = \frac{1}{n}\sum_{i=1}^n \sum_{j=1}^k\frac{\partial (\ell_i\circ g_\lambda)(\rvz^{(i)}_j)}{\partial \rvz^{(i)}_j}\left(\frac{1}{\bbf_{\theta} (X_{i})_{j}}\frac{\partial \hf_{\theta}(X_{i}) _j }{\partial \theta} -  \hf_{\theta}(X)_j \frac{1}{\bbf_{\theta} (X_{i})_{j}^2} \frac{\partial \bbf_{\theta} (X_{i})_{j}^{} }{\partial \theta}  \right)
\\   & = \frac{1}{n}\sum_{i=1}^n \sum_{j=1}^k\frac{\partial (\ell_i\circ g_\lambda)(\rvz^{(i)}_j)}{\partial \rvz^{(i)}_j}\left(\frac{1}{\bbf_{\theta} (X_i)_{j}} \sum_{S \in \zeta(X_i)} \frac{\partial \hf_{\theta}(S) _j }{\partial \theta} - \frac{1}{\bbf_{\theta} (X_{i})_{j}^2} \hf_{\theta}(X_{i})_j  \sum_{S \in \zeta(X_{i})}\frac{\partial \bbf_{\theta} (S)_{j}^{} }{\partial \theta}  \right).
\end{split}
\end{align}
Similarly, by defining $\bell_i(q)\coloneqq\ell(q, \by_i) \in \RR$ and $\bz^{(i)}_j\coloneqq\hf_{\theta}(\bX_{i})_{j} \in \RR^{d}$,
  
\begin{align}
 \frac{\partial L_{t,1}(\theta, \lambda)}{\partial \theta}&=\frac{1}{m}\sum_{i=1}^m \frac{|\zeta_t(\bX_i)|}{|\bzeta_t(\bX_i)|} \sum_{j=1}^k\frac{\partial (\bell_i\circ g_\lambda)(\bz^{(i)}_j)}{\partial \bz^{(i)}_j} \frac{\partial f_{\theta}^{\bzeta_t}(\bX_{i})_{j} }{\partial \theta}
\\ \nonumber & =\frac{1}{m}\sum_{i=1}^m \frac{|\zeta_t(\bX_i)|}{|\bzeta_t(\bX_i)|} \sum_{j=1}^k\frac{\partial (\bell_i\circ g_\lambda)(\bz^{(i)}_j)}{\partial \bz^{(i)}_j} \left( \frac{1}{\bbf_{\theta}(\bX_{i})_{j}} \sum_{\bS \in\bzeta_t(\bX_{i})} \frac{\partial \hf_{\theta}(\bS) _j }{\partial \theta} - \frac{1}{\bbf_{\theta}(\bX_{i})_{j}^2} \hf_{\theta}(\bX_{i})_j  \sum_{\bS \in\bzeta_{t}(\bX_{i})}\frac{\partial \bbf_{\theta}(\bS)_{j}^{} }{\partial \theta}  \right).
\end{align}
Let $t\in\NN_+$ be fixed. By the linearity of expectation, we have that
\begin{align} \label{eq:1}
\EE_{((\bX_{i}, \by_{i}))^m_{i=1}} \EE_{(\bzeta_t(\bX_i))^m_{i=1}}\left[\frac{\partial L_{t,1}(\theta,\lambda)}{\partial \theta} \right]
& =\EE_{((\bX_{i}, \by_{i}))^m_{i=1}} \left[\frac{1}{m}\sum_{i=1}^m \sum_{j=1}^k \frac{\partial (\bell_i\circ g_\lambda)(\bz^{(i)}_j)}{\partial\bz^{(i)}_j}   \mathrm{Q_{ij}}\right] 
\end{align}
where 
$$
\mathrm{Q}_{ij}=\frac{1}{\bbf_{\theta}(\bX_{i})_{j}} \EE_{\bzeta_t(\bX_i)}\left[\frac{|\zeta_t(\bX_i)|}{|\bzeta_t(\bX_i)|}    \sum_{\bS \in\bzeta_t(\bX_{i})} \frac{\partial \hf_{\theta}(\bS) _j }{\partial \theta} \right]-\frac{1}{\bbf_{\theta}(\bX_{i})_{j}^2} \hf_{\theta}(\bX_{i})_j\EE_{\bzeta_t(\bX_i)}\left[\frac{|\zeta_t(\bX_i)|}{|\bzeta_t(\bX_i)|}   \sum_{\bS\in\bzeta_{t}(\bX_{i})}\frac{\partial \bbf_{\theta}(\bS)_{j}^{} }{\partial \theta}   \right].
$$
Below, we  further analyze the following  factors in the right-hand side of this equation:
\begin{align*}
\EE_{\bzeta_t(\bX_i)}\left[\frac{|\zeta_t(\bX_i)|}{|\bzeta_t(\bX_i)|}  \sum_{\bS \in\bzeta_t(\bX_i) } \frac{\partial \hf_{\theta}(\bS)_{j} }{\partial \theta} \right] \text{ and } \EE_{\bzeta_t(\bX_i)}\left[\frac{|\zeta_t(\bX_i)|}{|\bzeta_t(\bX_i)|}   \sum_{\bS\in\bzeta_{t}(\bX_{i})}\frac{\partial \bbf_{\theta}(\bS)_{j}^{} }{\partial \theta}   \right]. 
\end{align*}
Denote  the elements of $\bzeta_t(\bX_i)$ as $\{\bS_1, \bS_2,\dots, \bS_{|\bzeta_t(\bX_i)|}\}=\bzeta_t(\bX_i)$. Then, 
\begin{align*}
\EE_{\bzeta_t(\bX_i)}\left[\frac{|\zeta_t(\bX_i)|}{|\bzeta_t(\bX_i)|}  \sum_{\bS \in\bzeta_t(\bX_i) } \frac{\partial \hf_{\theta}(\bS)_{j} }{\partial \theta} \right]&=\EE_{\bS_1, \bS_2,\dots, \bS_{|\bzeta_t(\bX_i)|}}\left[\frac{|\zeta_t(\bX_i)|}{|\bzeta_t(\bX_i)|}  \sum_{l=1 }^{|\bzeta_t(\bX_i)|} \frac{\partial \hf_{\theta}(\bS_{l})_{j} }{\partial \theta} \right]
\\ & =\frac{|\zeta_t(\bX_i)|}{|\bzeta_t(\bX_i)|}\sum_{l=1 }^{|\bzeta_t(\bX_i)|}\EE_{\bS_l}\left[   \frac{\partial \hf_{\theta}(\bS_{l})_{j} }{\partial \theta} \right].
\end{align*}
Since $\bS_l$ is  drawn independently and uniformly from the elements of $\zeta_t(\bX_i)$, we have that 
$$
\EE_{\bS_l}\left[   \frac{\partial \hf_{\theta}(\bS_{l})_{j} }{\partial \theta} \right]=\frac{1}{|\zeta_t(\bX_i)|} \sum_{S \in \zeta_t(\bX_i) } \frac{\partial \hf_{\theta}(S)_{j} }{\partial \theta}.
$$
Substituting this into the right-hand side of the preceding equation,  we have that 
\begin{align*}
\EE_{\bzeta_t(\bX_i)}\left[\frac{|\zeta_t(\bX_i)|}{|\bzeta_t(\bX_i)|}  \sum_{\bS \in\bzeta_t(\bX_i) } \frac{\partial \hf_{\theta}(\bS)_{j} }{\partial \theta} \right]&=\frac{|\zeta_t(\bX_i)|}{|\bzeta_t(\bX_i)|}\sum_{l=1}^{|\bzeta_t(\bX_i)|}\frac{1}{|\zeta_t(\bX_i)|} \sum_{S \in \zeta_t(\bX_i) } \frac{\partial \hf_{\theta}(S)_{j} }{\partial \theta}=\sum_{S \in \zeta_t(\bX_i) } \frac{\partial \hf_{\theta}(S)_{j} }{\partial \theta}.
\end{align*} 
Similarly,
$$
\EE_{\bzeta_t(\bX_i)}\left[\frac{|\zeta_t(\bX_i)|}{|\bzeta_t(\bX_i)|}   \sum_{\bS\in\bzeta_{t}(\bX_{i})}\frac{\partial \bbf_{\theta}(\bS)_{j}^{} }{\partial \theta}   \right] =\sum_{S \in \zeta_t(\bX_i) } \frac{\partial \bbf_{\theta}(S)_{j} }{\partial \theta}.
$$
Substituting these into $\mathrm{Q}_{ij}$, 
$$
\mathrm{Q}_{ij} =\frac{1}{\bbf_{\theta}(\bX_{i})_{j}} \sum_{S \in \zeta_t(\bX_i) } \frac{\partial \hf_{\theta}(S)_{j} }{\partial \theta}-\frac{1}{\bbf_{\theta}(\bX_{i})_{j}^2} \hf_{\theta}(X)_j\sum_{S \in \zeta_t(\bX_i) } \frac{\partial \bbf_{\theta}(S)_{j} }{\partial \theta}. 
$$
By using this in~\eqref{eq:1} and defining $B(\bX_{i}, \by_i)_{j}=\frac{\partial (\bell_i\circ g)(\bz^{(i)}_j)}{\partial \bz^{(i)}_j} \mathrm{Q}_{ij},
$
we have that  
\begin{align*}
\EE_{((\bX_{i}, \by_{i}))^m_{i=1}} \EE_{(\bzeta_t(\bX_i))^m_{i=1}}\left[\frac{\partial L_{t,1}(\theta,\lambda)}{\partial \theta} \right]
 =\EE_{((\bX_{i}, \by_{i}))^m_{i=1}} \left[\frac{1}{m}\sum_{i=1}^m \sum_{j=1}^kB(\bX_{i}, \by_i)_j\right]&= \frac{1}{m}\sum_{i=1}^m \sum_{j=1}^{k}\EE_{(\bX_{i}, \by_{i})}[B(\bX_{i}, \by_i)_{j}]
\\ &=\sum_{j=1}^{k}\frac{1}{m}\sum_{i=1}^m  \frac{1}{n} \sum_{l=1}^n B(X_{l}, y_l)_{j}
\\ & =\frac{1}{n} \sum_{i=1}^n \sum_{j=1}^{k}B(X_{i}, y_i)_{j}. 
\end{align*}
Thus, expanding the definition of $B(X_{i}, y_i)_{j}$, 
\begin{align*}
&\EE_{((\bX_{i}, \by_{i}))^m_{i=1}} \EE_{(\bzeta_t(\bX_i))^m_{i=1}}\left[\frac{\partial L_{t,1}(\theta,\lambda)}{\partial \theta} \right]
\\ & =\frac{1}{n} \sum_{i=1}^n \sum_{j=1}^{k} \frac{\partial (\ell_i\circ g_\lambda)(\rvz^{(i)}_j)}{\partial \rvz^{(i)}_j} \left(\frac{1}{\bbf_{\theta}(X_i)_{j}} \sum_{S \in \zeta_t(X_{i}) } \frac{\partial \hf_{\theta}(S)_{j} }{\partial \theta}-\frac{1}{\bbf_{\theta}(X_{i})_{j}^2} \hf_{\theta}(X)_j\sum_{S \in \zeta_t(X_{i}) } \frac{\partial \bbf_{\theta}(S)_{j} }{\partial \theta} \right).
\end{align*}
Here, since $\sum_{S \in \zeta_t(X_{i}) } \frac{\partial \hf_{\theta}(S) }{\partial \theta}=\sum_{S \in \zeta(X_{i})} \frac{\partial \hf_{\theta}(S) }{\partial \theta}$ and $\sum_{S \in \zeta_t(X_{i}) } \frac{\partial \bbf_{\theta}(S)_{j} }{\partial \theta}=\sum_{S \in \zeta(X_{i}) } \frac{\partial \bbf_{\theta}(S)_{j} }{\partial \theta}$ for any partition procedure $\zeta$ from   the mini-batch consistency,
we have that 
\begin{align} \label{eq:3}
\begin{split}
& \EE_{((\bX_{i}, \by_{i}))^m_{i=1}} \EE_{(\bzeta_t(\bX_i))^m_{i=1}}\left[\frac{\partial L_{t,1}(\theta, \lambda)}{\partial \theta} \right]
\\  & =\frac{1}{n} \sum_{i=1}^n \sum_{j=1}^{k} \frac{\partial (\ell_i\circ g_\lambda)(\rvz^{(i)}_j)}{\partial \rvz^{(i)}_j} \left(\frac{1}{\bbf_{\theta}(X_i)_{j}} \sum_{S \in \zeta(X_{i}) } \frac{\partial \hf_{\theta}(S)_{j} }{\partial \theta}-\frac{1}{\bbf_{\theta}(X_{i})_{j}^2} \hf_{\theta}(X)_j\sum_{S \in \zeta(X_{i}) } \frac{\partial \bbf_{\theta}(S)_{j} }{\partial \theta} \right).  
\end{split}
\end{align}
By comparing equation \eqref{eq:2} and equation \eqref{eq:3}, we conclude that 
$$
\EE_{((\bX_{i}, \by_{i}))^m_{i=1}} \EE_{(\bzeta_t(\bX_i))^m_{i=1}}\left[\frac{\partial L_{t,1}(\theta, \lambda)}{\partial \theta} \right]=\frac{\partial L(\theta, \lambda)}{\partial \theta}.
$$
Since $t$ was arbitrary, this holds for any $t\in\mathbb{N}_+$.

Now we want to show~\eqref{eq:lambda} holds for any $t\in\mathbb{N}_+$. Since $\bbf^{\bzeta_t}_\theta(\bX_i)  = \bbf_\theta(\bX_i)$  and $\hf^{\bzeta_t}_\theta(\bX_i) = \hf_\theta(\bX_i)$ for all $i\in [m]$,
\begin{align}
    \EE_{((\bX_{i}, \by_{i}))^m_{i=1}} \EE_{(\bzeta_t(\bX_i))^m_{i=1}}\left[\frac{\partial L_{t,2}(\theta, \lambda)}{\partial \lambda} \right] &= \EE_{((\bX_{i}, \by_{i}))^m_{i=1}} \EE_{(\bzeta_t(\bX_i))^m_{i=1}}\left[\frac{1}{m}\sum_{i=1}^m\frac{1}{\lvert \bzeta_t(\bX_i)\rvert}\frac{\partial(\bell_i\circ g_\lambda) (f^{\bzeta_t}_\theta(\bX_i))}{ \partial \lambda} \right] \nonumber \\
    &= \EE_{((\bX_{i}, \by_{i}))^m_{i=1}} \EE_{(\bzeta_t(\bX_i))^m_{i=1}}\left[\frac{1}{m}\sum_{i=1}^m\frac{1}{\lvert \bzeta_t(\bX_i)\rvert}\frac{\partial(\bell_i\circ g_\lambda) (f_\theta(\bX_i))}{ \partial \lambda} \right].\label{eq:lambda-1}
\end{align}
Since we independently and uniformly sample $\bS_1, \bS_2, \ldots, \bS_{\lvert \bzeta_t(\bX_i)\rvert}$ from $\zeta_t(\bX_i)$ and $\frac{\partial (\bell_i \circ g_\lambda)(f_\theta(\bX_i)}{\partial \lambda}$ is constant with respect to the sampling,

\begin{align}
\EE_{\bzeta_t(\bX_i)}\left[\frac{1}{\lvert\bzeta_t(\bX_i)\rvert} \frac{\partial(\bell_i\circ g_\lambda) (f_\theta(\bX_i))}{ \partial \lambda} \right] &= \EE_{\bS_1, \bS_2, \ldots, \bS_{\lvert \bzeta_t(\bX_i)\rvert}} \left[  \frac{1}{\lvert \bzeta_t(\bX_i) \rvert}\frac{\partial(\bell_i\circ g_\lambda) (f_\theta(\bX_i))}{ \partial \lambda} \right]  \nonumber \\   
&= \frac{1}{\lvert \bzeta_t(\bX_i)\rvert}\sum_{j=1}^{\lvert \bzeta_t(\bX_i) \rvert} \EE_{\bS_j}\left[\frac{\partial(\bell_i\circ g_\lambda) (f_\theta(\bX_i))}{ \partial \lambda}\right] \nonumber\\
&= \frac{1}{\lvert \bzeta_t(\bX_i)\rvert}\sum_{j=1}^{\lvert \bzeta_t(\bX_i) \rvert} \frac{\partial(\bell_i\circ g_\lambda) (f_\theta(\bX_i))}{ \partial \lambda} \nonumber\\
& = \frac{\partial(\bell_i\circ g_\lambda) (f_\theta(\bX_i))}{ \partial \lambda}.
\label{eq:lambda-2}
\end{align}

Since we sample a mini-batch $((\bX_i, \by_i))_{i=1}^m$ independently and uniformly from the whole training set $((X_i, y_i))_{i=1}^n$, if we apply \eqref{eq:lambda-2} to \eqref{eq:lambda-1}, we get
\begin{align}
\begin{split}
    \EE_{((\bX_{i}, \by_{i}))^m_{i=1}} \left[\frac{1}{m}\sum_{i=1}^m\frac{\partial(\bell_i\circ g_\lambda) (f_\theta(\bX_i))}{ \partial \lambda} \right] &= \frac{1}{m}\sum_{i=1}^m\EE_{((\bX_{i}, \by_{i}))^m_{i=1}} \left[\frac{\partial(\bell_i\circ g_\lambda) (f_\theta(\bX_i))}{ \partial \lambda} \right] \\
     &= \frac{1}{m}\sum_{i=1}^m\left(\frac{1}{n} \sum_{j=1}^n\frac{\partial(\ell_j\circ g_\lambda) (f_\theta(X_j))}{ \partial \lambda}\right)  \\
     &= \frac{1}{n} \sum_{j=1}^n\frac{\partial(\ell_j\circ g_\lambda) (f_\theta(X_j))}{ \partial \lambda} \\
     &= \frac{\partial L(\theta, \lambda)}{\partial \lambda}
\end{split}
\end{align}
Therefore, we conclude that
$$
\EE_{((\bX_{i}, \by_{i}))^m_{i=1}} \EE_{(\bzeta_t(\bX_i))^m_{i=1}}\left[\frac{\partial L_{t,2}(\theta,\lambda)}{\partial \lambda} \right] = \frac{\partial L(\theta, \lambda)}{\partial \lambda}.
$$
Since $t$ was arbitrary, this holds for any $t\in\mathbb{N}_+$.
\end{proof}
\subsection{Unbiased Estimation of Full Set Gradient for MIL}
\begin{corollary}
For multiple instance learning, given a training set $((X_i, y_i))_{i=1}^n$, we define the loss as follows:
\begin{gather*}
     \rvz_{i,1} = \max\{w^\top \phi(\rvx_{i,j}) + b \}_{j=1}^{N_i}, \quad \rvz_{i,2} = f_\theta(X_i), \\
     \Lcal_i = \frac{1}{2}\left(\ell(\rvz_{i,1}, y_i) + \ell(g_\lambda(\rvz_{i,2}), y_i)\right), \\
     L(\theta, \lambda, w, b) = \frac{1}{n}\sum_{i=1}^n\Lcal_i.
\end{gather*}

For every iteration $t\in\NN_+$ we sample a mini-batch of training data $((\bX_i, \by_i))_{i=1}^m\sim D[((X_i,y_i))_{i=1}^n]$ and sample random partition $(\zeta_t(\bX_i))_{i=1}^m$. Then we sample a mini-batch of subsets $\bzeta_t(\bX_i)\sim D([\zeta_t(\bX_i)]$. Let $\psi \coloneqq (w,b) \in \RR^{d_h+1}$ and define a function $h_\psi:\Xcal\to\RR$ by $h_\psi(X)\coloneqq \max\{w^\top \phi(\rvx) +b: \rvx\in X\}$. Similar to \eqref{eq:stop-grad} we define $h^{\bzeta_t}_\psi$ as,
\begin{align*}
    h_\psi^{\bzeta_t}(\bX_i) &\coloneqq h^{\bzeta_t, \zeta_t}_\psi(\bX_i)\\
    &\coloneqq \max\{h_\psi(S), \sg(h_\psi(S^\prime))\mid S\in\bzeta_t(\bX_i), S^\prime \in \zeta_t(\bX_i)\setminus \bzeta_t(\bX_i) \}.
\end{align*}

Then we update the parameters $\theta, \lambda$ and $\psi$ using the gradient of the following functions as
\begin{gather*}
    L_{t,1}(\theta, \lambda, \psi) =  \frac{1}{2m}\sum_{i=1}^m \frac{\lvert \zeta_t(\bX_i)\rvert}{\lvert \bzeta_t(\bX_i)\rvert}\left(\ell(h^{\bzeta_t}(\bX_i),\by_i) + \ell(g_\lambda(f^{\bzeta_t}_\theta(\bX_i),\by_i) ) \right) \\
    L_{t,2}(\theta, \lambda) = \frac{1}{2m}\sum_{i=1}^m  \ell(g_\lambda(f^{\bzeta_t}_\theta(\bX_i),\by_i) ) 
\end{gather*}
\begin{align*}
    \theta_{t+1} &= \theta_t - \eta_t \frac{\partial L_{t,1}(\theta_t, \lambda_t, \psi_t)}{\partial \theta_t}\\
    \psi_{t+1} &= \psi_t - \eta_t \frac{\partial L_{t,1}(\theta_t, \lambda_t, \psi_t)}{\partial \psi_t} \\
    \lambda_{t+1} &= \lambda_t - \eta_t \frac{\partial L_{t,2}(\theta_t, \lambda_t)}{\partial \lambda_t},
\end{align*}
where $\eta_t >0$ is a learning rate.
If we assume that there exists a unique maximum value $\max\{w^\top \phi(\rvx) +b: \rvx\in X_i \}\in\RR$  for each $i\in\{1,\ldots, n\}$ and sample a single subset from $\zeta_t(\bX_i)$ for $i\in\{1,\ldots,m\}$, \ie $\lvert \bzeta_t(\bX_i)\rvert=1$,  then the following holds:
\begin{align}
    \EE_{((\bX_i, \by_i))_{i=1}^m}\EE_{(\bzeta_t(\bX_i))_{i=1}^m}\left[\frac{\partial L_{t,1}(\theta,\lambda,\psi)}{\partial \theta}\right] &= \frac{\partial L(\theta,\lambda,\psi)}{\partial \theta} \label{eq:theta} \\
    \EE_{((\bX_i, \by_i))_{i=1}^m}\EE_{(\bzeta_t(\bX_i))_{i=1}^m}\left[\frac{\partial L_{t,1}(\theta,\lambda,\psi)}{\partial \psi}\right] &= \frac{\partial L(\theta,\lambda, \psi)}{\partial \psi} \label{eq:psi} \\
    \EE_{((\bX_i, \by_i))_{i=1}^m}\EE_{(\bzeta_t(\bX_i))_{i=1}^m}\left[\frac{\partial L_{t,2}(\theta,\lambda)}{\partial \lambda}\right] &= \frac{\partial L(\theta,\lambda,\psi)}{\partial \lambda} \label{eq:lam}
\end{align}
\end{corollary}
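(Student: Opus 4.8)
The plan is to exploit the additive structure of the per-set loss $\mathcal{L}_i = \tfrac{1}{2}\bigl(\ell(\rvz_{i,1},y_i) + \ell(g_\lambda(\rvz_{i,2}),y_i)\bigr)$ and treat its two streams separately, since expectation and differentiation are both linear. The second (UMBC) stream $\ell(g_\lambda(f_\theta(X_i)),y_i)$ is structurally identical to the loss analysed in \cref{thm:umbaised}. Its contribution to $\partial L/\partial\theta$ therefore follows verbatim from that proof: the importance weight $|\zeta_t(\bX_i)|/|\bzeta_t(\bX_i)|$ makes the $\sg$-truncated sums $\hf_\theta^{\bzeta_t}$ and $\bbf_\theta^{\bzeta_t}$ yield an unbiased estimate of the full-set derivative. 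Its contribution to $\partial L/\partial\lambda$ gives \eqref{eq:lam} directly: because $f_\theta^{\bzeta_t}(\bX_i)=f_\theta(\bX_i)$ as a \emph{value}, the quantity $\partial\ell(g_\lambda(f_\theta^{\bzeta_t}(\bX_i)),\by_i)/\partial\lambda$ is constant in the subset sampling, so the inner expectation is trivial and the outer expectation over the uniform draw $((\bX_i,\by_i))\sim D[\cdots]$ converts the mini-batch average into the full training average. Since $\lambda,w,b$ are each absent from one of the two streams, the genuinely new content is confined to the max stream $h_\psi$, which feeds $\partial L/\partial\psi$ and, through $\phi$ (whose parameters we take to be part of $\theta$), $\partial L/\partial\theta$.

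For the max stream I would first establish two facts. (i) \emph{Value consistency}: since $\sg$ is the identity on values, $h_\psi^{\bzeta_t}(\bX_i)=\max\{h_\psi(S):S\in\zeta_t(\bX_i)\}=h_\psi(\bX_i)$ irrespective of which subsets land in $\bzeta_t(\bX_i)$, so the upstream factor $\partial\ell/\partial z$ evaluated at $h_\psi^{\bzeta_t}(\bX_i)$ is constant with respect to the sampling. (ii) \emph{Gradient localisation}: under the unique-maximiser assumption a single element $\rvx_i^\star\in\bX_i$ attains the maximum, so $\partial h_\psi(\bX_i)/\partial\psi$ equals the derivative of $w^\top\phi(\rvx_i^\star)+b$ through that one element; in $h_\psi^{\bzeta_t}(\bX_i)$ the gradient propagates only when the unique subset $S^\star\in\zeta_t(\bX_i)$ containing $\rvx_i^\star$ is not stop-gradiented, i.e. only when $S^\star\in\bzeta_t(\bX_i)$, in which case it equals $\partial h_\psi(\bX_i)/\partial\psi$ and otherwise is $0$.

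Combining these with $|\bzeta_t(\bX_i)|=1$ finishes the estimate: the single sampled subset equals $S^\star$ with probability exactly $1/|\zeta_t(\bX_i)|$, so
\begin{align*}
\EE_{\bzeta_t(\bX_i)}\!\left[\frac{|\zeta_t(\bX_i)|}{|\bzeta_t(\bX_i)|}\,\frac{\partial h_\psi^{\bzeta_t}(\bX_i)}{\partial\psi}\right]
= |\zeta_t(\bX_i)|\cdot\frac{1}{|\zeta_t(\bX_i)|}\cdot\frac{\partial h_\psi(\bX_i)}{\partial\psi}
= \frac{\partial h_\psi(\bX_i)}{\partial\psi},
\end{align*}
so the importance weight exactly cancels the sampling probability. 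Multiplying by the constant upstream derivative from (i), summing the two streams by linearity, and taking the outer expectation over $((\bX_i,\by_i))\sim D[((X_i,y_i))_{i=1}^n]$ --- which, as in \cref{thm:umbaised}, turns the mini-batch average $\tfrac{1}{m}\sum_i$ into $\tfrac{1}{n}\sum_i$ --- yields \eqref{eq:psi}. The identical computation with $\partial h_\psi(\bX_i)/\partial\theta$ in place of $\partial h_\psi(\bX_i)/\partial\psi$, added to the reused UMBC-stream term, gives \eqref{eq:theta}.

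The main obstacle is the gradient-localisation step, because $\max$ is neither linearly decomposable (unlike the sums defining $\hf_\theta$ and $\bbf_\theta$) nor differentiable at ties. The unique-maximiser hypothesis is precisely what rescues it, collapsing the subgradient of the max to a genuine gradient through one element and making the event ``$\bzeta_t(\bX_i)$ carries the gradient'' well defined with probability $1/|\zeta_t(\bX_i)|$. I would also emphasise that the argument truly needs $|\bzeta_t(\bX_i)|=1$: for a larger subset mini-batch drawn with replacement, the probability that $\bzeta_t(\bX_i)$ contains $S^\star$ is $1-(1-1/|\zeta_t(\bX_i)|)^{|\bzeta_t(\bX_i)|}$, which no longer cancels against the weight $|\zeta_t(\bX_i)|/|\bzeta_t(\bX_i)|$, so the max-stream estimator would become biased even though the UMBC-stream estimator (built from sums) stays unbiased for every subset-batch size.
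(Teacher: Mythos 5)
Your proposal is correct and follows essentially the same route as the paper's proof: defer \eqref{eq:lam} to Theorem~\ref{thm:umbaised}, split the loss into the UMBC stream (reusing Theorem~\ref{thm:umbaised} for its $\theta$-contribution) and the max stream, and handle the latter via value consistency plus gradient localisation under the unique-maximiser assumption, with the weight $|\zeta_t(\bX_i)|/|\bzeta_t(\bX_i)|$ cancelling the probability $1/|\zeta_t(\bX_i)|$ that the single sampled subset carries the gradient. Your closing remark that unbiasedness of the max stream genuinely requires $|\bzeta_t(\bX_i)|=1$ under independent-and-uniform (with-replacement) subset sampling is a correct observation that the paper leaves implicit.
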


\begin{proof}
It is enough to show the \eqref{eq:theta} and~\ref{eq:psi} hold since  we have already proved \eqref{eq:lam}, which does not depend on $\psi$, in~\cref{thm:umbaised}. By defining $\ell_i(q) \coloneqq \ell(q,y_i)\in\RR$, $u_{i,j}=w^\top\phi(\rvx_{i,j})+b$ where $\rvx_{i,j}\in X_i$ for $j\in\{1,\ldots, N_i\}$, and $u_{i, M}=\max\{u_{i,j} \}_{j=1}^{N_i}$,
\begin{align*}
    \frac{\partial L(\theta, \lambda, \psi)}{\partial \psi} &=\frac{1}{n}\sum_{i=1}^n \frac{1}{2} \frac{\partial \ell_i (h_\psi(X_i) )}{\partial h_\psi(X_i)} \frac{\partial h_\psi(X_i)}{\partial \psi} \\
    &= \frac{1}{2n}\sum_{i=1}^n \frac{\partial \ell_i(h_\psi(X_i)) }{\partial h_\psi(X_i)} \frac{\partial \max\{h_\psi(S) \mid S\in\zeta(X_i)\}}{\partial \psi} \\
    &= \frac{1}{2n}\sum_{i=1}^n\frac{\partial \ell_i(h_\psi(X_i)) }{\partial h_\psi(X_i)} \frac{\partial u_{i,M}}{\partial \psi}
\end{align*}
for any partition $\zeta(X_i)$ for all $i\in \{1,\ldots,m\}$. Similarly, we  define $\bell_i(q) \coloneqq \ell(q, \by_i)\in\RR$, $\bar u_{i,j}\coloneqq w^\top \phi(\bar \rvx_{i,j}) +b$ where $\bar \rvx_{i,j}\in \bX_i$ for $j\in\{1,\ldots, N_i\}$, and $\bar u_{i,M} = \max\{\bar u_{i,j}\}_{j=1}^{N_i}$. Let $t\in\NN_+$ be fixed and define,
\begin{align*}
    \bar{B}_{t,i} &\coloneqq \{ h_\psi(S): S\in \bzeta_t(\bX_i)\}.   
\end{align*}
With linearity of expectation and properties of the max operation, we have that
\begin{align}   &\EE_{((\bX_i,\by_i))_{i=1}^m}\EE_{(\bzeta_t(\bX_i))_{i=1}^m}\left[\frac{\partial L_{t,1}(\theta,\lambda,\psi)}{\partial \psi}\right] \nonumber\\
   &= \frac{1}{2m}\sum_{i=1}^m \EE_{((\bX_i,\by_i))_{i=1}^m}\EE_{(\bzeta_t(\bX_i))_{i=1}^m}\left[\frac{\lvert \zeta_t(\bX_i) \rvert}{\lvert \bzeta_t(\bX_i)\rvert}\frac{\partial \bell_i(h_\psi(\bX_i))}{\partial h_\psi(\bX_i)}\one\{\bar{u}_{i,M}\in \bar{B}_{t,i}\}\frac{\partial \bar u_{i,M}}{\partial \psi} \right]
   \label{eq:36}
\end{align}
Note that  we partition the set $\bX_i$ and there is a unique maximum value. Thus, only one subset $S\in\zeta_t(\bX_i)$ includes the element leading to the maximum value $\bar u_{i,M}$.  If we do not choose such a subset $S$, the gradient in \eqref{eq:36} becomes zero.
Since we sample uniformly a single subset $\bS$ from $\zeta_t(\bX_i)$, \ie $\bzeta_t(\bX_i)=\{\bS \}$, we get
\begin{align}
    \EE_{\bzeta_t(\bX_i)}\left[\frac{\lvert \zeta_t(\bX_i) \rvert}{\lvert \bzeta_t(\bX_i)\rvert} \frac{\partial \bell_i(h_\psi(\bX_i))}{\partial h_\psi(\bX_i)}\one\{\bar u_{i,M}\in \bar{B}_{t,i} \} \frac{\partial \bar u_{i,M}}{\partial \psi}\right] &=\EE_{\bS}\left[\frac{\lvert \zeta_t(\bX_i) \rvert}{\lvert \bzeta_t(\bX_i)\rvert} \frac{\partial \bell_i(h_\psi(\bX_i))}{\partial h_\psi(\bX_i)}\one\{\bar u_{i,M}=h_\psi(\bS) \} \frac{\partial \bar u_{i,M}}{\partial \psi}\right] \nonumber \\
    &=\frac{\lvert \zeta_t(\bX_i) \rvert}{1} \frac{1}{\lvert \zeta_t(\bX_i)\rvert}\frac{\partial \bell_i(h_\psi(\bX_i))}{\partial h_\psi(\bX_i)} \frac{\partial \bar u_{i,M}}{\partial \psi} \nonumber \\
    &= \frac{\partial \bell_i(h_\psi(\bX_i))}{\partial h_\psi(\bX_i)} \frac{\partial \bar u_{i,M}}{\partial \psi} \label{eq:37}
\end{align}
If we apply the right hand side of \eqref{eq:37} to~\eqref{eq:36}, we obtain
\begin{align*}
     \EE_{((\bX_i,\by_i))_{i=1}^m}\EE_{(\bzeta_t(\bX_i))_{i=1}^m}\left[\frac{\partial L_{t,1}(\theta,\lambda,\psi)}{\partial \psi}\right] 
     &= \frac{1}{2m}\sum_{i=1}^m \EE_{((\bX_i,\by_i))_{i=1}^m}\left[\frac{\partial \bell_i(h_\psi(\bX_i))}{\partial h_\psi(\bX_i)} \frac{\partial \bar u_{i,M}}{\partial \psi}\right] \\
     &= \frac{1}{2m}\sum_{i=1}^m\frac{1}{n}\sum_{j=1}^n\frac{\partial \ell_j(h_\psi(X_j))}{\partial h_\psi(X_j)} \frac{\partial  u_{j,M}}{\partial \psi} \\
     &= \frac{1}{2n}\sum_{i=1}^n \frac{\partial \ell_i(h_\psi(X_i))}{\partial h_\psi(X_i)} \frac{\partial  u_{i,M}}{\partial \psi} \\
     &= \frac{\partial L(\theta,\lambda, \psi)}{\partial \psi}.
\end{align*}
With the chain rule, we get,
\begin{align*}
    \frac{\partial L(\theta, \lambda, \psi)}{\partial \theta} = \frac{1}{2n}\sum_{i=1}^n\left(\frac{\partial \ell_i(h_\psi(X_i))}{\partial \theta} + \frac{\partial(\ell_i \circ g_\lambda)(f_\theta(X_i)) }{\partial \theta} \right).
\end{align*}
Since we have already shown that 
\begin{align}
     \frac{1}{2n}\sum_{i=1}^n\frac{\partial( \ell_i \circ g_\lambda)(f_\theta(X_i)) }{\partial \theta} = \EE_{((\bX_i, \by_i))_{i=1}^m}\EE_{(\bzeta_t(\bX_i))_{i=1}^m}\left[\frac{1}{2m} \sum_{i=1}^m \frac{\lvert \zeta_t(\bX_i) \rvert}{\lvert \bzeta_t(\bX_i) \rvert}\frac{\partial (\bell_i \circ f^{\bzeta_t}_\theta)(\bX_i) }{\partial \theta}\right]
\label{eq:first-theta}
\end{align}
in~\cref{thm:umbaised},
it suffices to show that 
\begin{align}
    \frac{1}{2n} \sum_{i=1}^n \frac{\partial \ell_i(h_\psi(X_i))}{\partial \theta} = \EE_{((\bX_i, \by_i))_{i=1}^m}\EE_{(\bzeta_t(\bX_i))_{i=1}^m}\left[\frac{1}{2m}\sum_{i=1}^m\frac{\lvert \zeta_t(\bX_i) \rvert}{\lvert \bzeta_t(\bX_i) \rvert}\frac{\partial\bell_i(h^{\bzeta_t}_\psi(\bX_i))}{\partial \theta}\right].
\label{eq:38}
\end{align}
For the left hand side of~\eqref{eq:38}, we have that
\begin{align}
\begin{split}
    \frac{1}{2n} \sum_{i=1}^n \frac{\partial \ell_i(h_\psi(X_i))}{\partial \theta} &= \frac{1}{2n}\sum_{i=1}^n \frac{\partial \ell_i(h_\psi(X_i))}{\partial h_\psi(X_i)} \frac{\partial \max\{h_\psi(S)\mid S\in\zeta(X_i)\}}{\partial \theta} \\
    &= \frac{1}{2n}\sum_{i=1}^n \frac{\partial \ell_i(h_\psi(X_i))}{\partial h_\psi(X_i)} \frac{\partial  u_{i,M}}{\partial \theta} \\
    &= \frac{1}{2n}\sum_{i=1}^n \frac{\partial \ell_i(h_\psi(X_i))}{\partial h_\psi(X_i)} \frac{\partial(  w^\top \phi(\rvx_{i,M})+b)}{\partial \theta} \\
    &= \frac{1}{2n}\sum_{i=1}^n \frac{\partial \ell_i(h_\psi(X_i))}{\partial h_\psi(X_i)} w^\top\frac{\partial\phi(\rvx_{i,M})}{\partial \theta}.
\end{split}
\label{eq:39}
\end{align}
For the right hand side of~\eqref{eq:38}, with linearity of expectation, we obtain
\begin{align}
    \EE_{((\bX_i, \by_i))_{i=1}^m}\EE_{(\bzeta_t(\bX_i))_{i=1}^m}\left[\frac{1}{2m}\sum_{i=1}^m\frac{\lvert \zeta_t(\bX_i) \rvert}{\lvert \bzeta_t(\bX_i) \rvert}\frac{\partial\bell_i(h^{\bzeta_t}_\psi(\bX_i))}{\partial \theta}\right] &= \frac{1}{2m}\sum_{i=1}^m \EE_{((\bX_i, \by_i))_{i=1}^m}\EE_{(\bzeta_t(\bX_i))_{i=1}^m}\left[\frac{\lvert \zeta_t(\bX_i) \rvert}{\lvert \bzeta_t(\bX_i) \rvert} \frac{\partial\bell_i(h^{\bzeta_t}_\psi(\bX_i))}{\partial \theta} \right].
    \label{eq:40}
\end{align}
Now we expand the summand in the right hand side,
\begin{align}
    \EE_{\bzeta_t(\bX_i)}\left[\frac{\lvert \zeta_t(\bX_i) \rvert}{\lvert \bzeta_t(\bX_i) \rvert} \frac{\partial\bell_i(h^{\bzeta_t}_\psi(\bX_i))}{\partial \theta} \right] &= \EE_{\bzeta_t(\bX_i)}\left[\frac{\lvert \zeta_t(\bX_i) \rvert}{\lvert \bzeta_t(\bX_i) \rvert}\frac{\partial\bell_i(h_\psi(\bX_i))}{\partial h_\psi(\bX_i)}\one\{\bar u_{i,M}\in  \bar{B}_{t,i}\} \frac{\partial \bar u_{i,M}}{\partial \theta} \right] \nonumber \\
    &= \EE_{\bS}\left[\frac{\lvert \zeta_t(\bX_i) \rvert}{\lvert \bzeta_t(\bX_i) \rvert}\frac{\partial\bell_i(h_\psi(\bX_i))}{\partial h_\psi(\bX_i)}\one\{\bar u_{i,M}=h_\psi(\bS) \} \frac{\partial \bar u_{i,M}}{\partial \theta} \right] \nonumber \\
    &= \frac{\lvert \zeta_t(\bX_i) \rvert}{1} \frac{1}{\lvert \zeta_t(\bX_i)\rvert}\frac{\partial\bell_i(h_\psi(\bX_i))}{\partial h_\psi(\bX_i)} \frac{\partial \bar u_{i,M}}{\partial \theta} \nonumber\\
    &=\frac{\partial\bell_i(h_\psi(\bX_i))}{\partial h_\psi(\bX_i)}w^\top \frac{\partial \phi(\bar \rvx_{i,M})}{\partial \theta}. \label{eq:41}
\end{align}
Now we apply the right hand side of~\eqref{eq:41} to~\eqref{eq:40}. Then we get,
\begin{align}
\begin{split}
    \EE_{((\bX_i, \by_i))_{i=1}^m}\EE_{(\bzeta_t(\bX_i))_{i=1}^m}\left[\frac{1}{2m}\sum_{i=1}^m\frac{\lvert \zeta_t(\bX_i) \rvert}{\lvert \bzeta_t(\bX_i) \rvert}\frac{\partial\bell_i(h^{\bzeta_t}_\psi(\bX_i))}{\partial \theta}\right] &= \frac{1}{2m} \sum_{i=1}^m\EE_{((\bX_i, \by_i))_{i=1}^m}\left[\frac{\partial \bell_i(h_\psi(\bX_i))}{\partial h_\psi(\bX_i)} w^\top \frac{\partial \phi(\bar \rvx_{i,M})}{\partial \theta} \right] \\
    &= \frac{1}{2m} \sum_{l=1}^m\frac{1}{n}\sum_{i=1}^n\frac{\partial \ell_i(h_\psi(X_i))}{\partial h_\psi(X_i)} w^\top \frac{\partial \phi(\rvx_{i,M})}{\partial \theta} \\
     &= \frac{1}{2n} \sum_{i=1}^n\frac{\partial \ell_i(h_\psi(X_i))}{\partial h_\psi(X_i)} w^\top \frac{\partial \phi(\rvx_{i,M})}{\partial \theta} \\
     &=\frac{1}{2n} \sum_{i=1}^n \frac{\partial \ell_i(h_\psi(X_i))}{\partial \theta}.
\label{eq:second-theta}
\end{split}
\end{align}
Finally combining~\eqref{eq:first-theta} and~\eqref{eq:second-theta}, we arrive at the the conclusion:
\begin{align*}
    &\EE_{((\bX_i, \by_i))_{i=1}^m}\EE_{(\bzeta_t(\bX_i))_{i=1}^m}\left[\frac{\partial L_{t,1}(\theta, \lambda, \psi)}{\partial \theta} \right] \\
   &=\EE_{((\bX_i, \by_i))_{i=1}^m}\EE_{(\bzeta_t(\bX_i))_{i=1}^m}\left[\frac{1}{2m}\sum_{i=1}^m\frac{\lvert \zeta_t(\bX_i)\rvert}{\lvert \bzeta_t(\bX_i)\rvert}\left(\frac{\partial \bell_i(h^{\bzeta_t}_\psi(\bX_i))}{\partial \theta} + \frac{\partial(\bell_i \circ g_\lambda)(f^{\bzeta_t}_\theta(\bX_i)) }{\partial \theta} \right)\right] \\
   &=  \frac{1}{2n} \sum_{i=1}^n \frac{\partial \ell_i(h_\psi(X_i))}{\partial \theta}+\frac{1}{2n}\sum_{i=1}^n\frac{\partial( \ell_i \circ g_\lambda)(f_\theta(X_i)) }{\partial \theta} \\
   &=\frac{\partial L(\theta, \lambda, \psi)}{\partial \theta}.
\end{align*}
\end{proof}

\subsection{SSE's Training Method Is a Biased Approximation to the Full Set Gradient}
\label{app:bias}
In this section, we show that sampling a single subset, and computing the gradient as an approximation to the gradient of $L(\theta, \lambda)$, which is proposed by~\citet{mbc}, is a biased estimation of full set gradient. Since $\hf_\theta$ with an attention activation function comprised of $\nu_1$ and a sigmoid for $\sigma$ is equivalent to a Slot Set Encoder, and is a special case of UMBC, we focus on the gradient of $\hf_\theta$. Specifically, at every iteration $t\in\NN_+$, we sample a mini-batch $((\bX_i, \by_i))_{i=1}^m$ from the training dataset $((X_i,y_i))_{i=1}^n$. We choose a partition $\zeta_t(\bX_i)$ for each $\bX_i$ and sample a single subset $\bS_i$ from the partition $\zeta_t(\bX_i)$. If we compute the gradient of the loss as
\begin{align}
\frac{\partial }{\partial\lambda}\left(\frac{1}{m} \sum_{i=1}^m  (\bell_i \circ g_\lambda) (\hf_\theta(\bS_i))\right),
\label{eq:43}
\end{align}
then it is a \emph{biased estimation} of $\frac{\partial L(\theta, \lambda)}{\partial \lambda}$, where $\bell_i(\cdot)$ is defined by $\bell_i(q)\coloneqq \ell(q, \by_i)$.

\begin{proof}

The gradient of $L(\theta,\lambda)$ with respect to the parameter $\lambda$ is
\begin{align}\label{eq:fullset-grad}
\begin{split}
    \frac{\partial L(\theta, \lambda)}{\partial \lambda} 
    &= \frac{1}{n} \sum_{i=1}^n\frac{\partial(\ell_i\circ g_\lambda)(\hf_\theta(X_i))}{ \partial \lambda} \\
    &= \frac{1}{n} \sum_{i=1}^n\frac{\partial(\ell_i\circ g_\lambda) \left( \sum_{S \in \zeta(X_{i})} \hf_{\theta}(S) \right)}{ \partial \lambda}
\end{split}
\end{align}
for a partition $\zeta_t(X_i)$ of the set $X_i$, where $\ell_i(\cdot)$ is defined by $\ell_i(q)\coloneqq \ell(q,y_i)$. 
However, the expectation of~\eqref{eq:43} is not equal to the full set gradient in~\eqref{eq:fullset-grad}:
\begin{align}
\begin{split}
    \EE_{((\bX_{i}, \by_{i}))^m_{i=1}} \EE_{ \bS_i  } \left[\frac{1}{m}\sum_{i=1}^m\frac{\partial(\bell_i\circ g_\lambda) (\hf_\theta(\bS_i))}{ \partial \lambda} \right] &= \frac{1}{m}\sum_{i=1}^m\EE_{((\bX_{i}, \by_{i}))^m_{i=1}} \EE_{\bS_i }\left[\frac{\partial(\bell_i\circ g_\lambda) (\hf_\theta(\bS_i))}{ \partial \lambda} \right] \\
     &= \frac{1}{n} \sum_{i=1}^n \frac{1}{ \vert \zeta_t(X_{i}) \vert } \sum_{ S \in \zeta_t(X_{i}) } \frac{\partial(\ell_i\circ g_\lambda) (\hf_\theta(S)) }{ \partial \lambda}  \\
     &\textcolor{red}{\neq} \frac{1}{n} \sum_{i=1}^n\frac{\partial(\ell_i\circ g_\lambda) \left(\sum_{S \in \zeta_t(X_{i})} \hf_{\theta}(S) \right)}{ \partial \lambda} \\
     &= \frac{1}{n} \sum_{i=1}^n\frac{\partial(\ell_i\circ g_\lambda)( \hat{f}_\theta(X_i))}{ \partial \lambda}
\label{eq:28}
\end{split}
\end{align}
To see why this is the case, we analyze the case of real valued function $g_\lambda:\RR^d\to \RR$ with $\lambda\in\RR^d$ and a squared loss function 
\begin{gather*}
    \ell(z_i, y_i) \coloneqq \frac{1}{2} (z_i-y_i)^2  \\
    z_i \coloneqq \lambda^\top \hf_\theta(X_i) \coloneqq g_\lambda(\hf_\theta(X_i))  \in \RR.
\end{gather*}
Since $\hf_\theta$ is sum decomposable, \ie $\hf_\theta(X_i) = \sum_{j=1}^{N_i} \hf_\theta(\rvx_{i,j})$ where $X_i=\{\rvx_{i,1}, \ldots, \rvx_{i, N_i}\}$, the full set gradient from~\eqref{eq:fullset-grad} becomes,
\begin{align}
\begin{split}
\frac{1}{n} \sum_{i=1}^n \frac{\partial}{\partial \lambda} \left(\frac{1}{2} (z_i-y_i)^2 \right) &= \frac{1}{n} \sum_{i=1}^n (z_i-y_i)\frac{\partial z_i}{\partial \lambda} \\
&= \frac{1}{n} \sum_{i=1}^n (z_i-y_i)\hf_\theta(X_i) \\
&=\frac{1}{n} \sum_{i=1}^n (z_i-y_i)\left(\sum_{l=1}^{N_i}\hf_\theta(\rvx_{i,l}) \right)\\
&= \frac{1}{n} \sum_{i=1}^n \left(\lambda^\top\left(\sum_{j=1}^{N_i} \hf_\theta(\rvx_{i,j})\right)-y_i\right)\left(\sum_{l=1}^{N_i}\hf_\theta(\rvx_{i,l}) \right) \\
&= \frac{1}{n} \sum_{i=1}^n \left(\left(\sum_{j=1}^{N_i} \lambda^\top\hf_\theta(\rvx_{i,j})\right)-y_i\right)\left(\sum_{l=1}^{N_i}\hf_\theta(\rvx_{i,l}) \right).\\
&= \frac{1}{n} \sum_{i=1}^n \left(\sum_{j=1}^{N_i} \lambda^\top\hf_\theta(\rvx_{i,j})-\frac{y_i}{N_i}\right)\left(\sum_{l=1}^{N_i}\hf_\theta(\rvx_{i,l}) \right).
\end{split}
\end{align}
Assume that $\zeta_t(X_i)  = \{\{\rvx_{i,1}\}, \ldots, \{\rvx_{i,N_i}\}\}$  and we sample a single subset $\bS_i$ from the partition $\zeta_t(\bX_i)$ for all $i\in\{1,\ldots, m\}$ and $t\in\NN_+$.
Then gradient of the subsampling a single subset from~\eqref{eq:28} becomes,
\begin{align}
    \begin{split}
        \frac{1}{n}\sum_{i=1}^n \frac{1}{N_i}\sum_{j=1}^{N_i} \frac{1}{2}\frac{\partial \left(\lambda^\top \hf_\theta(\rvx_{i,j})-y_i\right)^2}{\partial\lambda} &=\frac{1}{n}\sum_{i=1}^n\sum_{j=1}^{N_i}\left(\frac{\lambda^\top\hf_\theta(\rvx_{i,j})-y_i}{N_i}\right)\hf_\theta(\rvx_{i,j}) \\
        &\textcolor{red}{\neq} \frac{1}{n} \sum_{i=1}^n \left(\sum_{j=1}^{N_i} \lambda^\top\hf_\theta(\rvx_{i,j})-\frac{y_i}{N_i}\right)\left(\sum_{l=1}^{N_i}\hf_\theta(\rvx_{i,l}) \right).
    \end{split}
\end{align}

Therefore, the random subsampling of a a single subset in the method proposed by~\citet{mbc} \emph{is not} an unbiased estimate of the gradient of the full set.
\end{proof}


\section{Optimization} \label{app:opt}
Define $\btheta=(\theta,\lambda)$ and $g_t(\btheta)=(\frac{\partial L_{t,1}(\theta,\lambda)}{\partial \theta},\frac{\partial L_{t,2}(\theta,\lambda)}{\partial \lambda})\T$. We assume that $\ell(q,y)\ge 0$ for all $(q,y)$.
Let $(\btheta_t)_{t\in \NN_+}$ be a sequence generated by $\btheta_{t+1}= \btheta_{t} - \eta_t g_t(\btheta)$ with an initial point $\btheta_1$ and a step size sequence $(\eta_t)_{t \in \NN_+}$, where $\btheta_t\in \Rcal\subseteq \RR^D$ for $t \in \NN_+$ with an open convex set $\Rcal$. Here, $\RR^D$ is an open set and thus it is allowed to choose $\Rcal = \RR^D$ (or any other open convex set). We do not assume that the loss function or model is convex. We also do not make any assumption on the initial point $\btheta_1$.
To analyze the optimization behavior formally, we consider the following standard assumption in the literature \citep{lee2016gradient,mertikopoulos2020almost,fehrman2020convergence}:

\begin{assumption} \label{assumption:1}
There exist  $\varsigma>0$ such that for any $\btheta ,\btheta'\in \Rcal$, $t\in [T]$, and $k \in \{1,2\}$, 
\begin{align*}
\|\nabla L_{t,k}(\btheta) - \nabla L_{t,k}(\btheta')\|_{2} \le \varsigma\|\btheta - \btheta'\|_2. \end{align*}
\end{assumption}

We use the following  lemma on a general function from a previous work \citep[Lemma 2]{kawaguchi2022understanding}:
\begin{lemma} \label{lemma:1}
For any differentiable function $\varphi: \dom(\varphi) \rightarrow \RR$ with an open  convex set $\dom(\varphi) \subseteq \RR^{n_\varphi}$, if   $\|\nabla \varphi(z') - \nabla \varphi(z)\| _{2}\le \varsigma_{ \varphi} \|z'-z\|_{2}$ for all $z,z' \in \dom(\varphi)$, then
\begin{align}
\varphi(z') \le \varphi(z) + \nabla \varphi(z)\T (z'-z) + \frac{\varsigma_{ \varphi}}{2} \|z'-z\|^2_2 \quad   \text{for all $z,z' \in \dom(\varphi) $}.
\end{align}
\end{lemma}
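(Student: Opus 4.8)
The plan is to establish this quadratic upper bound --- the standard \emph{descent lemma} --- by reducing to a one-dimensional integral along the segment joining the two points and invoking the fundamental theorem of calculus. First I would fix $z, z' \in \dom(\varphi)$ and define $g : [0,1] \to \RR$ by $g(t) = \varphi(z + t(z'-z))$. Because $\dom(\varphi)$ is convex, the entire segment $\{z + t(z'-z) : t \in [0,1]\}$ remains in $\dom(\varphi)$, so $g$ is well-defined, and by the chain rule it is differentiable with $g'(t) = \nabla\varphi(z + t(z'-z))\T (z'-z)$. Since $\nabla\varphi$ is Lipschitz it is in particular continuous, hence $g'$ is continuous and the fundamental theorem of calculus applies, giving $\varphi(z') - \varphi(z) = g(1) - g(0) = \int_0^1 g'(t)\,dt$.

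Next I would subtract the linear term $\nabla\varphi(z)\T(z'-z) = g'(0)$ and rewrite the remainder as a single integral:
\begin{align*}
\varphi(z') - \varphi(z) - \nabla\varphi(z)\T(z'-z) = \int_0^1 \left( \nabla\varphi(z + t(z'-z)) - \nabla\varphi(z) \right)\T (z'-z)\,dt.
\end{align*}
Applying Cauchy--Schwarz inside the integral and then the Lipschitz hypothesis with the points $z + t(z'-z)$ and $z$ (so that the gradient difference has norm at most $\varsigma_{\varphi}\, t\, \|z'-z\|_2$, using $t \ge 0$) bounds the integrand by $\varsigma_{\varphi}\, t\, \|z'-z\|_2^2$. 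Integrating $\int_0^1 t\,dt = \tfrac{1}{2}$ then produces the claimed $\tfrac{\varsigma_{\varphi}}{2}\|z'-z\|_2^2$ term, completing the argument.

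I expect no substantive obstacle here: the result is classical and is only cited from prior work in the excerpt. The only points requiring care are purely technical --- ensuring the segment stays in the domain (handled by convexity of $\dom(\varphi)$) and justifying the use of the fundamental theorem of calculus (handled by continuity of $\nabla\varphi$, which follows from its Lipschitzness). An alternative route would apply the mean value theorem to $g'$ instead of the integral estimate, but the integral form is cleaner since it avoids extracting an unspecified intermediate point and makes the factor $\tfrac{1}{2}$ appear transparently.
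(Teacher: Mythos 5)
Your proof is correct and complete: convexity of $\dom(\varphi)$ keeps the segment in the domain, Lipschitz continuity of $\nabla\varphi$ justifies the fundamental theorem of calculus for $g$, and the Cauchy--Schwarz plus Lipschitz estimate with the factor $t$ integrates to exactly the $\tfrac{\varsigma_\varphi}{2}\|z'-z\|_2^2$ term. Note that the paper itself offers no proof of this lemma --- it is cited verbatim from prior work (Lemma 2 of the referenced Kawaguchi paper) --- so your argument simply supplies the standard descent-lemma derivation that the citation points to; there is nothing in the paper's treatment to diverge from.
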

In turn, Lemma \ref{lemma:1} implies the following lemma:
\begin{lemma} \label{lemma:2}
For any differentiable function $\varphi: \dom(\varphi) \rightarrow \RR_{\ge 0}$ with an open  convex set $\dom(\varphi) \subseteq \RR^{n_\varphi}$ such that    $\|\nabla \varphi(z') - \nabla \varphi(z)\| \le \varsigma_{ \varphi} \|z'-z\|$ for all $z,z' \in \dom(\varphi)$, the following holds: \text{for all $z\in \dom(\varphi) $} such that  $z- \frac{1}{\varsigma_{\varphi}} \nabla\varphi(z) \in \dom(\varphi) $, 
\begin{align}
\|\nabla\varphi(z)\|_2^2 \le 2\varsigma_{ \varphi}\varphi(z)  \quad   \text{for all $z\in \dom(\varphi) $}.
\end{align}
\end{lemma}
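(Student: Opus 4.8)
The plan is to derive the stated inequality directly from the descent-type bound of Lemma~\ref{lemma:1} by evaluating it at a judiciously chosen second point, and then invoking nonnegativity of $\varphi$. The role of the hypothesis $z - \frac{1}{\varsigma_{\varphi}}\nabla\varphi(z) \in \dom(\varphi)$ is precisely to guarantee that this chosen point lies in the domain, so that Lemma~\ref{lemma:1} applies to it.

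Concretely, fix any $z \in \dom(\varphi)$ satisfying the hypothesis, and set $z' = z - \frac{1}{\varsigma_{\varphi}}\nabla\varphi(z)$, which is in $\dom(\varphi)$ by assumption. Since $\|\nabla\varphi(z') - \nabla\varphi(z)\|_2 \le \varsigma_{\varphi}\|z'-z\|_2$ for all $z,z'$, Lemma~\ref{lemma:1} gives $\varphi(z') \le \varphi(z) + \nabla\varphi(z)^\top (z'-z) + \frac{\varsigma_{\varphi}}{2}\|z'-z\|_2^2$. Substituting $z'-z = -\frac{1}{\varsigma_{\varphi}}\nabla\varphi(z)$ makes the inner-product term equal to $-\frac{1}{\varsigma_{\varphi}}\|\nabla\varphi(z)\|_2^2$ and the quadratic term equal to $\frac{1}{2\varsigma_{\varphi}}\|\nabla\varphi(z)\|_2^2$, so the two combine to yield $\varphi(z') \le \varphi(z) - \frac{1}{2\varsigma_{\varphi}}\|\nabla\varphi(z)\|_2^2$.

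The final step is to use that $\varphi$ maps into $\RR_{\ge 0}$, so $\varphi(z') \ge 0$; combined with the previous inequality this gives $0 \le \varphi(z) - \frac{1}{2\varsigma_{\varphi}}\|\nabla\varphi(z)\|_2^2$, and rearranging produces exactly $\|\nabla\varphi(z)\|_2^2 \le 2\varsigma_{\varphi}\varphi(z)$. Since $z$ was an arbitrary point of $\dom(\varphi)$ meeting the hypothesis, the claim follows. I do not anticipate any real obstacle here: the only subtle point is that the descent bound must be instantiated at a point known to lie in the domain, which is why the hypothesis on $z - \frac{1}{\varsigma_{\varphi}}\nabla\varphi(z)$ is stated, and the nonnegativity of $\varphi$ (which follows from the loss assumption $\ell(q,y)\ge 0$ in the intended application) is what converts the descent inequality into the desired gradient-norm bound.
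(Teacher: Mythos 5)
Your proof is correct and follows essentially the same route as the paper's: both instantiate Lemma~\ref{lemma:1} at $z' = z - \frac{1}{\varsigma_{\varphi}}\nabla\varphi(z)$, combine the linear and quadratic terms into $-\frac{1}{2\varsigma_{\varphi}}\|\nabla\varphi(z)\|_2^2$, and invoke nonnegativity of $\varphi$ to rearrange. The only cosmetic difference is that the paper first dispenses with the trivial case $\nabla\varphi(z)=0$, a case split your argument handles implicitly anyway.
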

\begin{proof}
Since  $\varphi: \dom(\varphi) \rightarrow \RR_{\ge 0}$ (nonnegative), if $\nabla\varphi(z)=0$, the desired statement holds. Thus, we consider the remaining case of  $\nabla\varphi(z)\neq0$ in the rest of this proof. We invoke Lemma \ref{lemma:1} with $z'=z- \frac{1}{\varsigma_{\varphi}} \nabla\varphi(z)$, yielding
\begin{align*}
0\le \varphi(z') &\le \varphi(z) + \nabla \varphi(z)\T (z'-z) + \frac{\varsigma_{ \varphi}}{2} \|z'-z\|^2_2
\\ & = \varphi(z) -  \frac{1}{\varsigma_{\varphi}}\|\nabla \varphi(z)\| _{2}^{2}+ \frac{1}{2\varsigma_{\varphi}} \|\nabla\varphi(z)\|^2_2
\\ & = \varphi(z) -  \frac{1}{2\varsigma_{\varphi}}\|\nabla \varphi(z)\| _{2}^{2} \end{align*}     
By rearranging, this implies that $\|\nabla\varphi(z)\|_2^2 \le 2\varsigma_{ \varphi}\varphi(z)$. 
\end{proof}

Since we are dealing with a general non-convex and non-invex function (as the choice of architecture and loss is very flexible) where gradient-based optimization might only converge to a stationary point (to avoid the curse of dimensionality), we consider the convergence in terms of stationary points of $L$:  
\begin{theorem} Suppose that Assumption \ref{assumption:1} holds and the step size sequence $(\eta_t)_{t \in \NN_+}$ satisfies $\sum_{t=1}^{\infty}\eta_t^2 < \infty$.  Then,
there exists a constant $c$ independent of $(t,T)$ such that 
$$
\min_{t\in [T]}\EE[\|\nabla L(\btheta_{t})\|_2^2] \le   \frac{  c\EE[L(\btheta_{1})]}{\sum_{t=1}^T\eta_t}.
$$
\end{theorem}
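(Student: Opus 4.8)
The plan is to run the standard descent analysis for smooth nonconvex stochastic optimization, but to replace the usual bounded-variance hypothesis by the self-bounding property of nonnegative smooth functions supplied by Lemma~\ref{lemma:2}. First I would record two consequences of the earlier results. By Theorem~\ref{thm:umbaised}, conditioning on $\btheta_t$ (which is measurable with respect to the randomness before iteration $t$) and using the fresh minibatch/subset draws gives $\EE_t[g_t(\btheta_t)]=\nabla L(\btheta_t)$, i.e. $g_t$ is an unbiased estimator of $\nabla L$. Moreover, writing $\nabla L=(\EE[\partial_\theta L_{t,1}],\EE[\partial_\lambda L_{t,2}])$ and using $\|\EE[\cdot]\|\le\EE[\|\cdot\|]$ together with Assumption~\ref{assumption:1} blockwise, $L$ itself has a $\varsigma_L$-Lipschitz gradient with $\varsigma_L=\sqrt{2}\,\varsigma$; hence, $\Rcal=\RR^D$ being open convex, Lemma~\ref{lemma:1} applied to $\varphi=L$ yields the descent inequality $L(\btheta_{t+1})\le L(\btheta_t)-\eta_t\nabla L(\btheta_t)\T g_t(\btheta_t)+\tfrac{\varsigma_L\eta_t^2}{2}\|g_t(\btheta_t)\|_2^2$.

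The key step is the second-moment bound. Since $\ell\ge 0$ and the importance weights are positive, every realization of $L_{t,1}$ and $L_{t,2}$ is a nonnegative, $\varsigma$-smooth function on $\RR^D$, so Lemma~\ref{lemma:2} gives $\|\nabla L_{t,k}(\btheta_t)\|_2^2\le 2\varsigma\,L_{t,k}(\btheta_t)$ pointwise for $k=1,2$. Because $\|g_t\|_2^2=\|\partial_\theta L_{t,1}\|_2^2+\|\partial_\lambda L_{t,2}\|_2^2\le\|\nabla L_{t,1}\|_2^2+\|\nabla L_{t,2}\|_2^2$, this controls $\|g_t(\btheta_t)\|_2^2$ by $2\varsigma(L_{t,1}+L_{t,2})(\btheta_t)$. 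I then bound $\EE_t[L_{t,1}+L_{t,2}]$ by a constant multiple of $L(\btheta_t)$: the stop-gradient leaves function values intact, so as values both $L_{t,1}$ and $L_{t,2}$ equal $\tfrac1m\sum_i w_i\,\ell(g_\lambda(f_\theta(\bX_i)),y_i)$ with weights $w_i\in\{|\zeta_t|/|\bzeta_t|,\,1/|\bzeta_t|\}$ that are bounded by the maximal number of subsets in a partition, itself at most $\sup_{X\in\Xcal}|X|\in\NN$; taking the minibatch expectation turns $\tfrac1m\sum_i\ell(\cdots)$ into $L(\btheta_t)$, so $\EE_t[\|g_t(\btheta_t)\|_2^2]\le 2\varsigma C\,L(\btheta_t)$ for a constant $C$ independent of $t$.

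Combining these gives the one-step recursion $\EE_t[L(\btheta_{t+1})]\le(1+\kappa\eta_t^2)L(\btheta_t)-\eta_t\|\nabla L(\btheta_t)\|_2^2$ with $\kappa=\varsigma_L\varsigma C$. To telescope despite the multiplicative factor, I would renormalize: set $a_{t}=\prod_{s=1}^{t-1}(1+\kappa\eta_s^2)^{-1}$, which is bounded below by $1/c$ with $c=\prod_{s\ge1}(1+\kappa\eta_s^2)<\infty$, the product converging precisely because $\sum_t\eta_t^2<\infty$ (take logs and use $\log(1+x)\le x$). Multiplying the recursion by $a_{t+1}$, taking full expectations, and summing over $t\in[T]$ telescopes the $a_t\EE[L(\btheta_t)]$ terms; dropping the nonnegative final term $a_{T+1}\EE[L(\btheta_{T+1})]\ge0$ leaves $\sum_{t=1}^T a_{t+1}\eta_t\EE[\|\nabla L(\btheta_t)\|_2^2]\le\EE[L(\btheta_1)]$, and since $1/a_{t+1}\le c$ this gives $\sum_{t=1}^T\eta_t\EE[\|\nabla L(\btheta_t)\|_2^2]\le c\,\EE[L(\btheta_1)]$. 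Finally $\min_{t\in[T]}\EE[\|\nabla L(\btheta_t)\|_2^2]\le(\sum_t\eta_t)^{-1}\sum_t\eta_t\EE[\|\nabla L(\btheta_t)\|_2^2]$, since a minimum is at most any weighted average, which yields the claimed bound.

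I expect the main obstacle to be the second-moment estimate rather than the telescoping: one must notice that the nonnegativity of $\ell$ plus Assumption~\ref{assumption:1} lets Lemma~\ref{lemma:2} replace a bounded-variance assumption with the self-bounding inequality $\|\nabla L_{t,k}\|_2^2\le 2\varsigma L_{t,k}$, and one must verify that the importance weights $|\zeta_t|/|\bzeta_t|$ remain uniformly bounded so that $\EE_t[\|g_t\|_2^2]$ is at most a $t$-independent constant times $L(\btheta_t)$. The multiplicative-factor telescoping is then routine given $\sum_t\eta_t^2<\infty$.
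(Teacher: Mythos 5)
Your proposal is correct and follows essentially the same route as the paper's proof: unbiasedness of $g_t$ from Theorem~\ref{thm:umbaised} plus Jensen's inequality to get $\sqrt{2}\varsigma$-smoothness of $L$, Lemma~\ref{lemma:1} for the descent step, Lemma~\ref{lemma:2} for the self-bounding second-moment estimate $\|g_t(\btheta_t)\|_2^2 \le 2\varsigma\,(L_{t,1}+L_{t,2})(\btheta_t)$, and boundedness of the ratios $|\zeta_t(\bX_i)|/|\bzeta_t(\bX_i)|$ to control $\EE_t[L_{t,1}+L_{t,2}]$ by a constant multiple of $L(\btheta_t)$. The only (cosmetic) divergence is in the final bookkeeping: you telescope the recursion after multiplying by the weights $\prod_{s\le t}(1+\kappa\eta_s^2)^{-1}$, whereas the paper first drops the gradient term to obtain the growth bound $\EE[L(\btheta_t)]\le\exp\bigl(\sqrt{2}\varsigma^2(1+a)\sum_{j<t}\eta_j^2\bigr)\EE[L(\btheta_1)]$ and then substitutes it back before summing; both devices exploit $\sum_t \eta_t^2<\infty$ to produce the $(t,T)$-independent constant $c$, and your deterministic bound on the importance weights via $\sup_{X\in\Xcal}|X|$ is, if anything, slightly cleaner than the paper's expectation-of-the-maximum-ratio constant $a$.
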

\begin{proof} 
Assumption \ref{assumption:1}  implies that
$$
\|\nabla L_{t,k}(\btheta) - \nabla L_{t,k}(\btheta')\|_{2} ^{2}\le \varsigma^{2}\|\btheta - \btheta'\|_2^{2}
$$
which implies that 
$$
\|g_t(\btheta)-  g_t(\btheta')\|_{2}^2=\|\nabla L_{t,1}(\btheta) - \nabla L_{t,1}(\btheta')\|_{2}^{2}+\|\nabla L_{t,2}(\btheta) - \nabla L_{t,2}(\btheta')\|_{2}^2 \le2 \varsigma^{2}\|\btheta - \btheta'\|_2^{2}.
$$
This implies that 
$$
\|g_t(\btheta)-  g_t(\btheta')\|_{2} \le \sqrt{2} \varsigma\|\btheta - \btheta'\|_2
$$ 
 
 Using this and Theorem \ref{thm:umbaised}  along with Jensen's inequality, we have that for any $\btheta ,\btheta'\in \Rcal$,
\begin{align*}
\|\nabla L(\btheta) - \nabla L(\btheta')\|_{2}  =\| \EE[g_t(\btheta)] -  \EE[g_t(\btheta')]\|_{2} \le \EE[\|g_t(\btheta) -  g_t(\btheta')\|_{2} ]\le \sqrt{2}\varsigma\|\btheta - \btheta'\|_{2}.    
\end{align*}
Thus, $L$ satisfies the conditions of  Lemma \ref{lemma:1} and Lemma \ref{lemma:2}. Since $\btheta_t\in \Rcal\subseteq \RR^D$ for $t \in \NN_+$, using Lemma \ref{lemma:1} for the function $ L$, we have that 
\begin{align*}
 L(\btheta_{t+1})\le L(\btheta_{t})+\nabla L(\btheta_{t})\T (\btheta_{t+1}-\btheta_{t})+ \frac{ \sqrt{2}\varsigma \|\btheta_{t+1}-\btheta_{t}\|_2^{2}}{2}.  
\end{align*}
Using $\btheta_{t+1}-\btheta_{t}=- \eta_t g_t(\btheta_{t})$, 
$$
L(\btheta_{t+1})\le L(\btheta_{t})- \eta_t\nabla L(\btheta_{t})\T g_t(\btheta_{t})+ \frac{\sqrt{2}\varsigma \eta_t^{2} \| g_t(\btheta_{t})\|_2^{2}}{2}.
$$
Using Lemma \ref{lemma:2} for  $ \| g_t(\btheta_{t})\|_2^2=\|\frac{\partial L_{t,1}(\theta_{t},\lambda_{t})}{\partial \theta_{t}}\|_2^2+ \|\frac{\partial L_{t,2}(\theta_{t},\lambda_{t})}{\partial \lambda_{t}}\|_2^2$, we have that 
$$
L(\btheta_{t+1})\le L(\btheta_{t})- \eta_t\nabla L(\btheta_{t})\T g_t(\btheta)+\sqrt{2}  \varsigma ^{2}\eta_t^{2}  (L_{t,1}(\btheta_{t})+L_{t,2}(\btheta_{t})).
$$
Define $L_{t}(\btheta_{t})=L_{t,1}(\btheta_{t})+L_{t,2}(\btheta_{t})$. Using the linearity and monotonicity of expectation, 
\begin{align*}
&\EE_{((\bX_{i}, \by_{i}))^m_{i=1} } \EE_{(\bzeta_t(\bX_i))^m_{i=1}}[L(\btheta_{t+1})|\btheta_{t}]
\\ & \le L(\btheta_{t})- \eta_t\nabla L(\btheta_{t})\T\EE_{((\bX_{i}, \by_{i}))^m_{i=1} } \EE_{(\bzeta_t(\bX_i))^m_{i=1}}[g_t(\btheta)]
+\sqrt{2}  \varsigma ^{2}\eta_t^{2}  \EE_{((\bX_{i}, \by_{i}))^m_{i=1} } \EE_{(\bzeta_t(\bX_i))^m_{i=1}}[L_{t}(\btheta_{t})]
\\ & \le L(\btheta_{t})- \eta_t \|\nabla L(\btheta_{t})\|_2^2+\sqrt{2}  \varsigma ^{2}\eta_t^{2}  (1+a)L(\btheta_{t})
\end{align*}
where the second inequality follows from Theorem \ref{thm:umbaised} and  $\EE_{((\bX_{i}, \by_{i}))^m_{i=1} } \EE_{(\bzeta_t(\bX_i))^m_{i=1}} [L_{t}(\btheta_{t})]=\EE_{((\bX_{i}, \by_{i}))^m_{i=1} } \EE_{(\bzeta_t(\bX_i))^m_{i=1}} [L_{t,1}(\btheta_{t})]+\EE_{((\bX_{i}, \by_{i}))^m_{i=1} } \EE_{(\bzeta_t(\bX_i))^m_{i=1}} [L_{t,2}(\btheta_{t})]\le (1+a)L_{}(\btheta_{t})$ where $a$ is the expectation of the maximum ratio $\frac{|\zeta_t(\bX_i)|}{|\bzeta_t(\bX_i)|}$. 

Taking expectation over $\btheta_{t}$ with  the law of total expectation $\EE[L(\btheta_{t+1})]=\EE_{\btheta_{t}}\EE_{((\bX_{i}, \by_{i}))^m_{i=1} } \EE_{(\bzeta_t(\bX_i))^m_{i=1}}[L(\btheta_{t+1})|\btheta_{t}]$,
\begin{align} \label{eq:4}
\EE[L(\btheta_{t+1})]
\le \EE[L(\btheta_{t})]- \eta_t  \EE[\|\nabla L(\btheta_{t})\|_2^2]+  \sqrt{2}\varsigma ^{2}\eta_t^{2}   (1+a)\EE[L(\btheta_{t})].
\end{align}
Since $\EE[\|\nabla L(\btheta_{t})\|_2^2] \ge 0$, this implies that 
\begin{align*}
\EE[L(\btheta_{t+1})] 
&\le \EE[L(\btheta_{t})]+\sqrt{2}  \varsigma ^{2}\eta_t^{2}   (1+a)\EE[L(\btheta_{t})] \\
&=(1+\sqrt{2}  \varsigma ^{2}\eta_t^{2}(1+a))\EE[L(\btheta_{t})] \\
&\le\exp(\sqrt{2}  \varsigma ^{2}\eta_t^{2}(1+a))\EE[L(\btheta_{t})],
\end{align*}
where the last inequality follows from $1+q\le\exp(q)$ for all $q \in \RR$.
Applying this inequality recursively over $t$, it holds that for any $t\in\NN_+$,
$$
\EE[L(\btheta_{t})]  \le\exp\left(  \sqrt{2}\varsigma ^{2} (1+a) \sum_{j=1}^{t-1}\eta_j^{2}\right)\EE[L(\btheta_{1})].
$$
Using this inequality in~\eqref{eq:4}, 
$$
\EE[L(\btheta_{t+1})]
\le \EE[L(\btheta_{t})]- \eta_t  \EE[\|\nabla L(\btheta_{t})\|_2^2]+\sqrt{2}  \varsigma ^{2}\eta_t^{2}   (1+a)\exp\left(  \sqrt{2}\varsigma ^{2}(1+a) \sum_{j=1}^{t-1}\eta_j^{2}\right)\EE[L(\btheta_{1})].
$$
Rearranging and summing over $t$ with,
\begin{align*}
 \sum_{t=1}^T\eta_t  \EE[\|\nabla L(\btheta_{t})\|_2^2]&\le  \sum_{t=1}^T(\EE[L(\btheta_{t})]-\EE[L(\btheta_{t+1})])+\sqrt{2}\varsigma ^{2}(1+a)\EE[L(\btheta_{1})] \sum_{t=1}^T  \eta_t^{2}   \exp\left(  \sqrt{2}\varsigma ^{2}(1+a) \sum_{j=1}^{t-1}\eta_j^{2}\right)
\\ & \le\EE[L(\btheta_{1})]-\EE[L(\btheta_{T+1})]+\sqrt{2}\varsigma ^{2}(1+a)\EE[L(\btheta_{1})]\exp\left(  \sqrt{2}\varsigma ^{2}(1+a) \sum_{j=1}^{T-1}\eta_j^{2}\right)\left(\sum_{t=1}^T  \eta_t^{2}\right) \\ & =(1+\sqrt{2}\varsigma ^{2}(1+a)R_T\exp(\sqrt{2}\varsigma ^{2} (1+a)R_{T-1}))\EE[L(\btheta_{1})]-\EE[L(\btheta_{T+1})]   
\end{align*}
where we define $R_T=\sum_{t=1}^{T}\eta_t^2$.
Since $\eta_t >0$ and $\EE[\nabla \norm{L(\btheta_t)}^2_2] >0$ for all $t\in [T]\coloneqq \{1,\ldots, T\}$,
\begin{align*}
\min_{t\in [T]} \EE[\norm{\nabla L(\btheta_t)}^2_2]\left(\sum_{t=1}^T\eta_t\right) &= \sum_{t=1}^T \eta_t \min_{t^\prime\in [T]} \EE[\norm{\nabla L(\btheta_{t^\prime})}^2_2] \\
&\leq \sum_{t=1}^T \eta_t \EE[\norm{\nabla L(\btheta_t)}_2^2].
\end{align*}
This implies  that $$
\min_{t\in [T]}\EE[\|\nabla L(\btheta_{t})\|_2^2] \le \left(\sum_{t=1}^T\eta_t \right)^{-1} \left( (1+\sqrt{2}\varsigma ^{2}(1+a)R_T\exp(\sqrt{2}\varsigma ^{2} (1+a)R_{T-1}))\EE[L(\btheta_{1})]-\EE[L(\btheta_{T+1})] \right).
$$
Since  $R_{T-1} \le R_T \le \sum_{t=1}^{\infty}\eta_t^2 < \infty$ and $\EE[L(\btheta_{T+1})]\ge 0$, this implies  that there exists a constant $c$ independent of $(t,T)$ such that
 $$
 \min_{t\in [T]}\EE[\|\nabla L(\btheta_{t})\|_2^2] \le c\EE[L(\btheta_{1})] \left(\sum_{t=1}^T\eta_t \right)^{-1}  .
$$
\end{proof}

For example, if $\eta_t=\eta_1 t^{-q}$ with $q \in (0.5, 1)$ and $\eta_1>0$, then we have $\min_{t\in [T]}\EE[\|\nabla L(\btheta_{t})\|_2^2]=\Ocal(\frac{1}{T^{1-q}})$.


\section{Details on the Mixture of Gaussians Amortized Clustering Experiment}
\label{app:mog-details}

We used a modified version of the MoG amortized clustering dataset which was used by \citet{set-trans}. We modified the experiment, adding separate, random covariance parameters into the procedure in order to make a more difficult dataset. Specifically, to sample a single task for a problem with $K$ classes,

\begin{enumerate}
    \itemsep0em 
    \item Sample set size for the batch $N \sim U(\text{train set size} / 2, \text{train set size})$.
    \item Sample class priors $\pi \sim \text{Dirichlet}([\alpha_1, \ldots, \alpha_K])$ with $\alpha_1=\cdots = \alpha_K=1$.
    \item Sample class labels $z_i \sim \text{Categorical}(\pi)$ for $i=1, ..., N$.
    \item Generate cluster centers $\bs{\mu}_i = ({\mu}_{i,1}, {\mu}_{i,2})\in\RR^2$, where ${\mu}_{i,j} \sim U(-4, 4)$ for $i=1,...,K$ and $j=1,2$.
    \item Generate cluster covariance matrices $\bs{\Sigma}_i=\text{diag}({\sigma}_{i,1}, {\sigma}_{i,2})\in\RR^{2\times2}$, where ${\sigma}_{ij} \sim U(0.3, 0.6)$ for $i=1,...,K$ and $j=1,2$.  
    \item For all $z_n$, if $z_n = i$, sample data $\mb{x}_{n} \sim \mc{N}(\bs{\mu}_i, \bs{\Sigma}_i)$
\end{enumerate}

In our MoG experiments, we set $K=4$. The Motivational Example in \cref{motivation} also used the MoG dataset, and performed MBC testing of the set transformer corresponding to the procedure outlined in \cref{app:mbc-test-st}

\begin{figure*}
\vspace{-0.1in}
\centering
    \begin{subfigure}{0.3\textwidth}
        \centering
        \includegraphics[width=\linewidth]{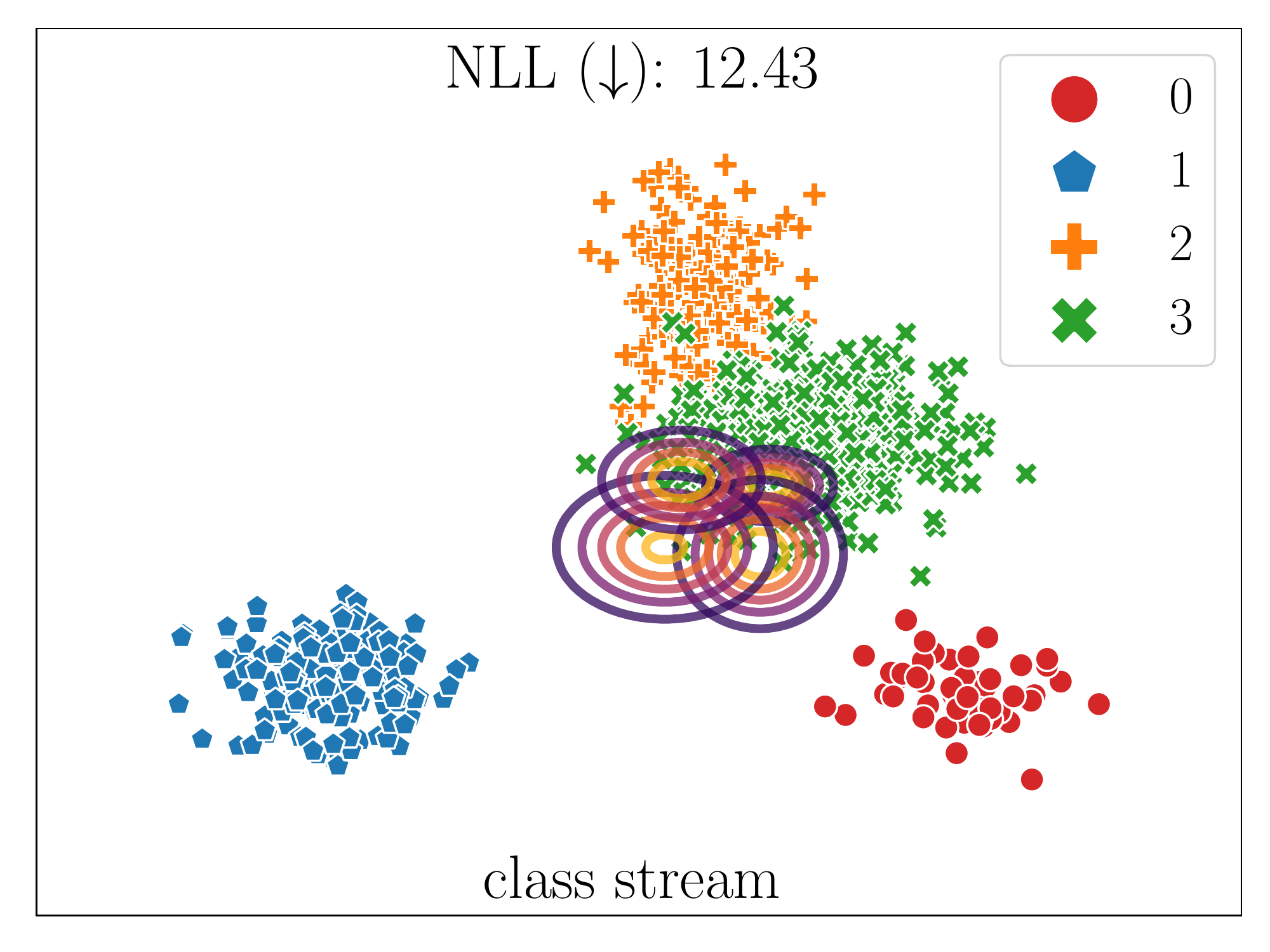}
        \captionsetup{justification=centering,margin=0.5cm}
        \vspace{-0.25in}
        \caption{\small Set Transformer}
        \label{st-class-stream}	
    \end{subfigure}%
    \begin{subfigure}{0.3\textwidth}
	\centering
	\includegraphics[width=\linewidth]{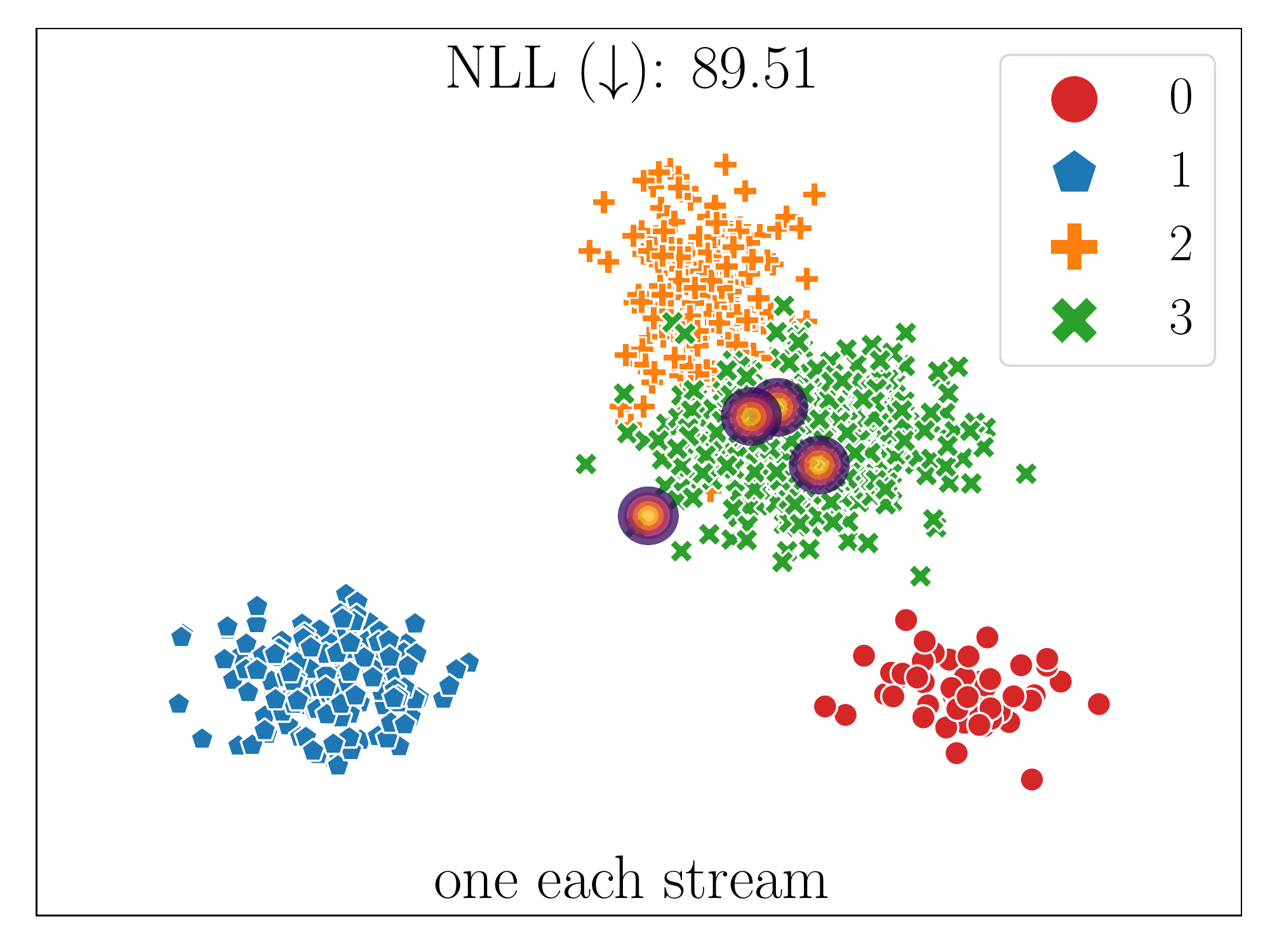}
	\captionsetup{justification=centering,margin=0.5cm}
	\vspace{-0.25in}
	\caption{\small Set Transformer} 
	\label{st-one-each-stream}
    \end{subfigure}%
    \begin{subfigure}{0.3\textwidth}
	\centering
	\includegraphics[width=\linewidth]{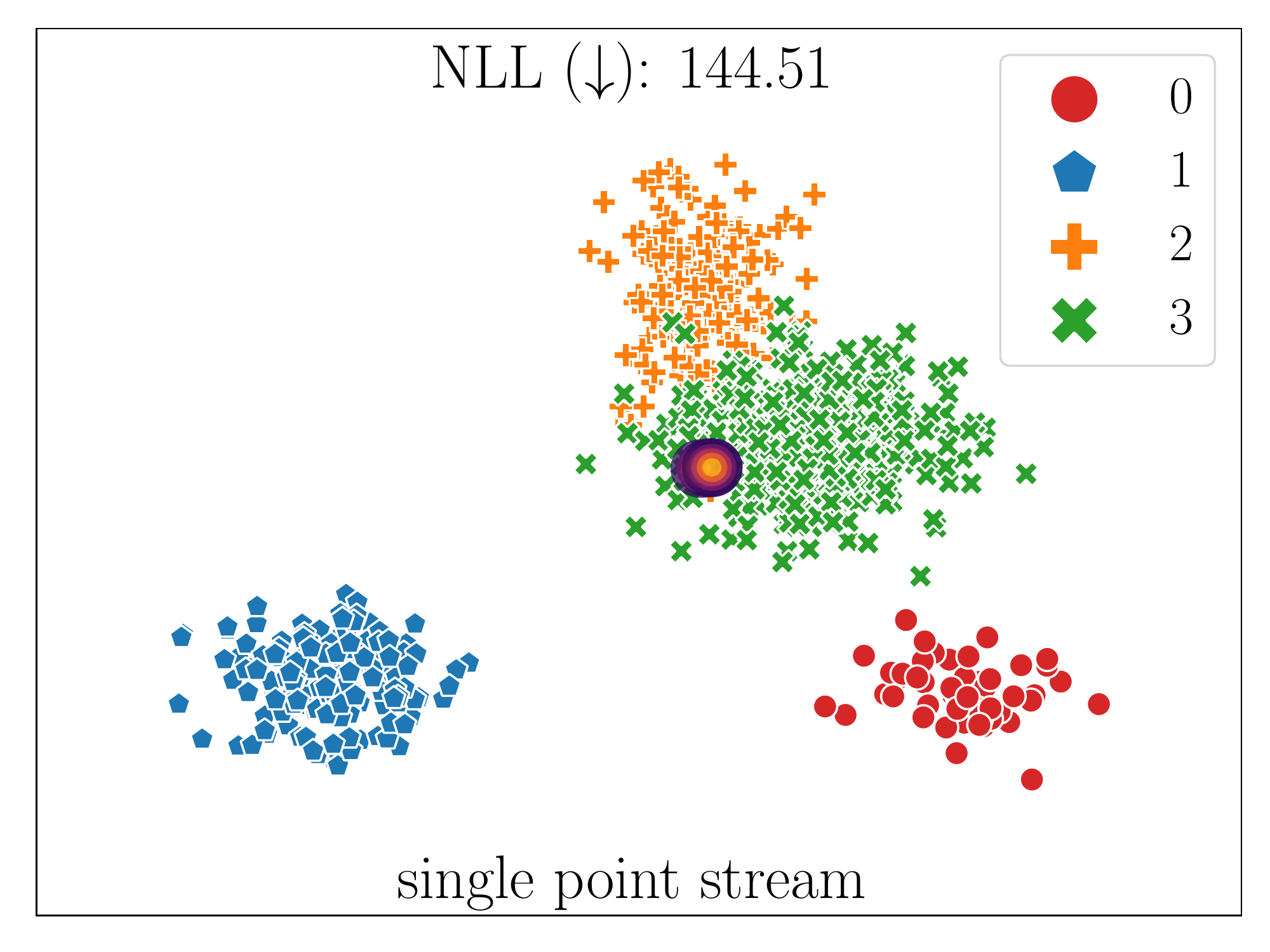}%
	\captionsetup{justification=centering,margin=0.5cm}
	\vspace{-0.25in}
	\caption{\small Set Transformer}
	\label{st-single-point-stream}
    \end{subfigure}
    \begin{subfigure}{0.3\textwidth}
	\centering
	\includegraphics[width=\linewidth]{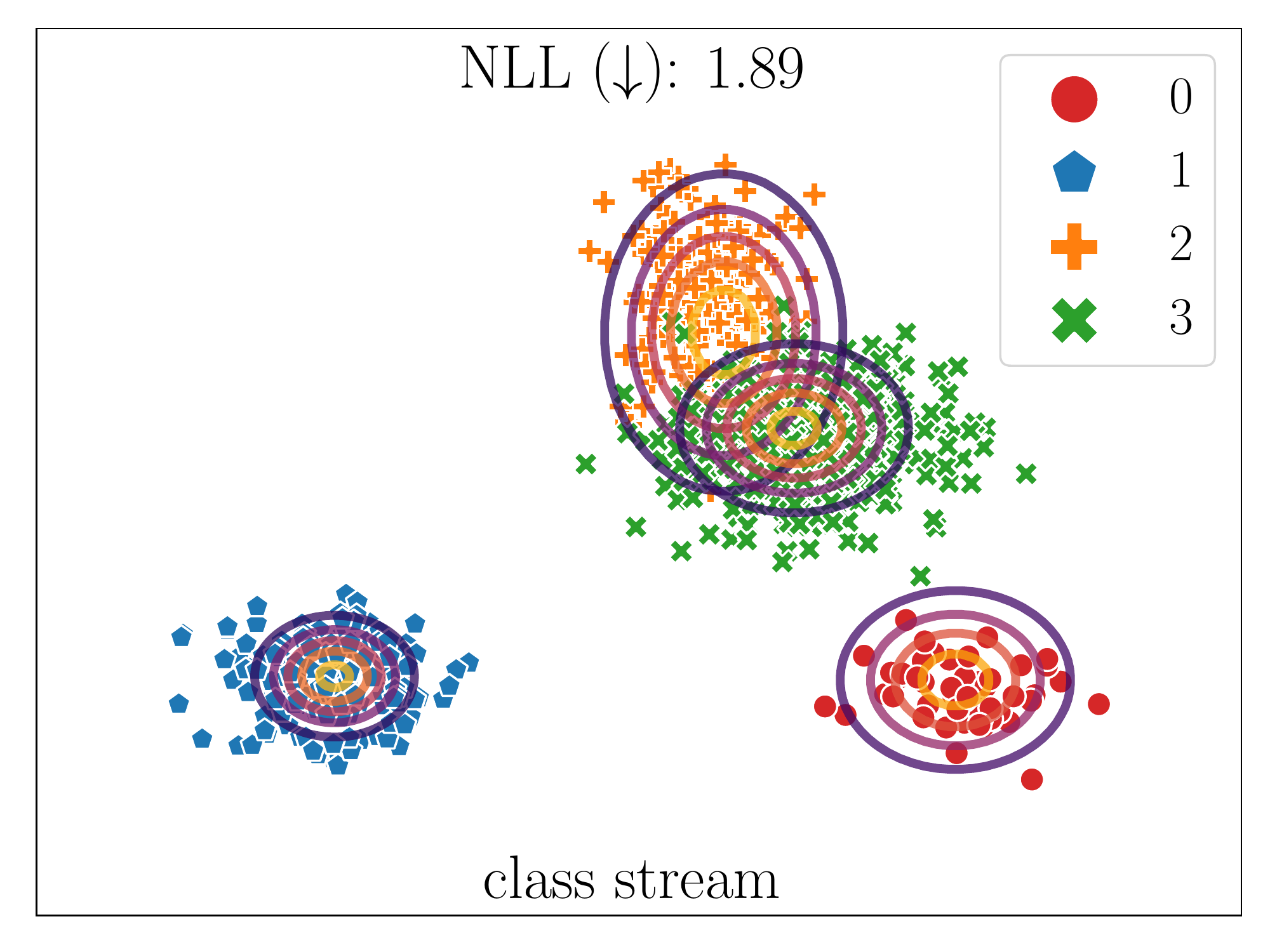}
	\captionsetup{justification=centering,margin=0.5cm}
	\vspace{-0.25in}
	\caption{\small UMBC}
	\label{umbc-class-stream}
    \end{subfigure}%
    \begin{subfigure}{0.3\textwidth}
	\centering
	\includegraphics[width=\linewidth]{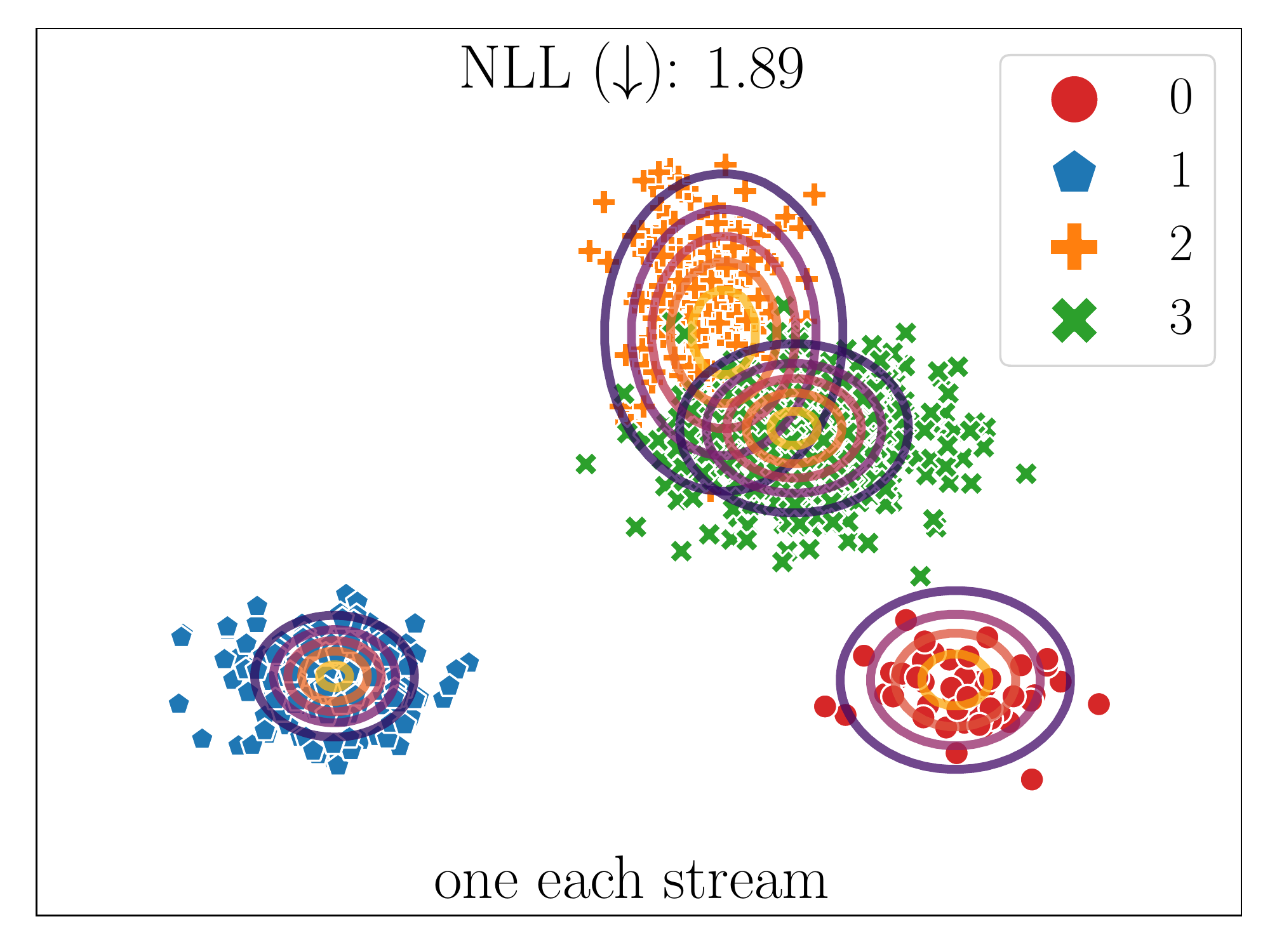}
	\captionsetup{justification=centering,margin=0.5cm}
	\vspace{-0.25in}
	\caption{\small UMBC}
	\label{umbc-one-each-stream}
    \end{subfigure}%
    \begin{subfigure}{0.3\textwidth}
	\centering
	\includegraphics[width=\linewidth]{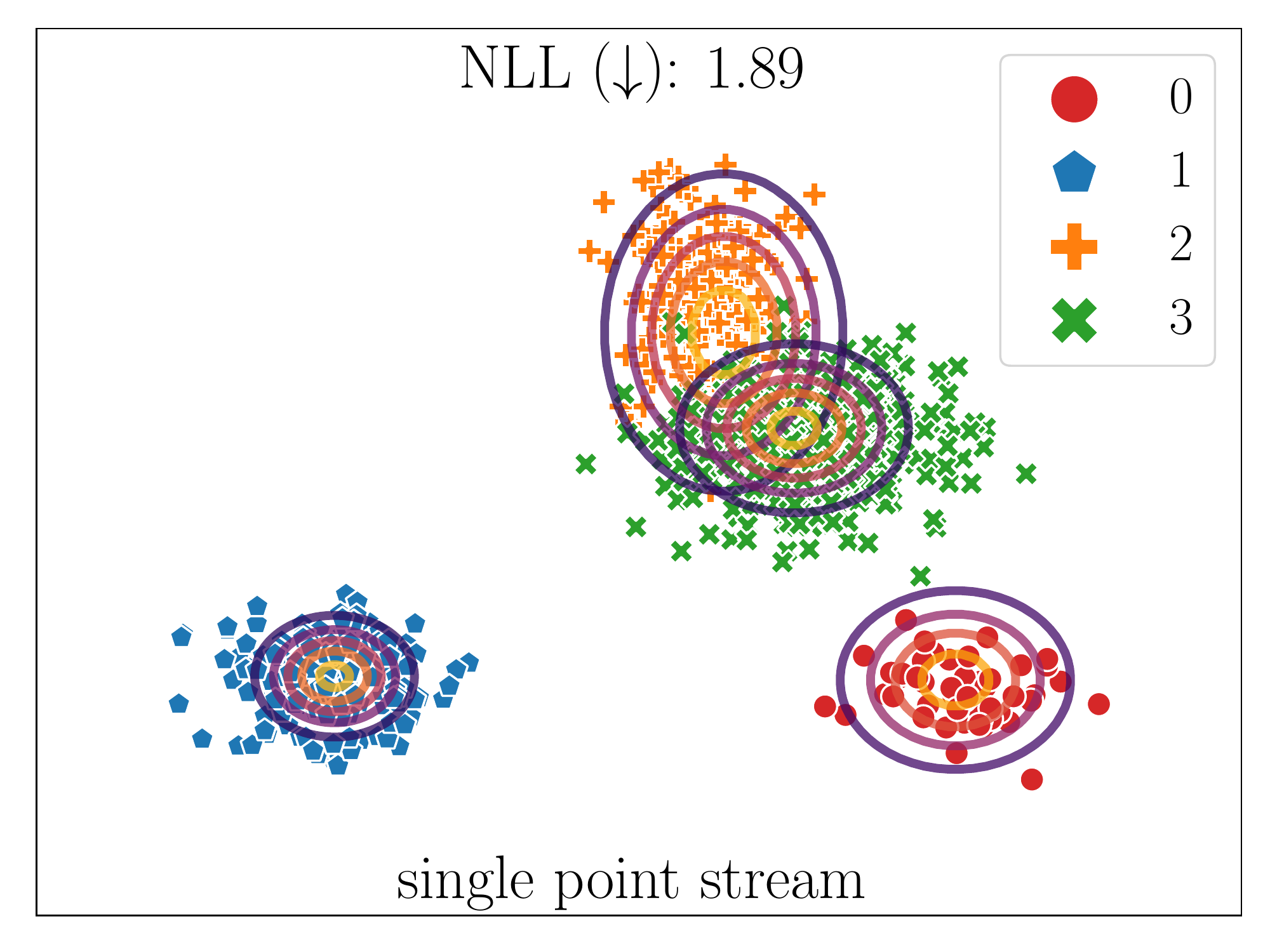}
	\captionsetup{justification=centering,margin=0.5cm}
	\vspace{-0.25in}
	\caption{\small UMBC}
	\label{umbc-single-point-stream}
    \end{subfigure}%
    \caption[Reconstruction NLL]{\small
    \textbf{Top Row}: Set Transformer provides inconsistent predictions on streaming sets when inputs cannot be stored directly. \textbf{Bottom Row}: UMBC+Set Transformer gives consistent predictions in all streaming settings. 
    }
    \label{motivation-2}
    \vspace{-0.1in}
\end{figure*}

\subsection{Streaming Settings}\label{app:streaming-settings}

The four total streaming settings in~\cref{motivation,motivation-2} are described below:
\begin{itemize}
    \itemsep0em
    \item \textbf{single point stream} $\rightarrow$ streams each point in the set one by one. This causes the most severe under-performance by non-MBC models. 
    \item \textbf{class stream} $\rightarrow$ streams an entire class at once. Models which make complex pairwise comparisons cannot compare the input class with any other clusters, thereby degrading performance of models such as the Set Transformer.
    \item \textbf{chunk stream} $\rightarrow$ streams 8 random points at a time from the dataset, Providing, random and limited information to non-MBC models. 
    \item \textbf{one each stream} $\rightarrow$ streams a set consisting of a single instance from each class. non-MBC models can see examples of each class, but with a limited sample size, therefore non-MBC models such as Set Transformer fail to make accurate predictions. 
\end{itemize}

\subsection{Experimental Setup}

We train each model for $50$ epochs, with each epoch containing $1000$ iterations. We use the Adam optimizer with a learning rate of $1 \cdot 10^{-3}$ and no weight decay. We do not perform early stopping. We make a single learning rate adjustment at epoch $35$ which adjusts the learning rate to $10^{-4}$. When measuring NLL for results, we measure the NLL of the full set of $1024$ points. Unless otherwise specified, UMBC models use the softmax activation function. We list the architectures in~\cref{mvn-enc-arc,mvn-dec-arc,mvn-umbc-dec-arc}. All models have an additional linear output which outputs $K \times 5$ parameters for the Gaussian mixture outlined in~\cref{eq:mog}.

\begin{figure*}[t!]
    \begin{minipage}[t]{0.4\textwidth}
	\small
	\centering
	\captionof{table}{MVN generic model (Used by all encoders)}
	\resizebox{0.8\textwidth}{!}{\begin{tabular}{lll}
	    \toprule
		{\textbf{Output Size}} & {\textbf{Layers}} & {\textbf{Amount}} \\
		\midrule[0.8pt]
		{$N_i \times 2$} & {Input Set} & {$\times1$} \\
		{$N_i\times  128$} & {Linear(2, 128), ReLU} & {$\times1$} \\
		{$N_i\times  128$} & {Set Encoder} & {$\times1$} \\
		{$K\times 5$} & {Decoder} & {$\times3$} \\
  \bottomrule
	\end{tabular}
 }
	\label{mvn-enc-arc}
\hfill
\end{minipage}
\begin{minipage}[t]{0.6\textwidth}
	\small
	\centering
	\captionof{table}{Set Encoder Specific settings for baseline models.}
\resizebox{0.99\textwidth}{!}{	
 \begin{tabular}{lll|ll}
	    \toprule
		{\textbf{Name}} & {\textbf{Set Encoder}} & {\textbf{Output Size}} & {\textbf{Set Decoder}} & {\textbf{Output Size}} \\
		\midrule[0.8pt]
		{DeepSets~\citep{deepsets}} & {Mean Pooling} & {$128$} & {Linear, ReLU} & {$128$} \\
            {SSE~\citep{mbc}} & {Slot Set Encoder} & {$K \times 128$} & {Linear, ReLU} & {$128$} \\
            {FSPool~\citep{fspool}} & {Featurewise Sort Pooling} & {$128$} & {Linear, ReLU} & {$128$} \\
            {Diff. EM.~\citep{diff-em}} & {Expectation Maximization Layer} & {$1286$} & {Linear, ReLU} & {128} \\
            {Set Transformer~\citep{set-trans}} & {Pooling by Multihead Attention} & {$K \times 128$} & {Set Attention Block} & {$K \times 128$} \\
            \bottomrule
	\end{tabular}
 }
	\label{mvn-dec-arc}
\end{minipage}
\begin{minipage}[t]{1.0\textwidth}
	\small
	\centering
	\captionof{table}{Set Encoder Specific settings for UMBC models. UMBC models account for the extra encoder by using fewer layers in the decoder (2 layers instead of 3).}
\resizebox{0.8\textwidth}{!}{	
 \begin{tabular}{lll|ll|ll}
	    \toprule
		{\textbf{Name}} & {\textbf{MBC Set Encoder}} & {\textbf{Output Size}} & {\textbf{non-MBC Set Encoder}} & {\textbf{Output Size}} & {\textbf{Set Decoder}} & {\textbf{Output Size}} \\
		\midrule[0.8pt]
            {(Ours) UMBC+FSPool} & {UMBC Layer} & {$K \times 128$} & {Featurewise Sort Pooling} & {$128$} & {Linear, ReLU} & {$128$} \\
            {(Ours) UMBC+Diff EM} & {UMBC Layer} & {$K \times 128$} & {Expectation Maximization Layer} & {$1286$} & {Linear, ReLU} & {$128$} \\
            {(Ours) UMBC+Set Transformer} & {UMBC Layer} & {$K \times 128$} & {Set Attention Block} & {$K \times 128$} & {Linear, ReLU} & {$K \times 128$} \\
            \bottomrule
	\end{tabular}
 }
	\label{mvn-umbc-dec-arc}
\end{minipage}
\end{figure*}

\section{Measuring the Variance of Pooled Features}
\label{app:pooling-ft-vars}

In \cref{embedding-var}, we show the quantitative effect on the pooled representation between the plain Set Transformer, UMBC+Set Transformer, FSPool and DiffEM. The UMBC model always shows 0 variance, while the non-MBC models produce variance between aggregated encodings of random partitions. For a single chunk, however, non-MBC models show no variance, as random partitions of a single chunk would be equivalent to permuting the elements within the chunk (\ie non-MBC models still produce an encoding which is permutation invariant). The variance increases drastically when the set is partitioned into two chunks and then the behavior differs between the non-MBC models. Set Transformer happens to show decreasing variance as the number of chunks increases. Note that as the number of chunks increases, the cardinality of each chunk decreases. Therefore, the variance decreases as the chunk cardinality also decreases, but this \emph{does not} indicate that the models is performing better. For example, in~\cref{motivation}, when a singleton set is input to the Set Transformer, the predictions become almost meaningless even though they may have lower variance. The procedure for aggregating the encodings of set partitions for the non-MBC models is outlined in \cref{app:mbc-test-st}.

\begin{figure*}[ht]
    \begin{minipage}[b]{0.5\linewidth}
        \centering
        \caption{Distributions used in sampling random inputs for the encoding variance experiment in~\cref{motivation}}
        \begin{tabular}{lcc}
        \toprule
        Distribution & Dimension & Number of Points \\
        \midrule
        Normal(0, 1) & $128$ & $256$ \\
        Uniform(-3, 3) & $128$ & $256$ \\ 
        Exponential(1) & $128$ & $256$ \\ 
        Cauchy(0, 1) & $128$ & $256$ \\
        \bottomrule
        \end{tabular}
    \end{minipage}
    \begin{minipage}[b]{0.5\linewidth}
        \centering
        \caption{The number of chunks and elements per chunk.}
        \resizebox{\linewidth}{!}{
        \begin{tabular}{lcccccc}
        \toprule
        Number of Chunks & $1$ & $2$ & $4$ & $8$ & $16$ & $32$ \\
        Elements per Chunk & $1024$ & $512$ & $256$ & $128$ & $64$ & $32$ \\ 
        \bottomrule
        \end{tabular}
        }
    \end{minipage}
    \label{tab:app:encoding-variance-distributions}
\end{figure*}

To perform this experiment, we used a randomly initialized model with $128$ hidden units, and sampled $256$ set elements from four different distributions in order to make a total set size of $1024$. We then created 100 random partitions for various chunk sizes. Chunk sizes and distributions are shown in~\cref{tab:app:encoding-variance-distributions}. We then encode the whole set in chunks and and report the observed variance over the 100 different random partitions at each of the various chunk sizes (\cref{embedding-var}). Note that the encoded set representation is a vector and \cref{embedding-var} shows a scalar value. To achieve this, we take the feature-wise variance over the 100 encodings and report the mean and standard deviation over the feature dimension. Specifically, given $\mathbf{Z} = [\mathbf{z}_1 \cdots \mathbf{z}_{100}]^\top \in \mathbb{R}^{100 \times 128}$ representing all 100 encodings with $\mathbf{z}_i = (z_{i,1},\ldots, z_{i,128})$, we compute feature-wise variance as 
\begin{gather*}
    \hat{z}_j = \frac{1}{(100 - 1)}\sum_{i=1}^{100} (z_{i,j}-\mu_j)^2, \quad \mu_j = \frac{1}{100}\sum_{i=1}^{100} z_{i,j}
\end{gather*}
for $j=1,\ldots, 128$.
We then achieve the values of y-axis and error bars in \cref{embedding-var} by a  mean and standard deviation over the feature dimension,
\begin{gather}
    y = \frac{1}{128} \sum_i \hat{z}_i, \quad\quad y_\sigma = \sqrt{\frac{1}{(128 - 1)} \sum_i (\hat{z}_i - y)^2 }.
\end{gather}

\section{A Note on MBC Testing of non-MBC models}
\label{app:mbc-test-st}

In the qualitative experiments \cref{motivation,embedding-var}, we apply MBC testing to non-MBC models in order to study the effects of using non-MBC models in MBC settings. Non-MBC models do not prescribe a way to accomplish this in the original works, so we took the approach of processing each chunk up until the pooled representation. We then performed mean pooling over the encoded chunks in the following way. Let $X$ be an input set and let $\zeta(X)=\{X_1, \ldots, X_n\}$ be a partition of the set $X$, \ie $X = \bigcup_{j=1}^N X_j$ with $X_i \cap X_j=\emptyset$ for $i\neq j$. Denote $\tilde{f}_\theta$ a non-MBC set encoding function, then our pseudo-MBC testing procedure is as follows,
\begin{gather}
Z = \frac{1}{N} \sum_{j=1}^N \tilde{f}_\theta(X_j)
\end{gather}

\begin{figure*}[t]
\centering
    \begin{subfigure}{0.5\textwidth}
        \centering
        \includegraphics[width=\linewidth]{images/celeba/legend.pdf}
        \vspace{-0.2in}
    \end{subfigure} \\
    \begin{subfigure}{0.33\textwidth}
		\centering
		\includegraphics[width=\linewidth]{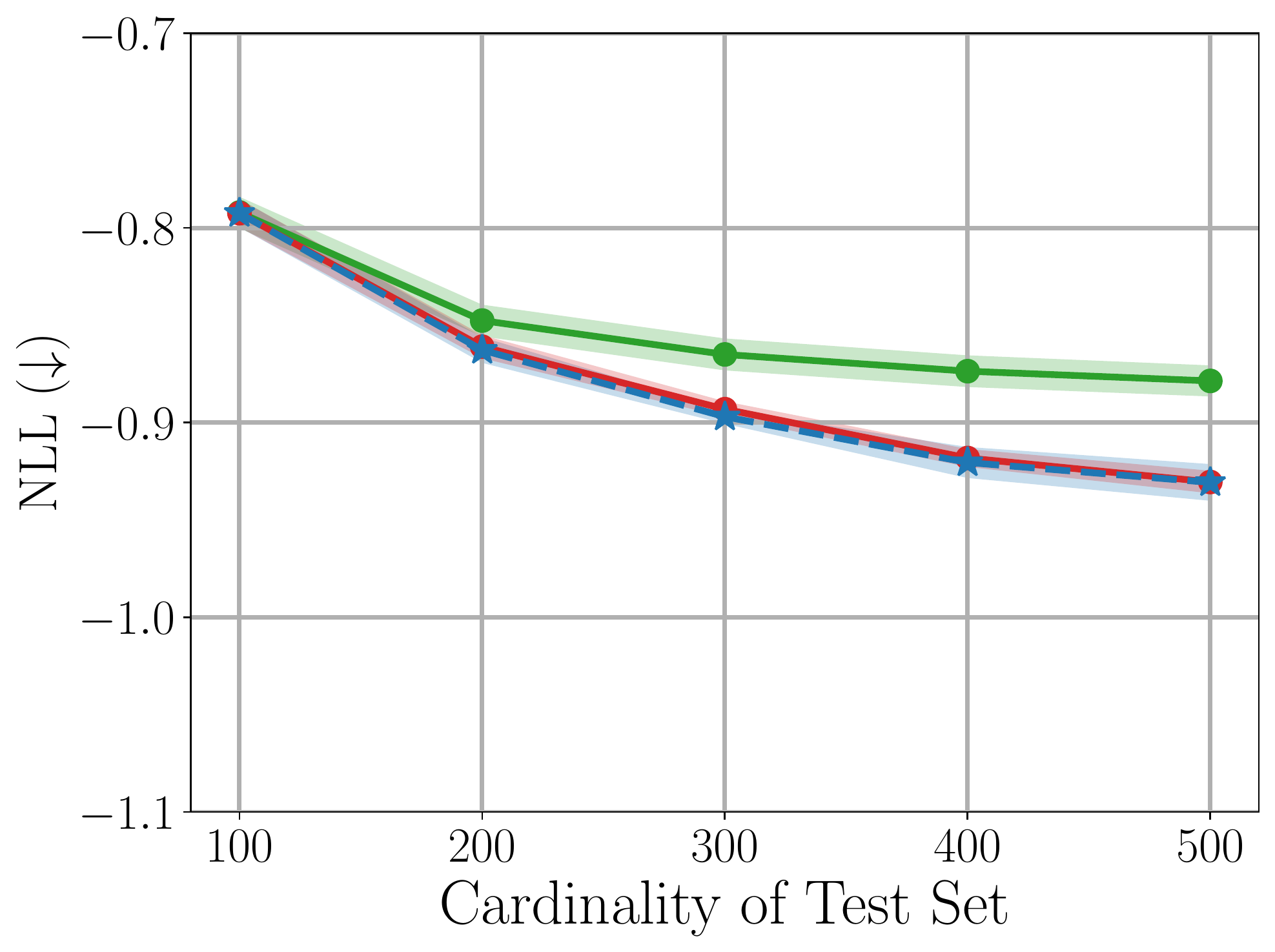}
		\captionsetup{justification=centering,margin=0.5cm}
		\vspace{-0.25in}
		\caption{\small Deepsets}
		\label{fig:celeba-deepset}
	\end{subfigure}%
    \begin{subfigure}{0.33\textwidth}
    \centering
    \includegraphics[width=\linewidth]{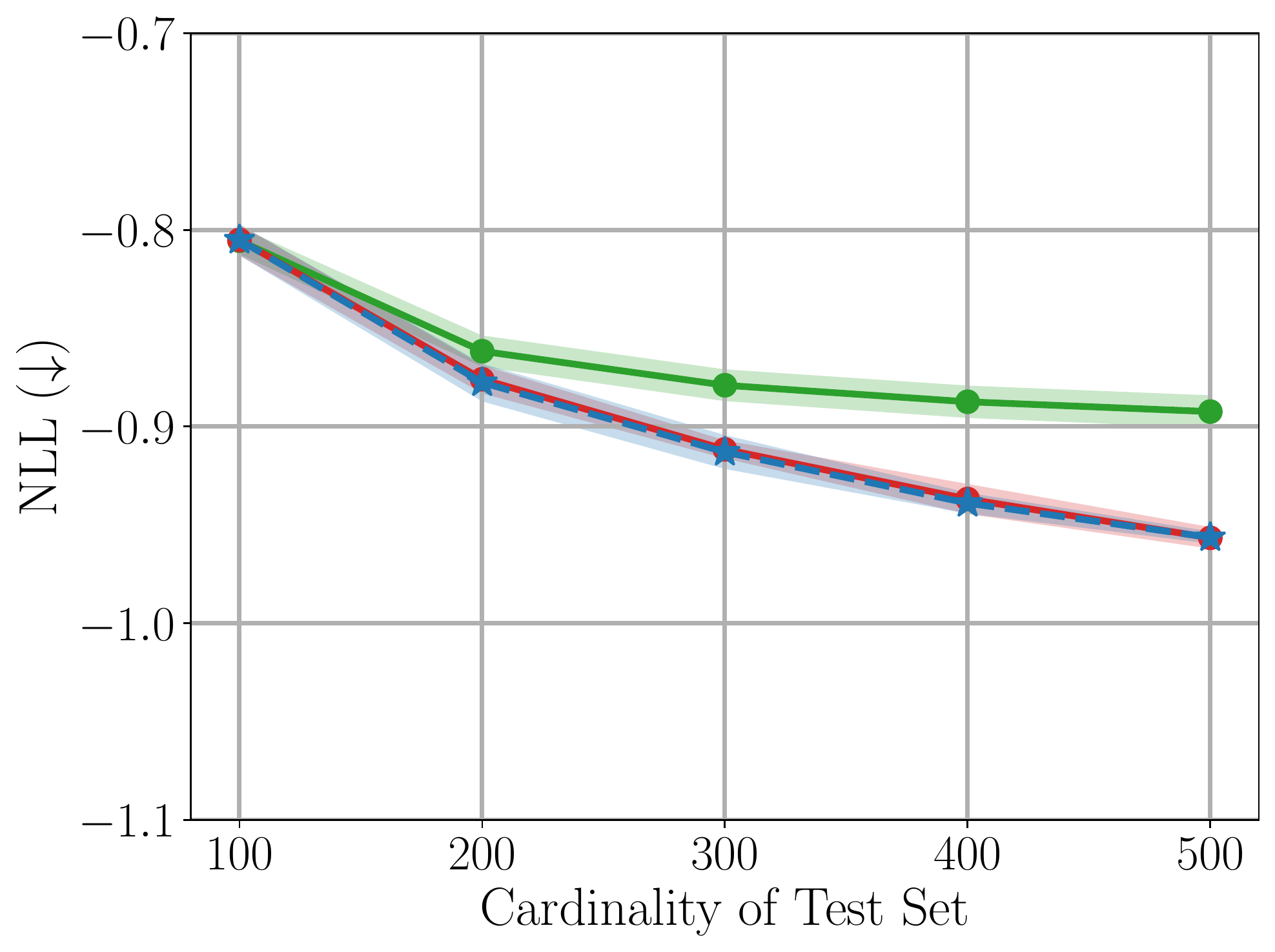}
    \captionsetup{justification=centering,margin=0.5cm}		
    \vspace{-0.25in}
    \caption{\small Slot Set Encoder}
    \label{fig:celeba-sse}	
    \end{subfigure}
	\begin{subfigure}{0.33\textwidth}
		\centering
		\includegraphics[width=\linewidth]{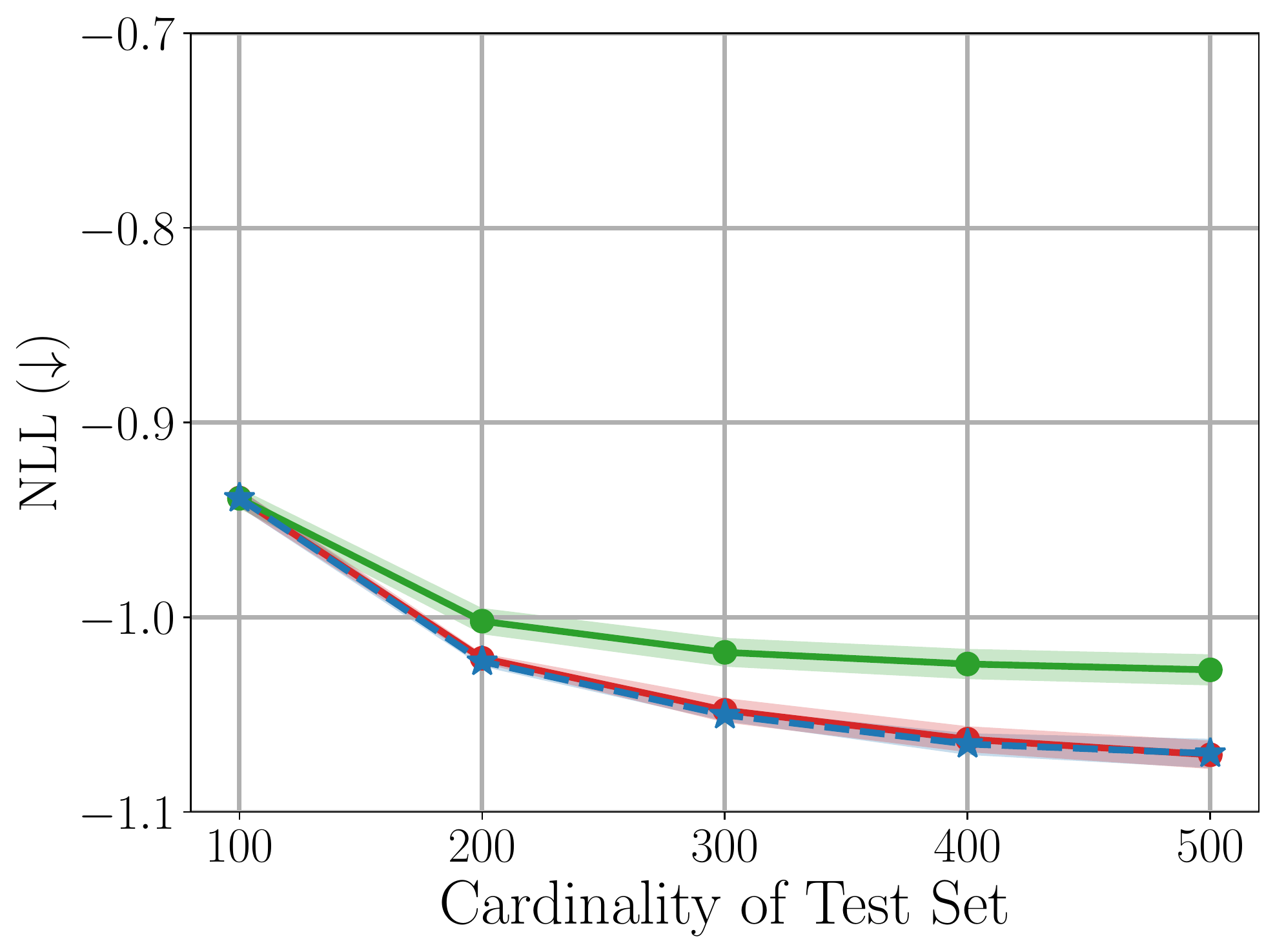}
		\captionsetup{justification=centering,margin=0.5cm}
		 \vspace{-0.25in}
		\caption{\small UMBC + ST} 
		\label{fig:celeba-umbc-2}
	\end{subfigure}%
\vspace{-0.15in}	
 \caption{\small Performance of 
    \textbf{(a)} DeepSets \textbf{(b)} SSE, and \textbf{(c)} UMBC with varying set sizes for image completion on CelebA dataset.}
	\label{fig:celeba-exp-app}
\vspace{-0.1in}
\end{figure*}
\section{Details on the Image Completion Experiments}
\subsection{Additional Experimental Results}
\label{app:celeba}
In figure~\cref{fig:celeba-exp-app}, we evaluate our proposed unbiased full set gradient approximation algorithm (\textcolor{red}{\textbf{red}}) with Deepsets, Slot Set Encoder (SSE) and UMBC + Set Transformer (ST) and compare our algorithm against the one training with a randomly sampled subset of 100 elements , which is a biased estimator, (\textcolor{darkpastelgreen}{\textbf{green}}) and the one computing full set gradient (\textcolor{celestialblue}{\textbf{blue}}). Across all models, our unbiased estimator significantly outperforms the models trained with a randomly sampled subset. Notably, the model trained with our proposed algorithm is indistinguishable from the model trained with full set gradient while our method only incurs constant memory overhead for any set size. These empirical results again verify efficiency of our unbiased full set gradient approximation. 

\subsection{Experimental Setup}
We train all models on CelebA  dataset for 200,000 steps with Adam optimizer~\citep{adam} and 256 batch size but no weight decay. We set the learning rate to $5\cdot 10^{-4}$ and use a cosine annealing learning rate schedule. In~\cref{celeba-enc-arc,celeba-dec-arc}, we specify the architecture of Conditional Neural Process with UMBC + Set Transformer. We use $k=128$ slots and set dimension of each slot  to $d_s=128$. For the attention layer, we use the softmax for the activation function $\sigma$ and set the dimension of attention output to $d=128$.
As an input to the set encoder, we concatenate the coordinates of each $\rvx_{i,c_j} \in \RR^{2}$ and the corresponding pixel value $y_{i, c_j}\in\RR^3$ from the context $X_i$ for $j=1,\ldots, N_i$, resulting in a $N_i \times 5$ matrix.
\begin{figure*}[t!]
    \begin{minipage}[t]{0.52\textwidth}
	\small
	\centering
	\captionof{table}{UMBC Set Encoder  of Conditional Neural Process.}
	\resizebox{0.99\textwidth}{!}{\begin{tabular}{ll}
	    \toprule
		{\textbf{Output Size}} & {\textbf{Layers}} \\
		\midrule[0.8pt]
		{$N_i \times 5$} & {Input Context Set} \\
		{$N_i\times  128$} & {Linear(5, 128), ReLU} \\
		{$N_i\times  128$} & {Linear(128, 128), ReLU} \\
		{$N_i\times 128$} & {Linear(128, 128), ReLU} \\
            {$N_i\times 128$} & {Linear(128, 128), ReLU} \\
            {$128\times  128$} & {UMBC Layer} \\
            {$128\times  128$} & {Layer Normalization} \\
            {$128\times  128$} & {Set Attention Block~\citep{set-trans}} \\
            {$128\times  128$} & {Set Attention Block} \\
            {$128$} & {Pooling by Multihead Attention~\citep{set-trans}} \\
  \bottomrule
	\end{tabular}
 }
	\label{celeba-enc-arc}
\hfill
\end{minipage}
\begin{minipage}[t]{0.48\textwidth}
	\small
	\centering
	\captionof{table}{Decoder of Conditional Neural Process.}
\resizebox{0.99\textwidth}{!}{	
 \begin{tabular}{ll}
	    \toprule
		{\textbf{Output Size}} & {\textbf{Layers}} \\
		\midrule[0.8pt]
		{$128, 1024\times 2$} & {Input Set Representation and Coordinates} \\
		{$1024\times 130$} & {Tile \& Concatenate} \\
            {$1024\times  128$} & {Linear(130, 128), ReLU} \\
            {$1024\times  128$} & {Linear(128, 128), ReLU} \\
		{$1024\times  128$} & {Linear(128, 128), ReLU} \\
		{$1024\times  128$} & {Linear(128, 128), ReLU} \\
            {$1024\times  6$} & {Linear(128, 6)} \\
            \bottomrule
	\end{tabular}
 }
	\label{celeba-dec-arc}
\end{minipage}
\end{figure*}

\section{Details on the  Long Document Classification Experiments}
We train all models for 30 epochs with AdamW optimizer~\citep{adamw} and batch size 8. We use constant learning rate  $5\cdot 10^{-5}$. For our UMBC model, we pretrain the model while freezing BERT for 30 epochs and finetune the whole model for another 30 epochs.  In~\cref{text-arc}, we specify architecture of UMBC + BERT~\citep{bert} without positional encoding. We use $k=256$ slots and set dimension of each slot to $d_s=128$. We use slot-sigmoid for the activation function $\sigma$ and set the dimension of the attention output to $d=768$. 
\begin{table}[ht]
    \vspace{-0.1in}
    \centering
    \caption{UMBC Set Encoder and BERT decoder for long document classification.}
    \resizebox{0.5\textwidth}{!}{\begin{tabular}{ll}
	    \toprule
		{\textbf{Output Size}} & {\textbf{Layers}} \\
		\midrule[0.8pt]
		{$N_i$} & {Input Document} \\
		{$N_i\times  768$} & {Word Embedding} \\
		{$N_i\times  768$} & {Layer Normalization} \\
		{$N_i\times  768$} & {Linear(768,768), ReLU} \\
            {$N_i\times  768$} & {Linear(768,768), ReLU} \\
            {$256\times  768$} & {UMBC Layer} \\
            {$256\times  768$} & {Layer Normalization} \\
            {$256 \times 768$} & {BERT w/o Positional Encoding~\citep{bert}} \\
            {$768$} & {[CLS] token Pooler}\\
            {$4271$} & {Dropout(0.1), Linear(768, 4271)} \\
  \bottomrule
	\end{tabular}
 }
    \label{text-arc}
    \vspace{-0.2in}
\end{table}

\section{Details on the Camelyon16 Experiments}
\label{app:camelyon-details}

\begin{figure*}[t]
\centering
    \begin{subfigure}{0.23\textwidth}
	\centering
	\includegraphics[width=0.9\linewidth]{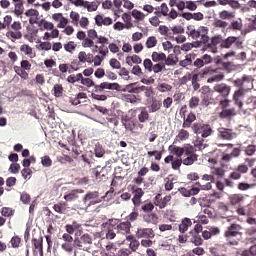}
	\captionsetup{justification=centering,margin=0.5cm}
    \end{subfigure}%
    \begin{subfigure}{0.23\textwidth}
        \centering
        \includegraphics[width=0.9\linewidth]{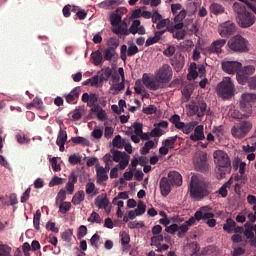}
        \captionsetup{justification=centering,margin=0.5cm}		
    \end{subfigure}%
    \begin{subfigure}{0.23\textwidth}
	\centering
	\includegraphics[width=0.9\linewidth]{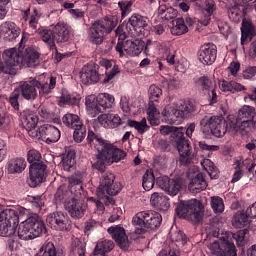}
	\captionsetup{justification=centering,margin=0.5cm}
    \end{subfigure}%
    \begin{subfigure}{0.23\textwidth}
	\centering
	\includegraphics[width=0.9\linewidth]{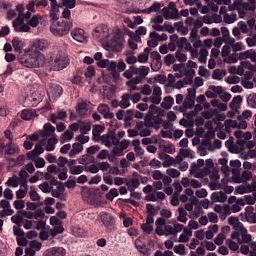}
	\captionsetup{justification=centering,margin=0.5cm}
    \end{subfigure}%
 \vspace{-0.1in}
 \caption{\small Examples of $4$ single patches from the Camelyon16 dataset. On average, each set in the dataset contains over $9,300$ patches like the ones pictured above.}
	\label{fig:camelyon-patch-example}
\vspace{-0.1in}
\end{figure*}
The Camelyon16 Whole Slide Image dataset consists of 270 training instances and 129 validation instances. The dataset was created for a competition, and therefore the test set is hidden. We therefore follow the example set by previous works~\cite{ds-mil} and report performance achieved on the validation set. For preprocessing, we consider the $20\times$ slide magnification setting, and use OTSU's thresholding method to detect regions containing tissue within the WSI. We then split the activated regions into non overlapping patches of size $256\times256$. An example of single input patches can be seen in~\cref{fig:camelyon-patch-example}. The largest input set contains $37,345$ image patches which are each $\in \RR^{256\times256\times3}$. All patch extraction code can be found in the supplementary file. \cref{tab:camelyon-stats} contains statistics related to the numbers of patches per input for the training and the test set as well as the distribution of positive and negative labels.  

\subsection{Experimental Setup}

We use a ResNet18~\citep{resnet} which was pretrained with self-supervised contrastive learning~\citep{simclr} by~\citet{ds-mil}. The pretrained ResNet18 weights can be downloaded from \href{https://github.com/binli123/dsmil-wsi}{this repository}. Following the classification experiments done by~\citet{set-trans}, we place dropout layers before and after the PMA layer of the Set Transformer in our UMBC model. We will describe our pretraining and finetuning steps below in detail. 

\begin{figure*}[t!]
    \begin{minipage}[t]{0.5\textwidth}
	\small
	\centering
	\captionof{table}{Camelyon16 generic model (Used by all encoders)}
	\resizebox{0.8\textwidth}{!}{\begin{tabular}{llll}
	    \toprule
		{\textbf{Output Size}} & {\textbf{Layers}} & {\textbf{Name}} & {\textbf{Amount}} \\
		\midrule[0.8pt]
		{$N_i \times 256 \times 256 \times 3$} & {Input Set} & {Bag of Instances} & {$\times1$} \\
		{$N_i\times  512$} & {ResNet18(InstanceNorm)} & {Feature Extractor} & {$\times1$} \\
		{$N_i\times  256$} & {Linear, ReLU, Linear} & {Projection} & {$\times1$} \\
            \midrule
		{$1$} & {Linear, Max Pooling} & {Instance Classifier} & {$\times1$} \\
            \midrule
            {$N_i \times 128$} & {Set Encoding Function} & {Bag Classifier} & {$\times1$} \\
            {$1$} & {Set Decoder} & {Bag Classifier} & {$\times1$} \\
  \bottomrule
	\end{tabular}
 }
	\label{camelyon-generic-arc}
\hfill
\end{minipage}
\begin{minipage}[t]{0.5\textwidth}
	\small
	\centering
	\captionof{table}{Camelyon16 Bag Classifier Models.}
\resizebox{0.99\textwidth}{!}{	
 \begin{tabular}{lll}
	    \toprule
		{\textbf{Name}} & {\textbf{Set Encoder}} & {\textbf{Output Size}} \\
		\midrule[0.8pt]
		{AB-MIL} & {Gated Attention~\citep{ab-mil}} & {$1$} \\
            {DSMIL} & {DS-MIL Aggregator~\citep{ds-mil}} & {$1$} \\
            {Deepsets} & {Max Pooling $\rightarrow$ Linear $\rightarrow$ ReLU $\rightarrow$ Linear} & {$1$} \\
            {Slot Set Encoder} & {SSE~\citep{mbc}$\rightarrow$ Linear $\rightarrow$ ReLU $\rightarrow$ Linear} & {$1$} \\
            {UMBC+SetTransformer} & {UMBC(K=$64$) $\rightarrow$ SAB $\rightarrow$ PMA(K=$1$, p=0.5) $\rightarrow$ Linear} & {$1$} \\
            \bottomrule
	\end{tabular}
 }
	\label{camelyon-mil-arc}
\end{minipage}
\end{figure*}

\paragraph{Pretraining.} For pretraining, we extract the features from the pretrained ResNet18 and only train the respective MIL models~(\cref{camelyon-mil-arc}) on the extracted features. We pretrain for 200 epochs with the Adam optimizer which uses a learning rate of $5 \cdot 10^{-4}$ and a cosine annealing learning rate decay which reaches the minimum at $5 \cdot 10^{-6}$. We use $\beta_1 = 0.5$, and $\beta_2 = 0.9$ for Adam. We train with a batch size of 1 on a single GPU, and save the model which showed the best performance on the validation set, where the performance metric is $\left(\text{Accuracy} + \text{AUC} \right) / 2$. Other details can be found in~\cref{sec:mil}. These results can be seen in the left column of~\cref{tab:camelyon}.

\paragraph{Finetuning.}


For finetuning, we use our unbiased gradient approximation algorithm with a chunk size of $256$. We freeze the pretrained MIL head and only finetune the backbone resnet model. Therefore, we sequentially process each $256$ chunk for each input set until the entire set has been processed. We train for 10 total epochs, and use the AdamW optimizer with a learning rate of $5 \cdot 10^{-5}$, and a weight decay of $1 \cdot 10^{-2}$ which is not applied to bias or layernorm parameters. We use a one epoch linear warmup, and then a cosine annealing learning rate decay at every iteration which reaches a minimum at $5 \cdot 10^{-6}$. We train on 1 GPU, with a batch size of 1 and with a single instance on each GPU. 

\begin{table}[ht]
    \caption{Statistics for the Camelyon16 training and test sets we used. \textbf{Left}: Number of patches (set size) per instance. \textbf{Right}: The distribution of positive and negative samples.}
    \begin{minipage}[b]{0.54\linewidth}
    \centering
    \begin{tabular}{lcc}
        \toprule
        Metric & Train & Test \\
        \midrule
        Mean & 9,329 & 9,376 \\
        Min & 154 & 1558 \\
        Max & 32,382 & 37,345 \\
        \bottomrule
    \end{tabular}
    \end{minipage}
    \begin{minipage}[b]{0.4\linewidth}
    \centering
    \begin{tabular}{lcc}
        \toprule
        Metric & Train & Test \\
        \midrule
        Positive ($+$) & 110 & 49 \\
        Negative ($-$) & 160 & 80 \\
        \bottomrule
    \end{tabular}
    \end{minipage}
    \label{tab:camelyon-stats}
\end{table}

\clearpage
\section{Generalizing Attention Activations}
\label{sec:method-attn-acts}

As shown in~\cref{eq:f-and-normalization}, any attention activation function which can be expressed as a strictly positive elementwise function combined with sum decomposable normalization constants $\nu_p$ and $\bbf_\theta$ represents a valid attention activation function. \cref{tab:attn-acts} shows 5 such functions with their respective normalization constants, although there are an infinite number of possible functions which can be used.

The softmax operation we propose $h_1: \RR^d \mapsto (0, 1)^d$ which is outlined immediately before \cref{thm:mbc} is mathematically equivalent to the standard softmax $h_2: \RR^d \mapsto (0, 1)^d$ which is commonly implemented in deep learning libraries because $f$ and $g$ have the same domain, the same codomain, and $h_1(\mathbf{x}) = h_2(\mathbf{x})$ for all $\mathbf{x} \in \RR^d$. Therefore the functions are mathematically equivalent, even though the implementations are not. Our proposed function $h_1$ requires separately applying the exponential, and storing and updating the normalization constant while $h_2$ is generally implemented in such a way that everything is done in a single operation.

\begin{table}[ht]
\centering
\caption{Valid UMBC attention activation functions with slot normalization $\nu_1$ and normalization over the set elements $\bbf_\theta$. Slot-exp uses $\hat{A}_{i,j}-\max_{1\leq i \leq k}\hat{A}_{i,j}$  instead of  $\nu_p$.}
\label{tab:attn-acts}
 \resizebox{0.4\linewidth}{!}{
  \begin{tabular}{cccll}
    \toprule
    function $\sigma$ &  $\nu_p$ & $\bbf_\theta$ & name & reference \\
    \midrule
    $\mathtt{sigmoid}$ & $p=1$ & - & slot-sigmoid & \citep{mbc} \\
    \midrule
    $\exp()$ & $p=1$ & \cmark & slot-softmax & \citep{slot-attn} \\
    \midrule
    $\exp()$ & $p=2$ & \cmark & softmax & \citep{set-trans} \\
    \midrule
    $\exp()$ & -\footnotemark & \cmark & slot-exp & - \\
    \midrule
    $\mathtt{sigmoid}()$ & $p=2$ & \cmark & sigmoid & - \\ 
    \bottomrule
  \end{tabular}
 }
\end{table}

\section{Algorithm}
We outline our unbiased full set gradient approximation here.
\begin{figure}[ht]
\vspace{-0.15in}
\centering
\begin{minipage}{0.75\linewidth}
\begin{algorithm}[H]
   \caption{\small Unbiased Full Set Gradient Estimation}
   \label{greedy-training}
  \begin{algorithmic}[1]
    \STATE \textbf{Input}: Dataset $((X_i,y_i))_{i=1}^n$, batch size $m$, the number of subsets $m^\prime$, learning rate $(\eta_t)_{t=1}^T$, total steps $T$, and functions $f_\theta$, and $g_\lambda$.
     \STATE Randomly initialize $\theta$ and $\lambda$
     \FORALL{$t=1,\ldots, T$}
        \STATE Sample $((\bX_i, \by_i))_{i=1}^m \sim D[((X_i,y_i))_{i=1}^n]$
        \STATE $L_{t,1}(\theta,\lambda)\leftarrow 0, L_{t,2}(\theta, \lambda) \leftarrow 0$
        \FORALL{$i=1\ldots, m$}
        \STATE Partition a set $\bX_i$ to get $\zeta_t(\bX_i)$
        \STATE Sample $\bzeta_t(\bX_i)\sim D[\zeta_t(\bX_i)]$ with $|\bzeta_t(\bX_i)|=m^\prime$
        \STATE $f_{\theta}(\bX_i) = \sum_{\bS \in \bzeta_t(\bX_i)}f_\theta(\bS) + \sum_{S \in \zeta_t(\bX_i) \backslash \bzeta_t(\bX_i)}\texttt{StopGrad}(f_\theta(S)) $
        \STATE $L_{t,1}(\theta, \lambda) \leftarrow L_{t,1}(\theta, \lambda) + \frac{1}{m} \frac{|\zeta_t(\bX_i)|}{|\bzeta_t(\bX_i)|} \ell(g_\lambda(f_\theta(\bX_i)), \by_i)$
        \STATE $L_{t,2}(\theta,\lambda) \leftarrow L_{t,2}(\theta, \lambda) + \frac{1}{m} \frac{1}{\lvert \bzeta_t(\bX_i)\rvert} \ell(g_\lambda(f_\theta(\bX_i)), \by_i)$
    \ENDFOR
    \STATE $\theta \leftarrow \theta - \eta_t\frac{\partial L_{t,1}(\theta, \lambda)}{\partial \theta}$
    \STATE$\lambda \leftarrow \lambda - \eta_t\frac{\partial L_{t,2}(\theta_t, \lambda)}{\partial \lambda}$ 
    \ENDFOR
  \end{algorithmic}
\label{algo}
\end{algorithm}
\end{minipage}
\vspace{-0.1in}
\end{figure}

\end{document}